\newcommand{\EL}{\ensuremath{\mathcal{EL}}}
\newcommand{\ELext}{\ensuremath{\mathcal{ELU}}}
\newcommand{\ALC}{\ensuremath{\mathcal{ALC}}}
\newcommand{\K}{\ensuremath{\mathcal{K}}}
\newcommand{\Cmc}{\ensuremath{\mathcal{C}}}
\newcommand{\Smc}{\ensuremath{\mathcal{S}}}
\newcommand{\isdraft}{\boolean{true}} 
\newcommand{\markupdraft}[2]{
    \ifthenelse{\equal{#1}{display}}{#2}{}
    \ifthenelse{\equal{#1}{color}}{\color{#2}}{}
}
\newcommand{\newcolored}[3][]{{\markupdraft{color}{#2}#3}
    \ifthenelse{\equal{#1}{}}{}{\markupdraft{display}{{\color{yellow!70!black}[#1]}}}} 
\newcommand{\new}[1]{#1}
\ifthenelse{\isdraft}{}{\renewcommand{\markupdraft}[2]{}}
\newcommand{\Isa}[1]{\textcolor{black}{#1}}
\newcommand{\Marc}[1]{\textcolor{black}{#1}}
\newcommand{\Marcb}[1]{\textcolor{black}{#1}}
\newcommand{\Jamal}[1]{\textcolor{black}{#1}}
\journal{Artificial Intelligence}
\newtheorem{definition}{Definition}
\newtheorem{notation}{Notation}
\newtheorem{proposition}{Proposition}
\newtheorem{theorem}{Theorem}
\newtheorem{corollary}{Corollary}
\newtheorem{example}{Example}
\newtheorem{remark}{Remark}
\newenvironment{ex.}{
        \medskip
        \noindent {\bf Example}}
        {\hfill$\Box$ \medskip}
\def\cal#1{\ensuremath\mathcal #1}
\def\presuper#1#2%
\begin{document}

\begin{frontmatter}
\title{Belief Revision, Minimal Change and Relaxation: A General Framework based on \Marc{Satisfaction Systems}\Isa{, and Applications to Description Logics}}

\author[MAS]{Marc Aiguier,}
\author[LAMSADE]{Jamal Atif,}
\author[LTCI]{Isabelle Bloch,}
\author[MAS]{C\'eline Hudelot}
\address[MAS]{MICS, Centrale Supelec, Universit\'e Paris-Saclay, France, \\ {\it \{marc.aiguier,celine.hudelot\}@centralesupelec.fr}}
\address[LAMSADE]{PSL, Universit\'e Paris-Dauphine, LAMSADE, UMR 7243, France, \\ {\it jamal.atif@dauphine.fr}}
\address[LTCI]{LTCI, CNRS, T\'el\'ecom ParisTech, Universit\'e Paris-Saclay, Paris, France, \\ {\it isabelle.bloch@telecom-paristech.fr}}

\maketitle

\begin{abstract}
Belief revision of knowledge bases represented by
a set of sentences in a given logic has been extensively
studied but for specific logics, mainly propositional, and also recently
Horn and description logics. Here, we
propose to generalize this operation from a model-theoretic
point of view, \Marc{by defining revision in an abstract model theory known under the
name of satisfaction systems}. In this framework,
we generalize to \Marc{any satisfaction systems} the characterization
of the well known AGM postulates given by Katsuno and Mendelzon
for propositional logic in terms of minimal change among interpretations.
Moreover, we study how to define revision, satisfying the AGM postulates, from relaxation
notions that have been first introduced in
description logics to define dissimilarity measures
between concepts, and the consequence of which is
to relax the set of models of the old belief until it
becomes consistent with the new pieces of knowledge.
We show how the proposed general framework can be instantiated in different logics such as propositional, \Jamal{first-order}, description and Horn logics. \Isa{In particular for description logics, we introduce several concrete relaxation operators tailored for the description logic $\ALC{}$ and its fragments $\EL{}$ and $\ELext{}$,  discuss their properties and provide some illustrative examples.}
\end{abstract}

\begin{keyword}
Abstract belief revision \sep Relaxation \sep AGM theory \sep satisfaction systems \sep description logics
\end{keyword}

\end{frontmatter}



\section{Introduction}

Belief change is an important field of knowledge representation. It is defined by three change operations, {\em expansion, contraction and revision}, that make an agent's belief evolve with \Isa{newly} acquired knowledge.  Belief expansion consists in adding new knowledge without checking consistency, while both contraction and revision consist in consistently removing and adding new knowledge, respectively. When knowledge bases are logical theories, i.e. a set of sentences in a given logic, these changes are governed by a set of postulates \Isa{proposed} for the first time by Alchourr\`on, Gardenfors and Makinson \cite{AGM85}, and since known as the AGM theory. \Marc{Although defined in the abstract framework of logics given by Tarski~\cite{Tar56} (so called Tarskian logics), postulates of the AGM theory make strong assumptions on the considered logics. Indeed, in~\cite{AGM85} the considered logics} have to be closed under the standard propositional connectives in $\{\wedge,\vee,\neg,\Rightarrow\}$, to be compact (i.e. property entailment depends on a finite set of axioms), and to satisfy  the deduction theorem (i.e. entailment and implication are equivalent). While compactness is a standard property of logics, to be closed under the standard propositional connectives is more questionable. Indeed, many non-classical logics such as description logics, equational logic or Horn clause logic,  widely used for various modern applications in computing science, do not satisfy such a constraint. \Isa{Recently, in many works, belief change has been studied in such non-classical logics~\cite{DP15,flouris2005applying,RW14,RWFA13}}. In this direction, we can cite Ribeiro \& al.'s work in \cite{RWFA13} that studies contraction at the abstract level of Tarskian logics, and the recent work in~\cite{Zhuang2015} on the extension of AGM contraction to \Jamal{arbitrary} logics. The adaptation of AGM postulates for revision for non-classical logics has been studied but \Isa{only} for specific logics, mainly  description logics~\Isa{\cite{flouris2006inconsistencies,flouris2005applying,qi2006revision,QLB06,RW09,RW10,WWT10}} and Horn logics~\cite{DP11,ZPZ13}. The reason is that revision can be abstractly defined in terms of expansion and retraction following the Levi identity, but this requires the use of negation, which rules out \Isa{some} non-classical logics~\cite{RW14}. 

In~\cite{KM91} some AGM postulates are interpreted in terms of minimal change, in the sense that the models of the revision should be as close as possible, according to some metric, to the models of the initial knowledge set.
Recently, both for contraction and revision, \Isa{generalizations of the} AGM theory have been proposed in the framework of Tarskian Logics considering minimality criteria on removed formulas~\cite{RW14,RWFA13}. The aim was to study contraction and revision for a larger family of logics containing non-classical ones such as description logics and Horn logics. \Isa{However, to the best of} our knowledge, the generalization of AGM theory with minimality criteria on the set of models of knowledge bases has never been proposed. The reason is that semantics is not explicit in the abstract framework of logics defined by Tarski. 

We propose here to generalize AGM revision but in an abstract model theory, \Marc{satisfaction systems}~\cite{GB85,MDT09}, which formalizes the intuitive notion of logical system, including syntax, semantics and the satisfaction relation. Then, we propose to generalize to any \Marc{satisfaction system} the approach developed in~\cite{KM91} for propositional logic and in \cite{QY08} for description logics. In this abstract framework, we will also show how to define revision operators from the relaxation notion that has been introduced in description logics to define dissimilarity measures between concepts~\cite{DAB14a,DAB14b} and the consequence of which is to relax the set of models of the old belief until it becomes consistent with the new pieces of knowledge. \Jamal{This notion of relaxation, defined in an abstract way through a set of properties, turns out to generalize several revision operators introduced in different contexts e.g.~\cite{cravo2001,meyer2005knowledge,QLB06,Gorogiannis2008a}. This is another key contribution of our work.}. 

\Isa{We provide examples of relaxations in propositional logics, first order logics, \Jamal{and} Horn logic. The case of description logics (DLs) is more detailed, since} DLs  are now pervasive in many knowledge-based representation systems, such as ontological reasoning, semantic web, scene understanding, cognitive robotics, to mention a few. In all these domains, the expert knowledge is not fixed, but rather a flux evolving over time, hence requiring  the definition of rational revision operators. Revision is then a cornerstone in ontology engineering life-cycle where the expert knowledge is prone to change and inconsistency. 
Due to this growing interest in DLs,  several attempts to generalize the well-known AGM theory, making it compliant with the meta-logical flavor of these logics, have been introduced recently, \Isa{as mentioned above.}
The first efforts concentrated on the adaptation of contraction postulates, but more recently, the adaptation of revision postulates and the introduction of new minimality criteria were also addressed~\cite{RW14}, not necessarily related to the contraction  operator, throwing out the need for negation. One can find in~\cite{QLB06} an attempt to adapt the AGM revision postulates to DL in a model-theoretic way, following the seminal work of~\cite{KM91} that translated the AGM postulates in propositional logic semantics. 



\Jamal{To summarize,  our aim is to introduce a general framework for defining easily instantiable concrete revision operators for arbitrary logics. This goes beyond discussing the validity of the AGM theory for some non-classical logics such as description logics or Horn clauses logics which have been a focus of intensive research during the last years, as mentioned above. Indeed, by formulating the AGM theory in the framework of satisfaction systems, we show that one can push the envelop of the AGM theory to make it suitable to some non-classical logics (at the price of loosing or adapting some properties) and define revision operators that can be adapted in quite a straightforward manner to different logics, including non-classical ones. 
Hence, our paper participates in the recent effort for generalizing the AGM theory to non-classical logics. In particular, we introduce a meta-framework that can, by its general and abstract flavor, reduce this effort or at least make it easier. Besides,  we introduce a concrete way of defining revision operators in different logics including non-classical ones, and focus on the particular case of DL, which is of great current interest in semantic web related applications. 
}

The paper is organized as follows. Section~\ref{institutions} reviews some concepts, notations and terminology about \Marc{satisfaction systems} which are used in this work. In Section~\ref{AGM postulates for revision}, we adapt \Isa{the} AGM theory in the framework of \Marc{satisfaction systems}, and then give an abstract model-theoretic rewriting of \Isa{the} AGM postulates. We then show in Section~\ref{Orders and AGM postulates} that any revision operator satisfying such postulates accomplishes an update with minimal change to the set of models of knowledge bases. In Section~\ref{Relaxation and AGM postulates}, we introduce a general framework of relaxation-based revision operators and show that our revision operators lead to faithful assignments and then \Isa{also} satisfy \Isa{the} AGM postulates. In Section~\ref{applications}, we illustrate our abstract approach by providing revision operators in different logics, including classical logics (propositional and first order logics) and non-classical ones (Horn and description logics). \Isa{The case of DL is further developed in Section~\ref{sec:theoryrelaxation}, with several examples.} Finally, Section~\ref{related works} is dedicated to related works.

%
\section{Satisfaction systems}
\label{institutions}


\Marc{Satisfaction systems~\cite{MDT09} (``rooms" in the terminology of~\cite{GB85}) generalize Tarski's classical ``semantic definition of truth''~\cite{Tar44} and Barwise's ``Translation Axiom''~\cite{Bar74}. For the sake of generalization, sentences are simply required to form a set. All other contingencies such as inductive definition of sentences are not considered. Similarly, models are simply seen as elements of a class, \emph{i.e.}~no particular structure is imposed on them.}

\subsection{Definition and examples}
\label{sec:BasicDef}

\begin{definition}[Satisfaction system]
\label{satisfaction system} 

A {\bf satisfaction system} $\mathcal{R}=(Sen,Mod,\models)$ consists of

\begin{itemize}
\item a set $Sen$ of {\bf sentences},
\item a class $Mod$ of {\bf models}, and
\item a satisfaction relation $\models \subseteq (Mod \times Sen)$.
\end{itemize}
\end{definition}

\Marc{Let us note that the non-logical vocabulary, so-called {\em signature}, over which sentences and models are built, is not specified in Definition~\ref{satisfaction system}. Actually, it is left implicit. Hence, as we will see in the examples developed in the paper, a satisfaction system always depends on a signature. \\ There is an extension of satisfaction systems that takes into account explicitly the notion of signatures, the theory of institutions~\cite{GB92}. The theory of institutions is a categorical model theory which has emerged in computing science studies of software specifications and semantics.  Since their introduction, institutions have become a common tool in the area of formal specification mainly to abstractly study the preservation of properties through the structuring of specifications and programs represented by signature morphisms. In this paper, as all the results that we will study about revision will  always be done for logical theories over a same signature, signature morphisms and their interpretation for model classes and sentence sets are not useful. \Isa{This is why we consider the framework of satisfaction systems in this paper.} The advantage is to allow us to abstract from all underlying categorical concepts such as category, functor and other advanced notions such as adjunction, pushout, colimit, etc.} 

\Marc{\begin{example}
\label{examples of institutions}
The following examples of satisfaction systems are of particular importance in computer science and in the remainder of this paper.  
\begin{description}
\item[Propositional Logic (PL)] 
Given a set of propositional variables $\Sigma$, we can define the satisfaction system $\mathcal{R} = (Sen,Mod,\models)$ where $Sen$ is the least set
of sentences finitely built over propositional variables in $\Sigma$ and Boolean connectives in $\{\neg,\vee\}$, $Mod$ contains all the  mappings $\nu:\Sigma\to\{0,1\}$ ($0$ and $1$ are the usual truth values), and the satisfaction relation $\models$ is the usual propositional satisfaction.
\item[Horn Logic (HCL)] A \emph{Horn clause} is a sentence of the form $\Gamma \Rightarrow
\alpha$ where $\Gamma$ is a finite conjunction of propositional variables and $\alpha$
is a propositional variable. The satisfaction system of Horn clause logic is then defined as for {\bf PL} except that sentences are restricted to be conjunctions of Horn clauses.
\item[Many-sorted First Order Logic (FOL)] Signatures are triplets $(S,F,P)$ where $S$ is a set
of sorts, and $F$ and $P$ are sets of function and predicate names respectively, both with arities in $S^\ast\times S$ and $S^+$ respectively ($S^+$ is the set of all non-empty sequences of elements in $S$ and $S^\ast=S^+\cup\{\epsilon\}$ where $\epsilon$ denotes the empty sequence). In the following, to indicate that a function name $f \in F$ (respectively a predicate name $p \in P$) has for arity $(s_1 \ldots s_n,s)$ (respectively $s_1 \ldots s_n$), we will note $f : s_1 \times \ldots \times s_n \to s$ (resp. $p : s_1 \times \ldots \times s_n$). \\ 
Given a signature $\Sigma=(S,F,P)$, we can define the satisfaction system $\mathcal{R} =(Sen,Mod,\models)$ where:
\begin{itemize}
\item $Sen$ is the least set of sentences built over atoms of the form $p(t_1,\ldots,t_n)$ where $p:s_1 \times \ldots \times s_n \in P$ and $t_i \in T_F(X)_{s_i}$ for every $i$, $1 \leq i \leq n$ ($T_F(X)_s$ is the term algebra of sort $s$ built over $F$ with sorted variables in a given set $X$) by finitely applying Boolean connectives in $\{\neg,\vee\}$ and the quantifier $\forall$. 
\item $Mod$ is the class of models $\mathcal{M}$ defined by a family $(M_s)_{s \in S}$ of sets (one for every $s \in S$), each one equipped with a function $f^{\cal M} : M_{s_1} \times \ldots \times M_{s_n} \rightarrow M_s$ for every $f:s_1 \times \ldots \times s_n \rightarrow s  \in F$ and with an n-ary relation $p^{\cal M} \subseteq M_{s_1} \times \ldots \times M_{s_n}$ for every $p:s_1 \times \ldots \times s_n \in P$. 
\item Finally, the satisfaction relation $\models$ is the usual first-order satisfaction. 
\end{itemize}
As for {\bf PL}, we can consider the logic {\bf FHCL} of first-order Horn Logic whose models are those of {\bf FOL} and sentences are restricted to be conjunctions of universally quantified Horn sentences (i.e. sentences of the form $\Gamma \Rightarrow \alpha$ where $\Gamma$ is a finite conjunction of atoms and $\alpha$ is an atom).
\item[Description logic (DL)] Signatures are triplets $(N_C,N_R,I)$ where $N_C$, $N_R$ and $I$ are nonempty pairwise disjoint sets where elements in $N_C$, $N_R$ and $I$ are called concept names, role names and individuals, respectively.  \\ Given a signature $(N_C,N_R,I)$, we can define the satisfaction system $\mathcal{R} =(Sen,Mod,\models)$ where:
\begin{itemize}
\item $Sen$ contains~\footnote{The description logic defined \Isa{here} is better known under the acronym $\mathcal{ALC}$.} all the sentences of the form $C \sqsubseteq D$, $x:C$ and $(x,y):r$ where $x,y \in I$, $r \in N_R$ and $C$ is a concept inductively defined from $N_C \cup \{ \top \}$ and binary and unary operators in $\{\_ \sqcap \_,\_ \sqcup \_\}$ and in $\{\_^c,\forall r.\_,\exists r.\_\}$, respectively. 
\item $Mod$ is the class of models $\mathcal{I}$ defined by a set $\Delta^\mathcal{I}$ equipped for every concept name $A \in N_C$ with a set $A^\mathcal{I} \subseteq \Delta^\mathcal{I}$, for every relation name $r \in N_R$ with a binary relation $r^\mathcal{I} \subseteq \Delta^\mathcal{I} \times \Delta^\mathcal{I}$, and for every individual $x \in I$ with a value $x^\mathcal{I} \in \Delta^\mathcal{I}$. 
\item The satisfaction relation $\models$ is then defined as:
\begin{itemize}
\item $\mathcal{I} \models C \sqsubseteq D$ iff $C^\mathcal{I} \subseteq D^\mathcal{I}$,
\item ${\cal I} \models x:C$ iff $x^{\cal I} \in C^{\cal I}$,
\item ${\cal I} \models (x,y):r$ iff $(x^{\cal I},y^{\cal I}) \in r^{\cal I}$,
\end{itemize}
where $C^\mathcal{I}$ is the evaluation of $C$ in $\mathcal{I}$ inductively defined on the structure of $C$ as follows:
\begin{itemize}
\item if $C = A$ with $A \in N_C$, then $C^{\cal I} = A^{\cal I}$;
\item if $C = \top$ then $C^\mathcal{I} = \Delta^\mathcal{I}$;
\item if $C = C' \sqcup D'$ (resp. $C = C' \sqcap D'$), then $C^{\cal I} = C'^{\cal I} \cup D'^{\cal I}$ (resp. $C^{\cal I} = C'^{\cal I} \cap D'^{\cal I}$);
\item if $C = C'^c$, then $C^{\cal I} = \Delta^{\cal I} \setminus C'^{\cal I}$;
\item if $C = \forall r.C'$, then $C^{\cal I} = \{x \in \Delta^{\cal I} \mid \forall  y \in \Delta^{\cal I}, (x,y) \in r^{\cal I} \mbox{ implies } y \in C'^{\cal I}\}$;
\item if $C = \exists r.C'$, then $C^{\cal I} = \{x \in \Delta^{\cal I} \mid \exists  y \in \Delta^{\cal I}, (x,y) \in r^{\cal I} \mbox{ and } y \in C'^{\cal I}\}$.
\end{itemize}
\end{itemize}
\end{description}
\end{example}}

\subsection{Knowledge base and theories}

Let us now consider a fixed but arbitrary satisfaction system \\ $\mathcal{R} = (Sen,Mod,\models)$. 

\medskip
\begin{notation}
Let $T \subseteq Sen$ be a set of sentences.
\begin{itemize}
\item $Mod(T)$ is the sub-class of $Mod$ whose elements are models of $T$, i.e. for every $\mathcal{M} \in Mod(T)$ and every $\varphi \in T$, $\mathcal{M} \models \varphi$. \Marcb{When $T$ is restricted to a formula $\varphi$ (i.e. $T = \{\varphi\}$), we will denote $Mod(\varphi)$, the class of model of $\{\varphi\}$, rather than $Mod(\{\varphi\})$.}
\item $Cn(T) =\{\varphi \in Sen \mid \forall {\cal M} \in Mod(T),~{\cal M} \models \varphi\}$ is the set of so-called {\em semantic consequences of $T$}.~\footnote{Usually, in the framework of satisfaction systems and institutions, the set of semantic consequences of a theory $T$ is noted $T^\bullet$. Here, we prefer the notation $Cn(T)$ because it will allow us to make a connection with the abstraction of logics as defined by Tarski~\cite{Tar56} and widely used in works dealing with belief change such as revision, expansion or contraction.}
\item Let $\mathbb{M} \subseteq Mod$. Let us note $\mathbb{M}^* = \{\varphi \in Sen \mid \forall \mathcal{M} \in \mathbb{M}, \mathcal{M} \models \varphi\}$. Therefore, we have for every $T \subseteq Sen$, $Cn(T) = Mod(T)^*$. When $\mathbb{M}$ is restricted to one model $\mathcal{M}$, $\mathbb{M}^*$ will be equivalently noted $\mathcal{M}^*$.
\item Let us note $Triv = \{\mathcal{M} \in Mod \mid \mathcal{M}^* = Sen\}$.  
\end{itemize}
\end{notation}
\Marcb{Let us note that for every $T \subseteq Sen$, $Triv \subseteq Mod(T)$.}

From the above notations, we obviously have: 
\begin{equation}
Cn(T) = Cn(T') \Leftrightarrow Mod(T) = Mod(T').
\label{eq:equivCnMod}
\end{equation}

\Isa{The two functions} $Mod(\_)$ and $\_^*$ form what is known as a Galois connection in that they satisfy the following properties: for all $T,T' \subseteq Sen$ and $\mathbb{M},\mathbb{M}' \subseteq Mod$, we have (see~\cite{Dia08})
\begin{enumerate}
\item $T \subseteq T' \Longrightarrow Mod(T') \subseteq Mod(T)$
\item $\mathbb{M} \subseteq \mathbb{M}' \Longrightarrow {\mathbb{M}'}^* \subseteq \mathbb{M}^*$
\item $T \subseteq Mod(T)^*$
\item $\mathbb{M} \subseteq Mod(\mathbb{M}^*)$
\end{enumerate}

\medskip
\begin{definition}[Knowledge base and theory]
\label{theory}
A {\bf knowledge base} $T$ is a set of sentences (i.e. $T \subseteq Sen$).
A knowledge base $T$ is said to be a {\bf theory} if and only if $T = Cn(T)$. \\ A theory $T$ is {\bf finitely representable} if there exists a finite set $T' \subseteq Sen$  such that $T = Cn(T')$.
\end{definition}

\medskip
\begin{proposition}
For every satisfaction system $\mathcal{R}$, we have:

\begin{description}
\item[Inclusion] $\forall T \subseteq Sen, T \subseteq Cn(T)$;
\item[Iteration] $\forall T \subseteq Sen, Cn(T) = Cn(Cn(T))$;
\item[Monotonicity] $\forall T,T' \subseteq Sen, T \subseteq T' \Longrightarrow Cn(T) \subseteq Cn(T')$. 
\end{description}
\end{proposition}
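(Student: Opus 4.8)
The plan is to derive all three clauses directly from the four Galois-connection properties stated just above, together with the identity $Cn(T) = Mod(T)^*$ recorded in the notation. No induction on the structure of sentences is needed, which is precisely the advantage of arguing at the level of satisfaction systems; each clause is a short chase, the only care being that both $Mod(\_)$ and $\_^*$ are antitone, so inclusions reverse at every application.

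For \textbf{Inclusion}, I would simply observe that it is property~3 restated: since $Cn(T) = Mod(T)^*$, the assertion $T \subseteq Mod(T)^*$ is literally $T \subseteq Cn(T)$.

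For \textbf{Monotonicity}, assume $T \subseteq T'$. Property~1 gives $Mod(T') \subseteq Mod(T)$, and then property~2 applied to this inclusion of model classes yields $Mod(T)^* \subseteq Mod(T')^*$, that is $Cn(T) \subseteq Cn(T')$.

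For \textbf{Iteration}, one inclusion, $Cn(T) \subseteq Cn(Cn(T))$, is just Inclusion applied to the set $Cn(T)$. For the reverse inclusion I would first prove $Mod(Cn(T)) = Mod(T)$: the inclusion $Mod(Cn(T)) \subseteq Mod(T)$ follows from property~1 applied to $T \subseteq Cn(T)$, and $Mod(T) \subseteq Mod(Cn(T))$ is property~4 read through $Cn(T) = Mod(T)^*$, i.e. $Mod(T) \subseteq Mod(Mod(T)^*) = Mod(Cn(T))$. Applying $\_^*$ to the equality $Mod(Cn(T)) = Mod(T)$ then gives $Cn(Cn(T)) = Mod(Cn(T))^* = Mod(T)^* = Cn(T)$. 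I expect no genuine obstacle here; if one wants a slicker formulation, one can instead note that $T \mapsto Mod(T)^*$ is the closure operator canonically attached to the Galois connection and that Inclusion, Iteration and Monotonicity are precisely its defining axioms, but the elementary derivation above is entirely self-contained.
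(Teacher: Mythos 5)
Your proposal is correct and follows the same route as the paper: Inclusion is read off as Property~3 of the Galois connection, and Monotonicity is obtained by chaining Properties~1 and~2 exactly as the paper does. For Iteration the paper merely declares it ``obvious by definition,'' whereas you supply the missing chase via $Mod(Cn(T)) = Mod(T)$ (Properties~1 and~4) followed by an application of $\_^*$ --- a valid and welcome elaboration of the same underlying argument rather than a different approach.
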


\begin{proof}
Inclusion and iteration are obvious properties of the mapping $Cn$ by definition (inclusion is Property 3 of the Galois Connection above). \\ Suppose $T \subseteq T'$. By the first property of the Galois connection above,
we have that $Mod(T') \subseteq Mod(T)$
and $Mod(T)^* \subseteq Mod(T')^*$ from Property 2, hence $Cn(T) \subseteq Cn(T')$.
\end{proof}

Hence, satisfaction systems are {\em Tarskian} according to the definition of logics given by Tarski under which a logic is a pair $(\mathcal{L},Cn)$ where $\mathcal{L}$ is a set of expressions (formulas) and $Cn : \mathcal{P}(\mathcal{L}) \to \mathcal{P}(\mathcal{L})$ is a mapping that satisfies the inclusion, iteration and monotonicity properties~\cite{Tar56}. \Marc{Indeed, from any satisfaction system $\mathcal{R}$ we can define the following Tarskian logic $(\mathcal{L},Cn)$ where $\mathcal{L} = Sen$ and $Cn$ is the mapping that associates to every $T \subseteq Sen$, the set $Cn(T)$ of semantic consequences of $T$.}

\medskip
Classically, \Isa{the} consistency of a theory $T$ is defined as $Mod(T) \neq \emptyset$. The problem of such a definition of consistency is that its significance depends on the actual logic. Hence, this consistency is significant for {\bf FOL}, while in {\bf FHCL} it is a trivial property since each set of sentences is consistent because $Mod(T)$ always contains the trivial model. Here, \Isa{for the consistency notion be} more appropriate with our purpose to define revision for the largest family of logics, we propose a more general definition of consistency, the meaning of which is that there is at least a sentence which is not a semantic consequence. 

\medskip
\begin{definition}[Consistency]
$T \subseteq Sen$ is {\bf consistent} if $Cn(T) \neq Sen$.
\end{definition}

\medskip
\begin{proposition}
\label{proposition:consistency}
For every $T \subseteq Sen$, $T$ is consistent if and only if $Mod(T) \setminus Triv \neq \emptyset$.
\end{proposition}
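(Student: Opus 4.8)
The plan is to unwind the two definitions involved and show that each side fails exactly when the same witness is missing. Recall that $T$ is consistent means $Cn(T) \neq Sen$, i.e.\ there is a sentence that is \emph{not} a semantic consequence of $T$; and $Mod(T) \setminus Triv \neq \emptyset$ means there is a model of $T$ that fails some sentence. The bridge between the two is the identity $Cn(T) = Mod(T)^*$ recorded in the notation, together with the observation $Triv \subseteq Mod(T)$. I would prove the equivalence by establishing both implications directly, though phrasing it by contraposition works just as smoothly.

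For the direction ``$Mod(T) \setminus Triv \neq \emptyset \Rightarrow T$ consistent'': pick $\mathcal{M} \in Mod(T)$ with $\mathcal{M} \notin Triv$. By definition of $Triv$ this gives $\mathcal{M}^* \neq Sen$, so there is some $\varphi \in Sen$ with $\mathcal{M} \not\models \varphi$. Since $\mathcal{M} \in Mod(T)$, the defining condition of $Cn(T)$ (namely $\forall \mathcal{N} \in Mod(T),\ \mathcal{N} \models \varphi$) fails for this $\varphi$, hence $\varphi \notin Cn(T)$ and $Cn(T) \neq Sen$, i.e.\ $T$ is consistent.

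For the converse ``$T$ consistent $\Rightarrow Mod(T) \setminus Triv \neq \emptyset$'': from $Cn(T) \neq Sen$ choose $\varphi \in Sen \setminus Cn(T)$. By the definition of $Cn(T)$, the statement ``every model of $T$ satisfies $\varphi$'' is false, so there exists $\mathcal{M} \in Mod(T)$ with $\mathcal{M} \not\models \varphi$. Then $\varphi \notin \mathcal{M}^*$, so $\mathcal{M}^* \neq Sen$ and $\mathcal{M} \notin Triv$; thus $\mathcal{M} \in Mod(T) \setminus Triv$, which is therefore nonempty.

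There is essentially no hard step here; the only point needing a moment's care is the vacuous case $Mod(T) = \emptyset$, but it is absorbed uniformly: then $Cn(T) = Sen$ (every sentence is trivially a consequence) and $Mod(T) \setminus Triv = \emptyset$, so both sides are false and the equivalence holds. I would simply note this in passing rather than treat it separately.
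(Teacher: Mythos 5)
Your proof is correct and follows essentially the same route as the paper, which simply phrases the equivalence in contrapositive form ($Cn(T)=Sen$ iff $Mod(T)\setminus Triv=\emptyset$) while you unfold the same unwinding of definitions into explicit witness-picking. The remark on the vacuous case $Mod(T)=\emptyset$ is a harmless addition the paper leaves implicit.
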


\begin{proof} Let us prove that $Cn(T) = Sen$ iff $Mod(T) \setminus Triv = \emptyset$.
Let us first assume that $Mod(T) \setminus Triv = \emptyset$. Therefore, by definition of $Cn(T)$, this means that the only models that satisfy $T$ are $\mathcal{M}$ such that $\mathcal{M}^* = Sen$ (if they exist). Hence, we have $Cn(T) = Sen$. \\
Conversely, let us assume that $Cn(T) = Sen$. This means that every model $\mathcal{M}$ such that $\mathcal{M}^* \neq Sen$ does not belong to $Mod(T)$, and $Mod(T)\setminus Triv = \emptyset$. 
\end{proof}

\begin{corollary}
\label{cor:triv}
For every $T \subseteq Sen$, $T$ is inconsistent is equivalent to $Mod(T) = Triv$.
\end{corollary}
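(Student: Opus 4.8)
The plan is to derive this corollary directly from Proposition~\ref{proposition:consistency} together with the observation, already recorded just after the Notation block, that $Triv \subseteq Mod(T)$ holds for every $T \subseteq Sen$. First I would unfold the definition of inconsistency: $T$ is inconsistent precisely when $T$ fails to be consistent, so by Proposition~\ref{proposition:consistency} this is equivalent to $Mod(T) \setminus Triv = \emptyset$, which in turn says exactly $Mod(T) \subseteq Triv$.

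The key step is then to upgrade this one-sided inclusion to an equality. Since $Triv \subseteq Mod(T)$ is always true, the inclusion $Mod(T) \subseteq Triv$ immediately yields $Mod(T) = Triv$. This gives the implication from inconsistency to $Mod(T) = Triv$.

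For the converse, if $Mod(T) = Triv$ then trivially $Mod(T) \setminus Triv = \emptyset$, so Proposition~\ref{proposition:consistency} tells us $T$ is not consistent, i.e.\ $T$ is inconsistent. Combining the two directions completes the argument.

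I do not anticipate any real obstacle here: the corollary is essentially a restatement of Proposition~\ref{proposition:consistency} once one notices that the universally valid inclusion $Triv \subseteq Mod(T)$ turns ``$Mod(T) \setminus Triv = \emptyset$'' into ``$Mod(T) = Triv$''. The only point deserving a moment's care is keeping track of the direction of the set containments when negating the consistency criterion, so that the two inclusions are correctly combined into an equality.
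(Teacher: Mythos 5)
Your proof is correct and follows exactly the route the paper intends: the corollary is stated without proof as an immediate consequence of Proposition~\ref{proposition:consistency} together with the earlier observation that $Triv \subseteq Mod(T)$ for every $T$, which is precisely how you combine the inclusion $Mod(T) \subseteq Triv$ with the reverse one to obtain equality.
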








\section{AGM postulates for revision in \Marc{satisfaction systems}}
\label{AGM postulates for revision}

\subsection{AGM postulates}
\label{AGM postulates}

AGM postulates for knowledge base revision in satisfaction systems are easily adaptable. \Jamal{We build upon the model-theoretic characterization introduced by Katsuno and Mendelzon (KM)~\cite{KM91} for propositional logic. Note, however, that in propositional logic, a belief base can be represented by a formula, and then KM postulates exploit this property. This is no more the case in our context, but we argue that the postulates are still appropriate.} Given two knowledge bases $T, T' \subseteq Sen$, $T \circ T'$ denotes the {\bf revision of $T$ by $T'$}, that is,  $T \circ T'$ is obtained by adding consistently new knowledge $T'$ to the old knowledge base $T$. Note that $T \circ T'$ cannot be defined as $T \cup T'$ because nothing ensures that $T \cup T'$ is consistent. The revision operator has then to change minimally $T$ so that $T \circ T'$ is consistent. This is what the AGM postulates ensure.

\begin{description}
\item[(G1)] If $T'$ is consistent, then so is $T \circ T'$.
\item[(G2)] $Mod(T \circ T') \subseteq Mod(T')$.
\item[(G3)] if $T \cup T'$ is consistent, then $T \circ T' = T \cup T'$.
\item[(G4)] if $Cn(T'_1) = Cn(T'_2)$, then $Mod(T \circ T'_1) = Mod(T \circ T'_2)$.
\item[(G5)] $Mod((T \circ T') \cup T'') \subseteq Mod(T \circ (T' \cup T''))$. 
\item[(G6)] if $(T \circ T') \cup T''$ is consistent, then $Mod(T \circ (T' \cup T'')) \subseteq  Mod((T \circ T') \cup T'')$. 
\end{description}

In the literature such as in~\cite{KM91,QY08}, the following stronger version of Postulate (G4) can be found:

\begin{center}
{\bf (G'4)} if $Cn(T_1) = Cn(T'_1)$ and $Cn(T_2) = Cn(T'_2)$, then $Mod(T_1 \circ T_2) = Mod(T'_1 \circ T'_2)$
\end{center}

\medskip
\begin{remark}
This stronger version of Postulate (G4) states a complete independence of the syntactical forms of both the original knowledge base and the newly \Isa{acquired knowledge}. The problem with Postulate (G'4) is that it is almost never satisfied when we want to preserve the structure of knowledge bases and then apply revision operators over the formulas that compose knowledge bases. Indeed, let us consider in the logic {\bf PL} the following knowledge bases $T_1 = \{p,q\}$ and $T_2 = \{q \Rightarrow p,q\}$ over the signature $\{p,q\}$. Obviously, we have that $Mod(T_1) = Mod(T_2) = \{\nu : p \mapsto 1,q \mapsto 1\}$. Let us consider the knowledge base $T' = \{\neg q\}$. We have now that $T_1 \cup T'$ (and then $T_2 \cup T'$) is inconsistent. A way to retrieve the consistency is to replace in $T_1$ and $T_2$ the atomic formula $q$ by $\neg q$.
Hence, $T_1 \circ T' = \{p,\neg q\}$ and $T_2 \circ T' = \{q \Rightarrow p,\neg q\}$. Then $Mod(T_1 \circ T') = \{\nu : p \mapsto 1,q \mapsto 0\}$,  $Mod(T_2 \circ T') = \{\nu : p \mapsto 1,q \mapsto 0;\nu' : p \mapsto 0,q \mapsto 0\}$, and $Mod(T_1 \circ T') \neq Mod(T_2 \circ T')$. \\ 
In~\cite{KM91}, the authors bypass the problem by representing any knowledge base $K$ (which is a theory in~\cite{KM91}) by a propositional formula $\psi$ such that $K = Cn(\psi)$. Hence, they apply their revision operator on $\psi$ and not on $K$, and so they lose the structure of the knowledge base $K$. 
\end{remark}

\Marc{Another remarkable point to note is that now Postulate (G4)  in this weaker form can be derived from the other postulates.}

\Marc{\begin{proposition}
\label{proposition:reduced set of postulates}
Postulates (G1)-(G3), (G5) and (G6) imply Postulate (G4).
\end{proposition}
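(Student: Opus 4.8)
The plan is to establish Postulate~(G4) by proving $Mod(T\circ T'_1)=Mod(T\circ T'_2)$ through two symmetric inclusions, after first rewriting the hypothesis semantically. By~(\ref{eq:equivCnMod}), $Cn(T'_1)=Cn(T'_2)$ is equivalent to $Mod(T'_1)=Mod(T'_2)$; and since consistency is defined purely through $Cn$, the theories $T'_1$ and $T'_2$ are consistent together or inconsistent together. I would then distinguish these two cases.

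In the inconsistent case, Corollary~\ref{cor:triv} gives $Mod(T'_1)=Mod(T'_2)=Triv$, so by~(G2) we get $Mod(T\circ T'_i)\subseteq Triv$, while $Triv\subseteq Mod(T\circ T'_i)$ holds for every knowledge base; hence both sides equal $Triv$ and we are done. For the main case, assume $T'_1,T'_2$ consistent. The heart of the argument is a ``sandwich'' between~(G5) and~(G6): instantiating~(G5) with $T''=T'_2$ yields $Mod((T\circ T'_1)\cup T'_2)\subseteq Mod(T\circ(T'_1\cup T'_2))$, and instantiating~(G6) with the same $T''$ yields the reverse inclusion \emph{provided} $(T\circ T'_1)\cup T'_2$ is consistent. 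To discharge that proviso, note that~(G2) together with $Mod(T'_1)=Mod(T'_2)$ gives $Mod(T\circ T'_1)\subseteq Mod(T'_2)$, hence $Mod((T\circ T'_1)\cup T'_2)=Mod(T\circ T'_1)$; since $Cn$ factors through $Mod$ (recall $Cn(S)=Mod(S)^*$) and~(G1) makes $T\circ T'_1$ consistent (because $T'_1$ is), the set $(T\circ T'_1)\cup T'_2$ is consistent, so in fact $Mod((T\circ T'_1)\cup T'_2)=Mod(T\circ(T'_1\cup T'_2))$. Running the identical reasoning with $T'_1$ and $T'_2$ swapped (take $T''=T'_1$) gives $Mod((T\circ T'_2)\cup T'_1)=Mod(T\circ(T'_2\cup T'_1))$. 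Since $T'_1\cup T'_2$ and $T'_2\cup T'_1$ are literally the same set of sentences, the two right-hand sides coincide; combining this with the ``absorb the extra conjunct'' equalities $Mod(T\circ T'_1)=Mod((T\circ T'_1)\cup T'_2)$ and $Mod(T\circ T'_2)=Mod((T\circ T'_2)\cup T'_1)$ (each justified by~(G2) and $Mod(T'_1)=Mod(T'_2)$) yields $Mod(T\circ T'_1)=Mod(T\circ T'_2)$.

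The only genuine subtlety I anticipate is the consistency bookkeeping: one must make sure the precondition of~(G6) is really met, which is precisely where~(G1),~(G2) and the identity $Cn(S)=Mod(S)^*$ (equivalently Proposition~\ref{proposition:consistency}) come in, and one must not overlook the inconsistent case, where~(G1) is vacuous and the conclusion must instead be read off from~(G2) and Corollary~\ref{cor:triv}. Everything else is the routine conjunct-absorption manipulation familiar from the propositional Katsuno--Mendelzon argument; in particular, Postulate~(G3) turns out not to be needed for this implication.
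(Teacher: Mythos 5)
Your proof is correct, and in the main case it takes a genuinely different and in fact leaner route than the paper's. The paper splits the argument into three cases: (i) both $T'_i$ inconsistent (handled as you do, via (G2), Corollary~\ref{cor:triv} and $Triv\subseteq Mod(S)$); (ii) $T\cup T'_1$ and $T\cup T'_2$ consistent, where it invokes (G3) to reduce everything to $Mod(T\cup T'_1)=Mod(T\cup T'_2)$; and (iii) $T'_i$ consistent but $T\cup T'_i$ inconsistent, where it argues model by model, picking $\mathcal{M}\in Mod(T\circ T'_1)$ and $\mathcal{M}'\in Mod(T\circ T'_2)\setminus Triv$ and using the auxiliary theory $T''=\{\mathcal{M},\mathcal{M}'\}^*$ as the extra conjunct fed into (G5)/(G6). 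You instead treat the whole consistent case uniformly by choosing $T''=T'_2$ (resp.\ $T'_1$) directly: the key observation that $Mod(T\circ T'_1)\subseteq Mod(T'_1)=Mod(T'_2)$ makes $(T\circ T'_1)\cup T'_2$ both model-equivalent to $T\circ T'_1$ and consistent (via (G1) and Proposition~\ref{proposition:consistency}), so the (G5)/(G6) sandwich collapses both $Mod(T\circ T'_1)$ and $Mod(T\circ T'_2)$ onto $Mod(T\circ(T'_1\cup T'_2))$. This buys two things the paper's proof does not make explicit: you avoid the case split on whether $T\cup T'_1$ is consistent and the somewhat delicate $\{\mathcal{M},\mathcal{M}'\}^*$ construction, and you establish the slightly stronger fact that (G3) is not needed for this implication, so the proposition could be stated as ``(G1), (G2), (G5) and (G6) imply (G4).'' All the consistency bookkeeping you flag is handled correctly.
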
}
\Marc{\begin{proof}
Let us suppose that $Cn(T'_1) = Cn(T'_2)$. Here, three cases have to be considered:
\begin{enumerate}
\item \Isa{One of $T'_1$ and $T'_2$ is inconsistent (say $T'_1$ without loss of generality).} Since $Cn(T'_1) = Cn(T'_2)$ by hypothesis, \Isa{$T'_2$}  is also inconsistent. By Postulate (G2), we then have that, \Isa{for $i=1,2$, $Mod(T \circ T'_i) \subseteq Mod(T_i)$, and $Mod(T_i)= Triv$ (Corollary~\ref{cor:triv}). Hence $Mod(T \circ T'_i) \subseteq Triv$, and $Mod(T \circ T'_1) = Mod(T \circ T'_2) = Triv$.}
\item \Isa{Both $T \cup T'_1$ and $T \cup T'_1$ are consistent}. Since $Cn(T'_1) = Cn(T'_2)$, we know that $Mod(T'_1) = Mod(T'_2)$ \Isa{(Equation~\ref{eq:equivCnMod})}, and then $Mod(T \cup T'_1) = Mod(T \cup T'_2)$. Therefore, by Postulate (G3), we have that $Mod(T \circ T'_1) = Mod(T \circ T'_2)$. 
\item $T'_1$ and $T'_2$ are consistent but \Isa{$T \cup T'_1$ or $T \cup T'_2$} is not \Isa{(say $T \cup T'_1$)}. From $Cn(T'_1) = Cn(T'_2)$, \Isa{we derive that $T \cup T'_2$} is also inconsistent. By Postulate (G1), both $T \circ T'_1$ and $T \circ T'_2$ are consistent. Let $\mathcal{M} \in Mod(T \circ T'_1)$. If $\mathcal{M} \in Triv$, then obviously $\mathcal{M} \in Mod(T \circ T'_2)$. Therefore, let us suppose that $\mathcal{M} \not\in Triv$. By Postulate (G2), $\mathcal{M} \in Mod(T'_1)$, and then $\mathcal{M} \in Mod(T'_2)$. Let $\mathcal{M}' \in Mod(T \circ T'_2) \setminus Triv$. Such a model exists as $T \circ T'_2$ is consistent. By Postulate (G2) and the hypothesis that $Cn(T'_1) = Cn(T'_2)$, $\{\mathcal{M},\mathcal{M}'\}^*$ contains both $T'_1$ and $T'_2$. Obviously, we have that $(T \circ T'_1) \cup  \{\mathcal{M},\mathcal{M}'\}^*$ and $(T \circ T'_2) \cup  \{\mathcal{M},\mathcal{M}'\}^*$ are consistent. Therefore, By Postulates (G5) and (G6), we have that $Mod((T \circ T'_1) \cup  \{\mathcal{M},\mathcal{M}'\}^*) = Mod((T \circ (T'_1 \cup  \{\mathcal{M},\mathcal{M}'\}^*) = Mod(T \circ  \{\mathcal{M},\mathcal{M}'\}^*)$ and $Mod((T \circ T'_2) \cup  \{\mathcal{M},\mathcal{M}'\}^*) = Mod((T \circ (T'_2 \cup  \{\mathcal{M},\mathcal{M}'\}^*) = Mod(T \circ  \{\mathcal{M},\mathcal{M}'\}^*)$. We can then derive that  $Mod((T \circ T'_1) \cup  \{\mathcal{M},\mathcal{M}'\}^*) = Mod((T \circ T'_2) \cup  \{\mathcal{M},\mathcal{M}'\}^*)$, and conclude that $\mathcal{M} \in Mod(T \circ T'_2)$. \Isa{Similarly, by reversing the roles of $T'_1$ and $T'_2$, if $\mathcal{M} \in Mod(T \circ T'_2)$, we can conclude that $\mathcal{M} \in Mod(T \circ T'_1)$.}
\end{enumerate}
\end{proof}}

Intuitively, any revision operator $\circ$ satisfying the six postulates above induces minimal change, that is the models of $T \circ T'$ are the models of $T$ that are the closest to models of \Isa{$T'$,} according to some distance for measuring 
how close are models.
This is what will be shown in the next section by establishing a correspondence between AGM postulates and binary relations over models with minimality conditions.

\subsection{Faithful assignment and AGM postulates}
\label{Orders and AGM postulates}

Let $\mathbb{M} \subseteq Mod$. Let $\preceq$ be a binary relation over $\mathbb{M}$. We define $\prec$ as $\mathcal{M} \prec \mathcal{M}'$ if and only if $\mathcal{M} \preceq \mathcal{M}'$ and $\mathcal{M}' {\not \preceq} \mathcal{M}$. We define $Min(\mathbb{M},\preceq) = \{\mathcal{M} \in \mathbb{M} \mid \forall \mathcal{M}' \in \mathbb{M}, \mathcal{M}' {\not \prec} \mathcal{M}\}$.  

\medskip
\begin{definition}[Faithful assignment]
\label{faithful assignment}
An {\bf assignment} is a mapping that assigns to each knowledge base $T$ a binary relation $\preceq_T$ over $Mod$. We say that this assignment is {\bf faithful (FA)} if the following two  conditions are satisfied:
\begin{enumerate}
\item if $\mathcal{M},\mathcal{M}' \in Mod(T)$, $\mathcal{M} {\not \prec}_T \mathcal{M}'$.
\item for every $\mathcal{M} \in Mod(T)$ and every $\mathcal{M}' \in Mod \setminus Mod(T)$, $\mathcal{M} \prec_T \mathcal{M}'$.
\end{enumerate} 
A binary relation $\preceq_T$ assigned to a knowledge base $T$ by a faithful assignment will be also said {\bf faithful}.
\end{definition}

\medskip
\begin{remark}
The definition of FA differs from the one originally given in~\cite{KM91} on two points:
\begin{enumerate}
\item In~\cite{KM91}, a third condition is stated:
$$\forall T,T' \subseteq Sen, Mod(T) = Mod(T') \Rightarrow \preceq_T = \preceq_{T'}.$$
As for (G'4), this condition expresses a syntactical independence.



\item It is no longer required for $\preceq_T$ to be a pre-order. As shown below, the only important feature to have to make a correspondence between a FA and the fact that $\circ$ satisfies Postulates (G1)-(G6) is that there is a minimal model for $\preceq_T$ in $Mod(T')$ as expressed by Theorem~\ref{correspondence agm and orders}. 
\end{enumerate}
\end{remark}

\medskip
\begin{theorem}
\label{correspondence agm and orders}
Let $\circ$ be a revision operator. $\circ$ satisfies AGM Postulates if and only if there exists a FA that maps each knowledge base $T \subseteq Sen$  to a binary relation $\preceq_T$ such that for every knowledge base $T' \subseteq Sen$:
\begin{itemize}
\item $Mod(T \circ T') \setminus Triv  = Min(Mod(T') \setminus Triv,\preceq_T)$;
\item if $T'$ is consistent, then $Min(Mod(T') \setminus Triv,\preceq_T) \neq \emptyset$;
\item for every $T'' \subseteq Sen$, if $(T \circ T') \cup T''$ is consistent, then $Min(Mod(T') \setminus Triv,\preceq_T) \cap Mod(T'') = Min(Mod(T' \cup T'') \setminus Triv,\preceq_T)$.
\end{itemize}
\end{theorem}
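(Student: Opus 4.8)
The proof is a standard "representation theorem" argument in two directions, adapting Katsuno--Mendelzon to the satisfaction-system setting where knowledge bases are sets of sentences rather than single formulas, and where the trivial models $Triv$ must be handled explicitly. I would first isolate a working lemma: for any $\mathbb{M} \subseteq Mod$ the set $\mathbb{M}^*$ of sentences it satisfies has $Mod(\mathbb{M}^*) \supseteq \mathbb{M}$, and more importantly, for any consistent $T'$, $Mod(T') \setminus Triv$ is "definable enough" that $\{\mathcal{M}\}^* \cup T'$ behaves well — this is exactly the trick already used in the proof of Proposition~\ref{proposition:reduced set of postulates}, where sets of the form $\{\mathcal{M},\mathcal{M}'\}^*$ are used as auxiliary knowledge bases. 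I would keep that device front and center, since it is the mechanism that lets one talk about individual models inside a framework that only has sentences.

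\textbf{Direction 1: from a faithful assignment to the postulates.} Assume a FA $T \mapsto \preceq_T$ is given together with the three bulleted properties, and \emph{define} $\circ$ by setting $Mod(T \circ T')$ to be $Min(Mod(T') \setminus Triv, \preceq_T)$ together with $Triv$ (adding $Triv$ is forced, since $Triv \subseteq Mod(S)$ for every $S$; equivalently define $T \circ T' = (Min(Mod(T')\setminus Triv,\preceq_T) \cup Triv)^*$), with the convention that when $T'$ is inconsistent we set $T \circ T' = Sen$, i.e. $Mod(T\circ T') = Triv$. Then I verify the postulates one by one. (G1): if $T'$ is consistent the second bullet gives a non-trivial minimal model, so $T \circ T'$ is consistent by Proposition~\ref{proposition:consistency}. (G2): minimal models of $Mod(T')\setminus Triv$ lie in $Mod(T')$, and $Triv \subseteq Mod(T')$, so $Mod(T\circ T') \subseteq Mod(T')$. (G3): if $T \cup T'$ is consistent then $Mod(T)\cap Mod(T')$ contains a non-trivial model; by faithfulness condition~1 no element of $Mod(T)$ is $\prec_T$-below another, and by condition~2 every element of $Mod(T)$ is $\prec_T$-below everything outside $Mod(T)$, so $Min(Mod(T')\setminus Triv,\preceq_T) = (Mod(T)\cap Mod(T'))\setminus Triv$, which together with $Triv$ gives exactly $Mod(T\cup T')$; hence $Cn(T\circ T') = Cn(T\cup T')$, and by~\eqref{eq:equivCnMod} $T\circ T' = T\cup T'$ as theories. (G4): immediate since $Cn(T'_1)=Cn(T'_2)$ gives $Mod(T'_1)=Mod(T'_2)$ by~\eqref{eq:equivCnMod}, and the construction depends on $T'$ only through $Mod(T')$. (G5) and (G6): these are precisely repackagings of the third bullet — $Mod((T\circ T')\cup T'') = (Min(Mod(T')\setminus Triv,\preceq_T)\cap Mod(T''))\cup Triv$, and when this is consistent the third bullet equates the bracketed set with $Min(Mod(T'\cup T'')\setminus Triv,\preceq_T)$, which is $Mod(T\circ(T'\cup T''))$ up to $Triv$; when $(T\circ T')\cup T''$ is inconsistent, (G5) holds vacuously on the non-trivial part and trivially on $Triv$.

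\textbf{Direction 2: from the postulates to a faithful assignment.} Given $\circ$ satisfying (G1)--(G6), define $\preceq_T$ by: $\mathcal{M} \preceq_T \mathcal{M}'$ iff $\mathcal{M} \in Mod(T)$, or $\mathcal{M} \in Mod(T \circ \{\mathcal{M},\mathcal{M}'\}^*)$ (the standard KM definition, transcribed using the $\{\cdot\}^*$ device; one checks $Mod(\{\mathcal{M},\mathcal{M}'\}^*)$ contains $\mathcal{M},\mathcal{M}'$ and all trivial models). Faithfulness: condition~1 — if $\mathcal{M},\mathcal{M}'\in Mod(T)$ then $T\cup\{\mathcal{M},\mathcal{M}'\}^*$ is consistent (it has $\mathcal{M}$ as a non-trivial model, assuming $\mathcal{M}\notin Triv$; the $Triv$ case is trivial), so by (G3) $Mod(T\circ\{\mathcal{M},\mathcal{M}'\}^*) = Mod(T)\cap Mod(\{\mathcal{M},\mathcal{M}'\}^*) \ni \mathcal{M},\mathcal{M}'$, giving $\mathcal{M}\preceq_T\mathcal{M}'$ and $\mathcal{M}'\preceq_T\mathcal{M}$, hence $\mathcal{M}\not\prec_T\mathcal{M}'$; condition~2 — for $\mathcal{M}\in Mod(T)$, $\mathcal{M}'\notin Mod(T)$ non-trivial, $\mathcal{M}\preceq_T\mathcal{M}'$ by the first disjunct, while $\mathcal{M}'\not\preceq_T\mathcal{M}$ because by (G3) $Mod(T\circ\{\mathcal{M},\mathcal{M}'\}^*)=Mod(T)\cap Mod(\{\mathcal{M},\mathcal{M}'\}^*)$ omits $\mathcal{M}'$, and $\mathcal{M}'\notin Mod(T)$. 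Then I prove the three bullets. For the first, $Mod(T\circ T')\setminus Triv = Min(Mod(T')\setminus Triv,\preceq_T)$: the $\subseteq$ inclusion uses (G2) to place $Mod(T\circ T')$ inside $Mod(T')$ and then, for $\mathcal{M}\in Mod(T\circ T')\setminus Triv$ and any candidate $\mathcal{M}'\prec_T\mathcal{M}$ in $Mod(T')\setminus Triv$, derives a contradiction by applying (G5)/(G6) to $T'' = \{\mathcal{M},\mathcal{M}'\}^*$ to show $\mathcal{M}'$ would also be in $Mod(T\circ\{\mathcal{M},\mathcal{M}'\}^*)$ in a way incompatible with $\mathcal{M}'\prec_T\mathcal{M}$; the $\supseteq$ inclusion is the more delicate one, taking a $\preceq_T$-minimal $\mathcal{M}\in Mod(T')\setminus Triv$ and showing, again via (G5)/(G6) with $T'' = \{\mathcal{M}\}^*$, that $\mathcal{M}\in Mod(T\circ T')$ because $(T\circ T')\cup\{\mathcal{M}\}^*$ can be shown consistent using minimality plus (G1). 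The second bullet follows from (G1) together with the first bullet. The third bullet is the direct reflection of (G5) and (G6) combined with the first bullet applied to both $T'$ and $T'\cup T''$.

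\textbf{Main obstacle.} The routine postulates (G1)--(G4) are mechanical; the real work, in both directions, is the interaction of (G5)/(G6) with the "minimality" equation for a fixed $T$ across varying second arguments — specifically, in Direction~2, proving the $\supseteq$ part of the first bullet, i.e. that every $\preceq_T$-minimal non-trivial model of $T'$ actually appears in $Mod(T\circ T')$. This requires carefully choosing the auxiliary theory $T''$ (a $\{\cdot\}^*$-style set) so that consistency of $(T\circ T')\cup T''$ holds — which is where the finer points of $Triv$, of Proposition~\ref{proposition:consistency}, and of the definition of $\preceq_T$ all have to cooperate — and I expect this is where one must be most careful that the argument does not silently assume $\preceq_T$ is a preorder (it need not be), relying instead only on the existence of minima as the Theorem statement and the preceding remark emphasize.
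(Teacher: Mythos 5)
Your plan follows essentially the same route as the paper: the same KM-style definition of $\preceq_T$ via the auxiliary knowledge bases $\{\mathcal{M},\mathcal{M}'\}^*$, the same verification of the postulates from the three bullets, and the same use of (G5)/(G6) for the minimality equation. However, two concrete steps as written would fail, both at exactly the points you flag as delicate.

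First, your definition of $\preceq_T$ omits the guard ``$\mathcal{M}' \notin Triv$'' on the second disjunct (the paper requires $\mathcal{M} \preceq_T \mathcal{M}'$ iff $\mathcal{M} \in Mod(T)$, \emph{or} $\mathcal{M} \in Mod(T \circ \{\mathcal{M},\mathcal{M}'\}^*)$ \emph{and} $\mathcal{M}' \notin Triv$). Without it, faithfulness condition~2 can fail whenever $Triv \neq \emptyset$ (e.g.\ in {\bf HCL}): take $\mathcal{M} \in Triv \subseteq Mod(T)$ and $\mathcal{M}' \notin Mod(T)$ with $Mod({\mathcal{M}'}^*) \setminus Triv = \{\mathcal{M}'\}$. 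Then $\{\mathcal{M},\mathcal{M}'\}^* = {\mathcal{M}'}^*$ is consistent, so by (G1)--(G2) $Mod(T \circ {\mathcal{M}'}^*) \setminus Triv$ is a nonempty subset of $\{\mathcal{M}'\}$, forcing $\mathcal{M}' \in Mod(T \circ \{\mathcal{M},\mathcal{M}'\}^*)$ and hence $\mathcal{M}' \preceq_T \mathcal{M}$ under your definition; then $\mathcal{M} \not\prec_T \mathcal{M}'$, and your appeal to (G3) cannot rescue this since $T \cup \{\mathcal{M},\mathcal{M}'\}^*$ need not be consistent when the only common model is trivial. Second, in the $\supseteq$ inclusion of the first bullet you propose the auxiliary theory $T'' = \{\mathcal{M}\}^*$; but consistency of $(T \circ T') \cup \{\mathcal{M}\}^*$ is precisely what you cannot assume when $\mathcal{M} \notin Mod(T \circ T')$ is the hypothesis for contradiction. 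The working choice (the paper's) is $T'' = \{\mathcal{M},\mathcal{M}'\}^*$ where $\mathcal{M}'$ is a non-trivial model of $T \circ T'$ supplied by (G1)--(G2): then $\mathcal{M}'$ itself witnesses consistency of $(T \circ T') \cup T''$, (G5)/(G6) yield $Mod(T \circ T') \cap Mod(\{\mathcal{M},\mathcal{M}'\}^*) = Mod(T \circ \{\mathcal{M},\mathcal{M}'\}^*)$, and one concludes $\mathcal{M}' \prec_T \mathcal{M}$, contradicting minimality. Both fixes are local and the rest of your outline goes through as in the paper.
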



Note that if $T'$ is inconsistent, then so is $T\circ T'$, and we can set arbitrarily $T \circ T' = T'$, which corresponds to a cautious revision. The case where $T$ is inconsistent is not considered in this paper, since in that case \Isa{other operators could be more relevant than revision, in particular debugging methods (see e.g.~\cite{schlobach2003non} for debugging of terminologies, or~\cite{ribeiro2009} for base revision for ontology debugging, both in description logics.}

\begin{proof}

\begin{enumerate}
\item Let us suppose that $\circ$ satisfies AGM Postulates. For every knowledge base $T$, let us define the binary relation $\preceq_T \subseteq Mod \times Mod$ by: for all $\mathcal{M},\mathcal{M}' \in Mod$,

\begin{center}
$\mathcal{M} \preceq_T \mathcal{M}'$ iff 
$
\left\{
\begin{array}{l}
\mbox{either $\mathcal{M} \in Mod(T)$} \\
\mbox{or $\mathcal{M} \in Mod(T \circ \{\mathcal{M},\mathcal{M}'\}^*)$ and $\mathcal{M}' {\not \in} Triv$}
\end{array}
\right.$
\end{center}

Let us first show that $\preceq_T$ satisfies the two conditions of FA. 
\begin{itemize}
\item The first condition easily follows from the definition of $\preceq_T$.
\item To prove the second one, let us assume that $\mathcal{M} \in Mod(T)$ and $\mathcal{M}' \not\in Mod(T)$. Since $\mathcal{M} \in Mod(T)$, we have $\mathcal{M} \preceq_T \mathcal{M'}$. Here two cases have to be considered:

\begin{enumerate}
\item $\mathcal{M} \in Triv$. In this case, we directly have by definition that $\mathcal{M}' \not\preceq_T \mathcal{M}$.
\item $\mathcal{M} \not\in Triv$. Then $T \cup \{\mathcal{M},\mathcal{M}'\}^*$ is consistent since $\mathcal{M} \in Mod(T) \setminus Triv$ and $\mathcal{M} \in Mod(\mathcal{M}^*) \subseteq Mod(\{\mathcal{M},\mathcal{M}'\}^*)$. Then by Postulate (G3), we have that $T \circ \{\mathcal{M},\mathcal{M}'\}^* = T \cup \{\mathcal{M},\mathcal{M}'\}^*$. Therefore, we have that 
$\mathcal{M}' {\not \in} Mod(T \circ \{\mathcal{M},\mathcal{M}'\}^*)$, and $\mathcal{M}' \not\preceq_T \mathcal{M}$. 
\end{enumerate}
Hence $\mathcal{M} \prec_T \mathcal{M'}$ in both cases.
\end{itemize}

Let us now prove the three supplementary conditions. 
\begin{itemize}
\item First, let us show that $Mod(T \circ T') = Min(Mod(T') \setminus Triv,\preceq_T)$. If $T'$ is inconsistent, then by Proposition~\ref{proposition:consistency} $Mod(T') \setminus Triv = \emptyset$, and by (G2) \Isa{$Mod(T \circ T') \subseteq Mod (T') \subseteq Triv$,}
hence $Mod(T \circ T') \setminus Triv = \emptyset = Min(Mod(T')\setminus Triv,\preceq_T)$.\\ 
Let us assume now that $T'$ is consistent.
\begin{itemize}
\item Let us first show that $Mod(T \circ T') \setminus Triv \subseteq Min(Mod(T') \setminus Triv,\preceq_T)$. Let $\mathcal{M} \in Mod(T \circ T') \setminus Triv$. Let us assume that $\mathcal{M} \not\in Min(Mod(T') \setminus Triv,\preceq_T)$. By (G2), $\mathcal{M} \in Mod(T') \setminus Triv$. By hypothesis, there exists $\mathcal{M}' \in Mod(T')\setminus Triv$ such that $\mathcal{M}' \prec_T \mathcal{M}$. Here, two cases have to be considered:

\begin{enumerate}
\item $\mathcal{M}' \in Mod(T)$. As $\mathcal{M}' \in Mod(T') \setminus Triv$, then $T \cup T'$ is consistent, and then by (G3), $T \circ T' = T \cup T'$. Thus, $\mathcal{M} \in Mod(T)$, and then $\mathcal{M} \preceq_T \mathcal{M}'$, which is a contradiction. 
\item $\mathcal{M}' \not\in Mod(T)$. By definition of $\preceq_T$, this means that $\mathcal{M}' \in Mod(T \circ \{\mathcal{M},\mathcal{M}'\}^*)$. As $\mathcal{M},\mathcal{M}' \in Mod(T')$, \Marc{by Postulate (G2),} $(T \circ T') \cup \{\mathcal{M},\mathcal{M}'\}^*$ is consistent, and then by Postulates (G5) and (G6), we have that $\Marc{Mod(}T \circ \{\mathcal{M},\mathcal{M}'\}^*\Marc{)} = \Marc{Mod(}(T \circ T') \cup \{\mathcal{M},\mathcal{M}'\}^*\Marc{)}$. By the hypothesis that $\mathcal{M}' \prec_T \mathcal{M}$, we can deduce that $\mathcal{M} \not\in Mod(T \circ \{\mathcal{M},\mathcal{M}'\}^*)$, whence \Marc{by Postulate (G6)} we have that $\mathcal{M} \not\in Mod(T \circ T') \setminus Triv$, which is a contradiction. 
\end{enumerate}
Finally we can conclude that $\mathcal{M} \in Min(Mod(T') \setminus Triv,\preceq_T)$, and then $Mod(T \circ T') \setminus Triv \subseteq Min(Mod(T') \setminus Triv,\preceq_T)$.

\item Let us now show that $Min(Mod(T') \setminus Triv,\preceq_T) \subseteq Mod(T \circ T') \setminus Triv$. Let $\mathcal{M} \in Min(Mod(T') \setminus Triv,\preceq_T)$. Let us assume that $\mathcal{M} \not\in Mod(T \circ T') \setminus Triv$. As $T'$ is consistent, by Postulates (G1) and (G2), there exists $\mathcal{M}' \in Mod(T \circ T')$ such that ${\mathcal{M}'}^* \neq Sen$, and $\mathcal{M}' \in Mod(T')$. Since $T' \subseteq \{\mathcal{M},\mathcal{M}'\}^*$, we also have that $Mod(T' \cup \{\mathcal{M},\mathcal{M}'\}^*) = Mod(\{\mathcal{M},\mathcal{M}'\}^*)$. By Postulates (G5) and (G6), we can write $Mod(T \circ T') \cap Mod(\{\mathcal{M},\mathcal{M}'\}^*) = Mod(T \circ \{\mathcal{M},\mathcal{M}'\}^*)$, since $(T \circ T') \cup \{\mathcal{M},\mathcal{M}'\}^*$ is consistent. Hence, $\mathcal{M} \not\in Mod(T \circ \{\mathcal{M},\mathcal{M}'\}^*)$, and then $\mathcal{M}' \prec_T \mathcal{M}$, which is a contradiction. We can conclude that $\mathcal{M} \in Mod(T\circ T')\setminus Triv$, and then $Min(Mod(T') \setminus Triv,\preceq_T) \subseteq Mod(T \circ T') \setminus Triv$. 
\end{itemize}
\item Secondly, let us show that $Min(Mod(T') \setminus Triv,\preceq_T) \neq \emptyset$ if $T'$ is consistent. By Postulate (G1), we have that $T \circ T'$ is consistent, and then $Mod(T \circ T') \setminus Triv \neq \emptyset$. We can directly conclude by the previous point that $Min(Mod(T') \setminus Triv,\preceq_T) \neq \emptyset$. 

\item Finally, let us show that for every $T',T'' \subseteq Sen$, $Min(Mod(T') \setminus Triv,\preceq_T) \cap Mod(T'') = Min(Mod(T' \cup T'') \setminus Triv,\preceq_T)$ if $(T \circ T') \cup T''$ is consistent. By (G5) and (G6), we have that $Mod(T \circ (T' \cup T'')) = Mod((T \circ T') \cup T'')$. Therefore, by the first point, we can directly conclude that  $Min(Mod(T') \setminus Triv,\preceq_T) \cap Mod(T'') = Min(Mod(T' \cup T'') \setminus Triv,\preceq_T)$.
\end{itemize}

\item Let us now suppose that \Isa{for a revision operation $\circ$} there exists a FA which maps any knowledge base $T \subseteq Sen$ to a binary relation $\preceq_T \subseteq Mod \times Mod$ satisfying the three conditions of Theorem~\ref{correspondence agm and orders}. Let us prove that $\circ$ verifies the AGM Postulates.
\begin{description}
\item[(G1)] This postulate directly results from the fact that $Min(Mod(T') \setminus Triv,\preceq_T) \neq \emptyset$ when $T'$ is consistent, \Isa{hence $Mod(T\circ T') \setminus Triv \neq \emptyset$}. 
\item[(G2)] Let $\mathcal{M} \in Mod(T \circ T')$. \Isa{If $\mathcal{M} \in Triv$, then obviously $\mathcal{M} \in Mod(T')$. Now, if $\mathcal{M} \notin Triv$, then} by definition, $\mathcal{M} \in Min(Mod(T') \setminus Triv,\preceq_T)$. This means that $\mathcal{M} \in Mod(T')$. 
\item[(G3)] Suppose that $T \cup T'$ is consistent \Isa{(hence $Mod(T \cup T') \setminus Triv \neq \emptyset$)}. 
\begin{itemize}
\item Let us first prove that $Mod(T \circ T') \subseteq Mod(T \cup T')$. Let $\mathcal{M} \in Mod(T \circ T')$. \Marc{Here two cases have to be considered:}
\begin{enumerate}
\item \Marc{$\mathcal{M} \in Triv$. In this case, we obviously have that $\mathcal{M} \in Mod(T \cup T')$.} 
\item \Marc{$\mathcal{M} \not\in Triv$.} By definition, $\mathcal{M} \in Min(Mod(T') \setminus Triv,\preceq_T)$. Hence, we have that $\mathcal{M} \in Mod(T')$. Let us suppose now that $\mathcal{M} {\not \in} Mod(T)$. As $T$ is consistent, $Mod(T)  \setminus Triv \neq \emptyset$ by Proposition~\ref{proposition:consistency}. Therefore, \Marc{there exists} $\mathcal{M}' \in Mod(T) \setminus Triv$ \Marc{such that} $\mathcal{M}' \prec_T \mathcal{M}$ (from $\mathcal{M} \notin Mod(T)$ and the \Marc{second} property of FA), which is a contradiction. \Isa{Hence $\mathcal{M} \in Mod(T)$ and $\mathcal{M} \in Mod (T \cup T')$.}
\end{enumerate}
\item Let us now prove that $Mod(T \cup T') \subseteq Mod(T \circ T')$. Let $\mathcal{M} \in Mod(T \cup T')$ such that $\mathcal{M} {\not \in} Mod(T \circ T')$. Therefore, $\mathcal{M} \in Mod(T)$. By hypothesis, there exists $\mathcal{M}' \in Mod(T') \setminus Triv$ such that $\mathcal{M}' \prec_T \mathcal{M}$ (since $\mathcal{M} \notin Min(Mod(T') \setminus Triv,\preceq_T)$), and then $\mathcal{M}' {\not \in} Mod(T)$ by the first condition of FA. However, by the second condition of FA, we have that $\mathcal{M} \prec_T \mathcal{M}'$, which is a contradiction. 
\end{itemize} 
Finally, we can conclude that $Mod(T \circ T') = Mod(T \cup T')$.
\item[(G5)] Let $\mathcal{M} \in Mod(T \circ T') \cap Mod(T'')$. Let us assume that $\mathcal{M} {\not \in} Min(Mod(T' \cup T'') \setminus Triv,\preceq_T)$. This means that \Marc{$\mathcal{M} \in Triv$ or} there exists $\mathcal{M}' \in Mod(T' \cup T'')$ such that ${\mathcal{M}'}^* \neq Sen$ and $\mathcal{M}' \prec_T \mathcal{M}$. \Marc{In the first case, we obviously have that $\mathcal{M} \in Mod(T \circ (T' \cup T''))$. In the second case,} we then have that $\mathcal{M}' \in Mod(T')$, and then $\mathcal{M}' {\not \prec_T} \mathcal{M}$ since $\mathcal{M} \in Min(Mod(T') \setminus Triv, \preceq_T)$, which is a contradiction. 
\item[(G6)] Let us suppose that $(T \circ T') \cup T''$ is consistent. Let $\mathcal{M} \in Mod(T \circ (T' \cup T''))$. By hypothesis, \Marc{either $\mathcal{M} \in Triv$ and in this case, obviously we have that $\mathcal{M} \in Mod((T \circ T') \cup T'')$, or} $\mathcal{M} \in Min(Mod(T' \cup  T'') \setminus Triv,\preceq_T)$ as $Mod(T \circ (T' \cup T'')) \Marc{\setminus Triv} = Min(Mod(T' \cup  T'') \setminus Triv,\preceq_T)$. As $(T \circ T') \cup T''$ is consistent, we have that $Min(Mod(T' \cup  T'') \setminus Triv,\preceq_T) = Min(Mod(T') \setminus Triv,\preceq_T) \cap Mod(T'')$ and then $\mathcal{M} \in Mod((T \circ T') \cup T'')$.
\end{description}
\end{enumerate}
\end{proof}

\Marc{Given a revision operator $\circ$ satisfying the AGM postulates, any FA satisfying the supplementary conditions of Theorem~\ref{correspondence agm and orders} will be called FA+. To a revision operator $\circ$ satisfying the AGM postulates, we can associate many FA+. \Isa{An example of such a FA+ is}
the mapping $f$ that associates to every $T \subseteq Sen$ the binary relation $\preceq_T$ defined as follows:}

\Marc{Given $T' \subseteq Sen$, let us start by defining $\preceq^{T'}_T \subseteq Mod(T') \times Mod(T')$ as:
$$\mathcal{M} \preceq^{T'}_T \mathcal{M}' \Longleftrightarrow \mathcal{M} \in Mod(T \circ T')~\mbox{and}~\mathcal{M}' {\not \in} Mod(T \circ T').$$
Let us then set $\Isa{f(T) =} \preceq_T = \bigcup_{T'}\preceq^{T'}_T$  (i.e. $\mathcal{M} \preceq_T \mathcal{M}' \Leftrightarrow \exists T', \mathcal{M} \preceq^{T'}_T \mathcal{M}'$).}

\Marc{
\begin{theorem}
If $\circ$ satisfies the AGM postulates, then the mapping $f$ \Isa{defined above} is a FA+.
\end{theorem}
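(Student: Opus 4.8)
The plan is to check, one by one, the two conditions of Definition~\ref{faithful assignment} and the three supplementary conditions of Theorem~\ref{correspondence agm and orders} for $\preceq_T=\bigcup_{T'}\preceq^{T'}_T$ (a binary relation on $Mod$ since each $\preceq^{T'}_T\subseteq Mod(T')\times Mod(T')$), thereby showing that $f$ is an FA satisfying those supplementary conditions, i.e.\ an FA+. Here $T$ is assumed consistent, as the paper does throughout. The whole argument rests on one auxiliary lemma that I would prove first: \emph{if $W\subseteq Sen$, if $\mathcal{M},\mathcal{M}'\in Mod(W)$, and if $(T\circ W)\cup\{\mathcal{M},\mathcal{M}'\}^*$ is consistent, then} $Mod(T\circ\{\mathcal{M},\mathcal{M}'\}^*)=Mod(T\circ W)\cap Mod(\{\mathcal{M},\mathcal{M}'\}^*)$. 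Indeed $\mathcal{M},\mathcal{M}'\in Mod(W)$ forces $W\subseteq\{\mathcal{M},\mathcal{M}'\}^*$, so $Mod(W\cup\{\mathcal{M},\mathcal{M}'\}^*)=Mod(\{\mathcal{M},\mathcal{M}'\}^*)$, hence these two sets have equal $Cn$ by Equation~\ref{eq:equivCnMod}, whence (G4) gives $Mod(T\circ(W\cup\{\mathcal{M},\mathcal{M}'\}^*))=Mod(T\circ\{\mathcal{M},\mathcal{M}'\}^*)$; while (G5), (G6) and the consistency hypothesis give $Mod(T\circ(W\cup\{\mathcal{M},\mathcal{M}'\}^*))=Mod((T\circ W)\cup\{\mathcal{M},\mathcal{M}'\}^*)$. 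In every invocation of the lemma below the consistency hypothesis will be met, witnessed by a non-trivial model lying in $Mod(T\circ W)\cap Mod(\{\mathcal{M},\mathcal{M}'\}^*)$ (Proposition~\ref{proposition:consistency}).

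\emph{Faithfulness.} For condition~1, if $\mathcal{M},\mathcal{M}'\in Mod(T)$ and $\mathcal{M}\preceq_T\mathcal{M}'$ held via some $U$, then $\mathcal{M}'\in Mod(U)$ and $\mathcal{M}'\notin Mod(T\circ U)$; since $Triv\subseteq Mod(T\circ U)$ we get $\mathcal{M}'\notin Triv$, so $T\cup U$ is consistent and (G3) gives $Mod(T\circ U)=Mod(T)\cap Mod(U)\ni\mathcal{M}'$, a contradiction. Hence $\mathcal{M}{\not\preceq}_T\mathcal{M}'$, in particular $\mathcal{M}{\not\prec}_T\mathcal{M}'$. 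For condition~2, let $\mathcal{M}\in Mod(T)$ and $\mathcal{M}'\notin Mod(T)$; taking $U=\emptyset$, consistency of $T$ and (G3) give $T\circ\emptyset=T$, so $\mathcal{M}\in Mod(T\circ\emptyset)$ and $\mathcal{M}'\notin Mod(T\circ\emptyset)$, i.e.\ $\mathcal{M}\preceq^{\emptyset}_T\mathcal{M}'$ and thus $\mathcal{M}\preceq_T\mathcal{M}'$; and $\mathcal{M}'{\not\preceq}_T\mathcal{M}$ by the same argument (any witness $U$ would, via (G3), put $\mathcal{M}\in Mod(T\circ U)$, contradicting $\mathcal{M}\notin Mod(T\circ U)$). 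Hence $\mathcal{M}\prec_T\mathcal{M}'$.

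\emph{The supplementary conditions.} The core is the first, $Mod(T\circ T')\setminus Triv=Min(Mod(T')\setminus Triv,\preceq_T)$; the second then follows from (G1) (consistency of $T'$ gives consistency of $T\circ T'$, hence a non-trivial, hence a $\preceq_T$-minimal, model), and the third follows from (G5)--(G6), which yield $Mod(T\circ(T'\cup T''))=Mod(T\circ T')\cap Mod(T'')$, combined with the first condition applied to $T'$ and to $T'\cup T''$. For the first condition: if $T'$ is inconsistent both sides are empty by (G2) and Corollary~\ref{cor:triv}, so assume $T'$ consistent. \emph{($\subseteq$)} Take $\mathcal{M}\in Mod(T\circ T')\setminus Triv$; by (G2), $\mathcal{M}\in Mod(T')\setminus Triv$. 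If some $\mathcal{M}'\in Mod(T')\setminus Triv$ had $\mathcal{M}'\prec_T\mathcal{M}$, then from a witness $S$ of $\mathcal{M}'\preceq_T\mathcal{M}$ the lemma (with $W=S$, witness $\mathcal{M}'$) gives $\mathcal{M}\notin Mod(T\circ\{\mathcal{M},\mathcal{M}'\}^*)$ because $\mathcal{M}\notin Mod(T\circ S)$, while the lemma (with $W=T'$, witness $\mathcal{M}$) gives $\mathcal{M}\in Mod(T\circ\{\mathcal{M},\mathcal{M}'\}^*)$, a contradiction; so $\mathcal{M}\in Min(Mod(T')\setminus Triv,\preceq_T)$. \emph{($\supseteq$)} Take $\mathcal{M}\in Min(Mod(T')\setminus Triv,\preceq_T)$; if $\mathcal{M}\notin Mod(T\circ T')$, then (G1)--(G2) give some $\mathcal{M}'\in Mod(T\circ T')\setminus Triv\subseteq Mod(T')\setminus Triv$, and the lemma (with $W=T'$, witness $\mathcal{M}'$) gives $Mod(T\circ\{\mathcal{M},\mathcal{M}'\}^*)=Mod(T\circ T')\cap Mod(\{\mathcal{M},\mathcal{M}'\}^*)$, whence $\mathcal{M}'\in Mod(T\circ\{\mathcal{M},\mathcal{M}'\}^*)$ and $\mathcal{M}\notin Mod(T\circ\{\mathcal{M},\mathcal{M}'\}^*)$, i.e.\ $\mathcal{M}'\preceq_T\mathcal{M}$; moreover any witness $U$ of $\mathcal{M}\preceq_T\mathcal{M}'$ would, via the lemma (with $W=U$, witness $\mathcal{M}$), force $\mathcal{M}\in Mod(T\circ\{\mathcal{M},\mathcal{M}'\}^*)$, a contradiction, so $\mathcal{M}'\prec_T\mathcal{M}$, contradicting minimality. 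Hence $\mathcal{M}\in Mod(T\circ T')\setminus Triv$.

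I expect the only delicate point to be the bookkeeping in the first supplementary condition: each $\preceq_T$-comparison supplies merely an \emph{arbitrary} witness set $U$ (or $S$), and one must transfer the situation to the canonical set $\{\mathcal{M},\mathcal{M}'\}^*$, each time verifying the $Cn$-equivalence and the consistency hypothesis that license (G4)--(G6). Isolating that transfer once, in the lemma above, is what keeps the rest — the faithfulness conditions and the remaining two supplementary conditions — routine.
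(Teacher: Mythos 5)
Your proof is correct, and it follows the same overall skeleton as the paper's: first the two faithfulness conditions (with the same witnesses, $U=\emptyset$ for condition 2 and (G3) applied to a consistent $T\cup U$ for the impossibility arguments), then the first supplementary condition as the core, with the other two derived from it via (G1) and (G5)--(G6) exactly as in the paper. The one genuine difference is the technical device used for the first supplementary condition: the paper applies (G5)--(G6) directly to the pairs $(T\circ T')\cup T''$ and $(T\circ T'')\cup T'$ to identify both with $Mod(T\circ(T'\cup T''))$, whereas you funnel every comparison through the canonical theory $\{\mathcal{M},\mathcal{M}'\}^*$ via a single reusable lemma --- which is in fact the device the paper uses in the proof of Theorem~\ref{correspondence agm and orders} rather than here. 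The two routes are logically equivalent (your appeal to (G4) in the lemma is even superfluous, since $W\subseteq\{\mathcal{M},\mathcal{M}'\}^*$ makes $W\cup\{\mathcal{M},\mathcal{M}'\}^*$ literally equal to $\{\mathcal{M},\mathcal{M}'\}^*$), but your version buys one real improvement: in the inclusion $Min(Mod(T')\setminus Triv,\preceq_T)\subseteq Mod(T\circ T')\setminus Triv$, the paper stops at $\mathcal{M}'\preceq_T\mathcal{M}$ and declares a contradiction, although $Min$ is defined through the strict relation $\prec_T$, so one must also rule out $\mathcal{M}\preceq_T\mathcal{M}'$; you do exactly that, by showing any witness $U$ of $\mathcal{M}\preceq_T\mathcal{M}'$ would force $\mathcal{M}\in Mod(T\circ\{\mathcal{M},\mathcal{M}'\}^*)$. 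So your argument is both a faithful reconstruction and a slightly more complete one.
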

\begin{proof}
First, let us show that $f$ is a FA. 
\begin{itemize}
\item Let $\mathcal{M},\mathcal{M}' \in Mod(T)$. Let us suppose that $\mathcal{M} \prec_T \mathcal{M}'$. This means that there exists $T' \subseteq Sen$ such that $\mathcal{M},\mathcal{M}' \in Mod(T')$, $\mathcal{M} \in Mod(T \circ T')$ and $\mathcal{M}' \not\in Mod(T \circ T')$. Hence we have that $T \cup T'$ is consistent, and then by Postulate (G3), $T \circ T' = T \cup T'$. We then have that $\mathcal{M}' \in Mod(T \circ T')$ which is a contradiction. 
\item Let $\mathcal{M} \in Mod(T)$ and let $\mathcal{M}' \in Mod \setminus Mod(T)$. We have that $\mathcal{M} \preceq^\emptyset_T \mathcal{M}'$, and then $\mathcal{M} \preceq_T \mathcal{M}'$ by definition of $\preceq_T$. Now, let us suppose that $\mathcal{M}' \preceq_T \mathcal{M}$.  This means that there exists $T' \subseteq Sen$ such that $\mathcal{M},\mathcal{M}' \in Mod(T')$, $\mathcal{M}' \in Mod(T \circ T')$ and $\mathcal{M} \not\in Mod(T \circ T')$. But, as $\mathcal{M} \in Mod(T)$, we have that $T \cup T'$ is consistent, and then by Postulate (G3), $T \circ T' = T \cup T'$. Hence, we have that $\mathcal{M} \in Mod(T \circ T')$ which is a contradiction. 
\end{itemize}
Let us show now the supplementary conditions of Theorem~\ref{correspondence agm and orders}. 
\begin{itemize}
\item First, let us show that $Mod(T \circ T') \setminus Triv = Min(Mod(T') \setminus Triv,\preceq_T)$. The case where $T'$ is inconsistent follows the same proof as in Theorem~\ref{correspondence agm and orders}. \\ 
Let us suppose that $T'$ is consistent. Let $\mathcal{M} \in Mod(T \circ T') \setminus Triv$. Let us suppose that $\mathcal{M} \not\in Min(Mod(T') \setminus Triv,\preceq_T)$. This means that there exists $\mathcal{M}' \in Mod(T') \setminus Triv$ such that $\mathcal{M}' \prec_T \mathcal{M}$. Therefore, there exists $T'' \subseteq Sen$ such that $\mathcal{M},\mathcal{M}' \in Mod(T'')$, $\mathcal{M}' \in Mod(T \circ T'')$ and $\mathcal{M} \not\in Mod(T \circ T'')$. Hence, both $(T \circ T') \cup T''$ and $(T \circ T'') \cup T'$ are consistent, and then by Postulates (G5) and (G6), $Mod((T \circ T') \cup T'') = Mod((T \circ T'') \cup T') = Mod(T \circ (T' \cup T''))$. We can then derive that $\mathcal{M} \in Mod(T \circ T'')$ which is a contradiction. \\ 
Let $\mathcal{M} \in Min(Mod(T') \setminus Triv,\preceq_T)$. Let us suppose that $\mathcal{M} \not\in Mod(T \circ T') \setminus Triv$. As $T'$ is consistent, by Postulates (G1) and (G2), there exists $\mathcal{M}' \in Mod(T \circ T') \setminus Triv$. By definition of $\preceq^{T'}_T$, we have that $\mathcal{M}' \preceq^{T'}_T \mathcal{M}$, and then $\mathcal{M}' \preceq_T \mathcal{M}$ which is a contradiction. 
\item The proof of the two other conditions corresponds to the one given in Theorem~\ref{correspondence agm and orders}.  
\end{itemize}
\end{proof}
}

\Marc{Actually, the set of FA+ associated with a revision operator satisfying the AGM postulates has a lattice structure. Let $f_1,f_2$ be two FA. Let us denote $f_1 \sqcup f_2$ (resp. $f_1 \sqcap f_2$) the mapping that assigns to each knowledge base $T \subseteq Sen$ the binary relation $\preceq_T = \preceq^1_T \cup \preceq^2_T$ (resp. $\preceq_T = \preceq^1_T \cap \preceq^2_T$) where $f_i(T) = \preceq^i_T$ for $i =1,2$.}

\Marc{\begin{proposition}
\label{meet and join}
If $f_1$ and $f_2$ are FA+ for a same revision operator $\circ$, then so are $f_1 \sqcup f_2$ and $f_1 \sqcap f_2$.
\end{proposition}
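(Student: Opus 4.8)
The plan is to verify that all five properties defining a FA+ for $\circ$ --- the two faithfulness conditions of Definition~\ref{faithful assignment} together with the three supplementary conditions of Theorem~\ref{correspondence agm and orders} --- are preserved when the pair $\preceq^1_T,\preceq^2_T$ is replaced by its pointwise union or its pointwise intersection. Fix $T$ and write $R_i=\preceq^i_T$ and $R=R_1\diamond R_2$ with $\diamond\in\{\cup,\cap\}$; since $f_1$ is a FA+ for $\circ$, Theorem~\ref{correspondence agm and orders} tells us that $\circ$ satisfies the AGM postulates, a fact I will use without further comment. The one elementary observation on which everything rests is that, for both $\diamond=\cup$ and $\diamond=\cap$, $\prec^{R}\subseteq\prec^{R_1}\cup\prec^{R_2}$, whereas conversely $\mathcal{M}\prec^{R_1}\mathcal{M}'$ and $\mathcal{M}\prec^{R_2}\mathcal{M}'$ imply $\mathcal{M}\prec^{R}\mathcal{M}'$; each is a short case analysis on the definition of the strict part of a relation.

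Granting this, faithfulness of $f_1\diamond f_2$ is immediate: a strict $R$-comparison between two elements of $Mod(T)$ would, by the first inclusion, produce a strict $R_i$-comparison between them for some $i$, contradicting faithfulness of $f_i$ (condition~1 of Definition~\ref{faithful assignment}); and for $\mathcal{M}\in Mod(T)$ and $\mathcal{M}'\in Mod\setminus Mod(T)$ we have $\mathcal{M}\prec^{R_1}\mathcal{M}'$ and $\mathcal{M}\prec^{R_2}\mathcal{M}'$, hence $\mathcal{M}\prec^{R}\mathcal{M}'$ by the converse direction (condition~2). For the supplementary conditions it is enough to prove the first one, $Mod(T\circ T')\setminus Triv=Min(Mod(T')\setminus Triv,R)$: condition~2 then reduces to (G1) via Proposition~\ref{proposition:consistency}, and condition~3 follows from the first one together with (G5)--(G6) exactly as in the second half of the proof of Theorem~\ref{correspondence agm and orders}. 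The inclusion $Mod(T\circ T')\setminus Triv\subseteq Min(Mod(T')\setminus Triv,R)$ is the easy half: the case of inconsistent $T'$ is trivial by (G2), and when $T'$ is consistent one notes that, $f_1$ and $f_2$ being FA+ for the \emph{same} $\circ$, the sets $Min(Mod(T')\setminus Triv,R_1)$ and $Min(Mod(T')\setminus Triv,R_2)$ both coincide with $Mod(T\circ T')\setminus Triv$; an element of this common set has neither an $R_1$-smaller nor an $R_2$-smaller element in $Mod(T')\setminus Triv$, hence --- by $\prec^{R}\subseteq\prec^{R_1}\cup\prec^{R_2}$ --- no $R$-smaller one.

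The main obstacle is the converse inclusion $Min(Mod(T')\setminus Triv,R)\subseteq Mod(T\circ T')\setminus Triv$, for which the mere agreement of $f_1$ and $f_2$ on the minima of $Mod(T')\setminus Triv$ does not suffice: one must exploit that the two assignments induce the same minimum on \emph{every} closed model class $Mod(S)\setminus Triv$, and that each such minimum $Mod(T\circ S)\setminus Triv$ is itself a closed model class. The plan is: given $\mathcal{M}\in Min(Mod(T')\setminus Triv,R)$ with $T'$ consistent, pick $\mathcal{M}_0\in Mod(T\circ T')\setminus Triv$ (nonempty by (G1)) and set $S=\{\mathcal{M},\mathcal{M}_0\}^*$; then $T'\subseteq S$, so $Mod(S)\setminus Triv\subseteq Mod(T')\setminus Triv$ and $\mathcal{M}$ is still $R$-minimal in $Mod(S)\setminus Triv$, while $\mathcal{M}_0\in Min(Mod(S)\setminus Triv,R_1)=Min(Mod(S)\setminus Triv,R_2)=Mod(T\circ S)\setminus Triv$. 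The delicate step --- and the one I expect to require the most care, since it is here that the closedness of the minima, rather than just the pairwise agreement of $f_1$ and $f_2$, has to enter --- is to conclude from this that $\mathcal{M}$ also belongs to that common minimum. Once it is in hand, transporting the conclusion back to $T'$ is routine: $(T\circ T')\cup S$ is consistent (it has the model $\mathcal{M}_0$), so (G5)--(G6) give $Mod(T\circ S)=Mod((T\circ T')\cup S)=Mod(T\circ T')\cap Mod(S)$, whence $\mathcal{M}\in Mod(T\circ T')$, as desired. Finally, once $f_1\sqcup f_2$ and $f_1\sqcap f_2$ are both shown to be FA+ for $\circ$, the claimed lattice structure on the set of FA+'s of $\circ$ follows at once.
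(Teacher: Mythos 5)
Your handling of the two faithfulness conditions and of the easy inclusion $Mod(T\circ T')\setminus Triv\subseteq Min(Mod(T')\setminus Triv,R)$ is correct, and rests on exactly the right elementary facts about strict parts ($\prec^{R}\subseteq\prec^{R_1}\cup\prec^{R_2}$, and joint strictness in $R_1$ and $R_2$ implying strictness in $R$); the reductions of supplementary conditions 2 and 3 to condition 1 are also fine. The proof is nevertheless incomplete: the converse inclusion $Min(Mod(T')\setminus Triv,R)\subseteq Mod(T\circ T')\setminus Triv$ is the whole content of the proposition, and you stop at what you yourself label ``the delicate step'' ($\mathcal{M}\in Mod(T\circ S)$) without carrying it out. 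Moreover, the detour through $S=\{\mathcal{M},\mathcal{M}_0\}^*$ buys nothing, since $Mod(S)$ is just the smallest closed model class containing $\mathcal{M}$ and $\mathcal{M}_0$ and can coincide with $Mod(T')$ itself, returning you to the original question.

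More importantly, the step you leave open does not follow from the stated hypotheses. The FA+ conditions constrain each $R_i$ only by negative requirements on minimal models and by an \emph{existential} requirement on each non-minimal one (some witness strictly below it); the witnesses can differ between $R_1$ and $R_2$ and both be destroyed by $\sqcap$ (or, with back-edges, by $\sqcup$). Concretely: take $Mod=\{m,z,w_1,w_2\}$ with $Triv=\emptyset$, sentences axiomatizing exactly the classes $\emptyset,\{m\},\{w_1\},\{w_2\},\{w_1,w_2\},\{z,w_1,w_2\},Mod$, a base $T$ with $Mod(T)=\{m\}$, and $\circ$ induced by the faithful ranking $m<w_1\sim w_2<z$. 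Let $R_1$ place $m$ strictly below everything and $w_1$ strictly below $z$ (with $w_2,z$ unrelated), and $R_2$ place $m$ strictly below everything and $w_2$ strictly below $z$ (with $w_1,z$ unrelated). Both are FA+, because $\{z,w_i\}$ is not a closed model class and so nothing pins down the $z$--$w_i$ comparison; yet in $R_1\cap R_2$ nothing is strictly below $z$ inside $\{z,w_1,w_2\}$, so $z$ becomes minimal there although $z\notin Mod(T\circ T')=\{w_1,w_2\}$. You should know that the paper's own proof glosses over exactly this point with a bare ``hence'' (asserting that equality of $Min(\cdot,\preceq^1_T)$ and $Min(\cdot,\preceq^2_T)$ transfers to their union and intersection), so the gap you flagged is not a defect of your particular strategy: it marks a real problem with the argument, and apparently with the statement itself, rather than a step you could have closed with more care.
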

\begin{proof}
It is sufficient to show that $\preceq^1_T \cup \preceq^2_T$ and $\preceq^1_T \cap \preceq^2_T$ satisfy Conditions (1) and (2) of Definition~\ref{faithful assignment} plus all the conditions of Theorem~\ref{correspondence agm and orders}. \\ 
Let us first show that they are FA. Let $T \subseteq Sen$. Let $\mathcal{M},\mathcal{M}' \in Mod(T)$. By definition of FA, then we have either $\mathcal{M} {\not \preceq^i_T} \mathcal{M}'$ and  $\mathcal{M}' {\not \preceq^i_T} \mathcal{M}$ or $\mathcal{M} \preceq^i_T \mathcal{M}'$ and  $\mathcal{M}' \preceq^i_T \mathcal{M}$ for $i = 1,2$. We then have four cases to consider, but for $f_1 \sqcap f_2(T) = \preceq_T$ (resp. $f_1 \sqcup f_2(T) = \preceq_T$), we always end up at either $\mathcal{M} {\not \preceq_T} \mathcal{M}'$ and  $\mathcal{M}' {\not \preceq_T} \mathcal{M}$ or $\mathcal{M} \preceq_T \mathcal{M}'$ and  $\mathcal{M}' \preceq_T \mathcal{M}$.  Likewise, for every $\mathcal{M} \in Mod(T)$ and every $\mathcal{M}' \in Mod \setminus Mod(T)$, we have that $\mathcal{M} \prec^i_T \mathcal{M}'$ for $i = 1,2$. Therefore, it is obvious to conclude that $\mathcal{M} \prec_T \mathcal{M}'$. \\ 
Now, by the first supplementary condition for $\preceq^1_T$ and $\preceq^2_T$ in Theorem~\ref{correspondence agm and orders}, we have for every $T' \subseteq Sen$ that $Min(Mod(T') \setminus Triv,\preceq^1_T) = Min(Mod(T') \setminus Triv,\preceq^2_T)$ \Isa{$= Mod(T \circ T') \setminus Triv$}. Hence, we can write that $Min(Mod(T') \setminus Triv,\preceq^1_T \cup \preceq^2_T) = Min(Mod(T') \setminus Triv,\preceq^1_T \cap \preceq^2_T) = Min(Mod(T') \setminus Triv,\preceq^i_T)$ for $i = 1,2$. \Isa{The three supplementary conditions are then straightforward, and} this allows us to directly conclude that  $f_1 \sqcup f_2$ and $f_1 \sqcap f_2$ are FA+.
\end{proof}
Given a revision operator $\circ$, let us denote $(\mbox{FA+($\circ$)},\leq)$ the poset of FA+ \Isa{associated with} $\circ$ where $\leq$ is the \Isa{partial} order defined by:
$$f \Marcb{\leq} g \Longleftrightarrow \forall T \subseteq Sen, f(T) \subseteq g(T)$$ \Isa{(the fact that this relation actually defines a partial order is straightforward).}
\Marcb{It is easy to show that given $f,g \in \mbox{FA+($\circ$)}$, $f \sqcup g$ (respectively $f \sqcap g$) is the least upper bound (respectively greatest \Isa{lower} bound) of $\{f,g\}$. Hence, $(\mbox{FA+($\circ$)},\leq)$ is a lattice. This lattice is further complete. Indeed, given a subset $S \subseteq \mbox{FA+($\circ$)}$, its least upper bound is the mapping $\sqcup S : T \mapsto \bigcup_{f \in S}f(T)$, and its greatest lower bound is the mapping $\sqcap S:T \mapsto \bigcap_{f \in S}f(T)$. By extending the proof of Proposition~\ref{meet and join}, it is easy to show that $\sqcup S$ and $\sqcap S$ are FA+.}
}

\subsection{Relaxation and AGM postulates}
\label{Relaxation and AGM postulates}

Relaxations have been introduced in~\cite{DAB14a,DAB14b} in the framework of description logics with the aim of defining dissimilarity between concepts. Here, we propose to generalize this notion in the framework of satisfaction systems. 

\medskip
\begin{definition}[Relaxation]
\label{relaxation}
A {\bf relaxation} is a mapping $\rho : Sen \to Sen$ satisfying: 
\begin{description}
\item[Extensivity] $\forall \varphi \in Sen, Mod(\varphi) \subseteq Mod(\rho(\varphi))$.
\item[Exhaustivity] $\exists k \in \mathbb{N}, Mod(\rho^k(\varphi)) = Mod$,
where $\rho^0$ is the identity mapping, and for all $k >0, \rho^k(\varphi) = \rho(\rho^{k-1}(\varphi))$.
\end{description} 
\end{definition}

\Marcb{Let us observe that relaxations exist if and only if the underlying satisfaction system $(Sen,Mod,\models)$ has tautologies (i.e. formulas $\varphi \in Sen$ such that $Mod(\varphi) = Mod$). Indeed, when the satisfaction system has tautologies, we can define the trivial relaxation $\rho : \varphi \mapsto \psi$ where $\psi$ is any tautology. Conversely, all relaxations imply that the underlying satisfaction system has tautologies to satisfy the exhaustivity condition.}

The interest of relaxations is that they give rise to revision operators which have demonstrated their usefulness in practice (see Sections~\ref{applications} \Isa{and~\ref{sec:theoryrelaxation}}). 

\medskip
\begin{notation}
\label{not:theoryRel}
Let $T \subseteq Sen$ be a knowledge base. Let $\mathcal{K} = \{k_\varphi \in \mathbb{N} \mid \varphi \in T\}$, and $\mathcal{K}' = \{k'_\varphi \in \mathbb{N} \mid \varphi \in T\}$. Let us note:

\begin{itemize}
\item $\rho^\mathcal{K}(T) = \{\rho^{k_\varphi}(\varphi) \mid k_\varphi \in \mathcal{K}, \varphi \in T\}$, 
\item $\sum \mathcal{K} = \sum_{k_\varphi \in \mathcal{K}} k_\varphi$,
\item  $\mathcal{K} \leq \mathcal{K}'$  when for every $\varphi \in T$, $k_\varphi \leq k'_\varphi$,
\item \Isa{$\mathcal{K} < \mathcal{K}'$ if $\mathcal{K} \leq \mathcal{K}'$ and $\exists \varphi \in T$, $k_\varphi < k'_\varphi$.}
\end{itemize}
\end{notation}
\Isa{In this notation, $k_\varphi$ is a number associated with each formula $\varphi$ of the knowledge base, which represents intuitively by which amount $\varphi$ is relaxed.}


\medskip
\begin{definition}[Revision order]
\label{revision order}
Let us define $\sqsubseteq$ the binary relation over $\mathcal{P}(Sen)$ as follows:
$$T' \sqsubseteq T''~\mbox{if}~  
\exists T''' \subseteq Sen, Mod(T''') = Mod(T'') \mbox{ and } T' \subseteq T'''.$$
\end{definition}

Intuitively, this means that $T'$ is included in $T''$ up to an equivalent knowledge base. 
The binary relation $\sqsubseteq$ will allow us to define a \Isa{coherence} criterion in the definition of revision operators (see Condition 3 in Definition~\ref{revision based in relaxation} just below). 

\medskip
\begin{definition}[Revision based on relaxation]
\label{revision based in relaxation}
Let $\rho$ be a relaxation. A  {\bf revision operator over $\rho$} is a mapping $\circ : \mathcal{P}(Sen) \times \mathcal{P}(Sen) \to \mathcal{P}(Sen)$ satisfying for every $T,T' \subseteq Sen$: 
$$T \circ T' = 
\left\{
\begin{array}{ll}
\rho^\mathcal{K}(T)  \cup T' & \mbox{if $T'$ is consistent} \\
T' & \mbox{otherwise}
\end{array}
\right.
$$
for some $\mathcal{K} = \{k_\varphi \in \mathbb{N} \mid \varphi \in T\}$ such that:
\begin{enumerate}
\item if $T'$ is consistent, then $T \circ T'$ is consistent;
\item for every $\mathcal{K}'$ such that $\rho^{\mathcal{K}'}(T)  \cup T'$ is consistent, $\sum \mathcal{K} \leq \sum \mathcal{K}'$ (minimality on the number of applications of the relaxation);
\item for every $T'' \sqsubseteq T'$, if $T \circ T'' = \rho^{\mathcal{K}'}(T) \cup T''$, then $\mathcal{K}' \leq \mathcal{K}$.
\end{enumerate}
\end{definition} 


It is important to note that given a relaxation $\rho$, several revision operators can be defined. \Marc{Without Condition 3 of Definition~\ref{revision based in relaxation}, we could accept revision operators $\circ$ \Isa{that do not satisfy}
Postulates (G5) and (G6). Hence, Condition 3 allows us to exclude such operators.} To illustrate this, let us consider in {\bf FOL} the satisfaction system $\mathcal{R} = (Sen,Mod,\models)$ over the signature $(S,F,P)$ where $S = \{s\}$, $F = \emptyset$ and $P = \{= : s \times s\}$. Let us consider $T,T' \subseteq Sen$ such that:
$$T = \left\{
\begin{array}{l}
\exists x. \exists y. (\neg x = y) \wedge \forall z (z = x \vee z = y) \\
\exists x. \exists y .\exists z. (\neg x = y \wedge \neg y = z \wedge \neg x = z) \wedge \\
\mbox{} \hfill \forall w (w = x \vee w = y \vee w = z)
\end{array}
\right\}
$$ 
$$T' = \left\{
\begin{array}{l}
\forall x. x = x \\
\forall x .\forall y. x = y \Rightarrow y = x \\
\forall x .\forall y. \forall z. x = y \wedge y = z \Rightarrow x = z
\end{array}
\right\}
$$

Obviously, $T'$ is consistent. As $T$ does not contain the axioms for equality, it is also consistent. Indeed, the model $\mathcal{M}$ with the carrier $M_s = \{0,1,2\}$ and the binary relation $=^\mathcal{M} \subseteq M_s \times M_s$ defined by $=^\mathcal{M} = \{(0,0),(1,1),(2,0)\}$ satisfies $T$. \\ 
But $T \cup T'$ is not consistent. The reason is that when the meaning of $=$ is the equality, the first axiom of $T$ can only be satisfied by models with two values while the second axiom is satisfied by models with three values. A way to retrieve the consistency is to remove one of the two axioms. This can be modeled by the relaxation $\rho$ that maps each formula to a tautology~\footnote{We will see in Section~\ref{applications} a less trivial but more interesting relaxation in {\bf FOL} that consists in changing universal quantifiers into existential ones.}. But in this case, we have then two options depending on whether we remove and change the first or the second axiom by a tautology, which give rise to two revision operators $\circ_1$ and $\circ_2$. In any cases, the first two conditions of Definition~\ref{revision based in relaxation} are satisfied by both $\circ_1$ and $\circ_2$. \\ 
Now, let us take $T'' = \{\exists x.\exists y. \neg x = y\}$ which is satisfied, when added to the axioms in $T'$, by any model with at least two elements. Hence, $(T \circ_1 T') \cup T''$ and $(T \circ_2 T') \cup T''$ are consistent. Without the third condition, nothing would prevent to define $T \circ_1 (T' \cup T'')$ (respectively $T \circ_2 (T' \cup T'')$) by removing and change in $T$ the second (respectively the first) axiom by a tautology which would be a counter-example to Postulates (G5) and (G6).  Actually, as shown by the result below, this third condition of Definition~\ref{revision based in relaxation} \Marc{entails Postulates (G5) and (G6), and then, by Proposition~\ref{proposition:reduced set of postulates}, entails Postulate (G4).} \\ 
However in some situations Condition 3 may be considered as too strong, forcing to relax more than what would be needed to satisfy only Condition 2.  This could be typically the case when Condition 2 could be obtained in two different ways, for instance for $\mathcal{K}' = \{0,1, 0, 0...\}$ or for $\mathcal{K}''=\{1,0, 0, 0...\}$. Then taking $\Marcb{Cn(}T'\Marcb{)}=\Marcb{Cn(}T''\Marcb{)}$, and revising $T\circ T'$ using $\mathcal{K}'$ and $T \circ T''$ using $\mathcal{K}''$ would not meet Condition 3. To satisfy it, relaxation should be done for instance with $\mathcal{K} = \{1,1, 0, 0...\}$. Therefore in concrete applications, we will have to find a compromise between Condition 3 and (G4)-(G6) at the price of potential larger relaxations on the one hand, and less relaxation but potentially the loss of (G4)-(G6) on the other hand.

\medskip
\begin{notation}
\Marc{In the context of Definition~\ref{revision based in relaxation},} let $T,T' \subseteq Sen$ be two knowledge bases. 
If $T \circ T' = \rho^\mathcal{K}(T) \cup T'$ with $\mathcal{K} = \{k_\varphi \in \mathbb{N} \mid \varphi \in T\}$, then let us note \Marc{$\mathcal{K}^{T'}_T = \mathcal{K}$}. 
\end{notation}

\medskip
\begin{theorem}
\label{AGM satisfied}
\Marc{Any} revision operator $\circ$ \Isa{based on a relaxation} (Definition~\ref{revision based in relaxation}) satisfies the AGM Postulates.
\end{theorem}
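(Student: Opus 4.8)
The plan is to verify each of the six AGM postulates (G1)--(G6) directly from Definition~\ref{revision based in relaxation}, using the extensivity and exhaustivity of $\rho$ together with the three minimality/coherence conditions. I will handle the inconsistent case separately throughout: if $T'$ is inconsistent then $T\circ T' = T'$, and most postulates hold trivially (G1 has no hypothesis to worry about, G2 is an equality, G3's hypothesis fails since $T\cup T'$ is then inconsistent too by monotonicity, etc.), so the interesting work is always under the assumption that the relevant knowledge bases are consistent.

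First I would dispatch (G1), (G2), (G3). (G1) is precisely Condition~1 of Definition~\ref{revision based in relaxation}. For (G2), when $T'$ is consistent we have $T\circ T' = \rho^{\mathcal{K}}(T)\cup T'$, so $Mod(T\circ T')\subseteq Mod(T')$ is immediate from monotonicity of $Mod$ (Property 1 of the Galois connection). For (G3), suppose $T\cup T'$ is consistent; then taking $\mathcal{K}'$ to be the all-zeros family gives $\rho^{\mathcal{K}'}(T)\cup T' = T\cup T'$ consistent, so by Condition~2, $\sum\mathcal{K}^{T'}_T \le \sum\mathcal{K}' = 0$, forcing $\mathcal{K}^{T'}_T$ to be the zero family, hence $T\circ T' = \rho^{0}(T)\cup T' = T\cup T'$, giving $Mod(T\circ T') = Mod(T\cup T')$. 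Postulate (G4) then follows for free by Proposition~\ref{proposition:reduced set of postulates} once (G5) and (G6) are established, so I will not prove it separately.

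The heart of the argument is (G5) and (G6), which together amount to $Mod(T\circ(T'\cup T'')) = Mod((T\circ T')\cup T'')$ whenever $(T\circ T')\cup T''$ is consistent. Write $\mathcal{K} = \mathcal{K}^{T'}_T$ and $\mathcal{K}^\ast = \mathcal{K}^{T'\cup T''}_T$. The key observation is that $T'\cup T'' \sqsubseteq T'$ fails in general, but $T' \sqsubseteq T'\cup T''$ might not be what I want either — so instead I will use Condition~3 applied in the right direction: since $(T\circ T')\cup T''$ is consistent and $\rho^{\mathcal{K}}(T)\cup(T'\cup T'')$ equals $(T\circ T')\cup T''$ up to nothing at all (it is literally equal), $\rho^{\mathcal{K}}(T)\cup(T'\cup T'')$ is a consistent relaxation of $T$ against the new belief $T'\cup T''$, which by Condition~2 gives $\sum\mathcal{K}^\ast \le \sum\mathcal{K}$; conversely, I would like to compare $\mathcal{K}$ and $\mathcal{K}^\ast$ componentwise. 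For this, observe that $T' \sqsubseteq T'\cup T''$ (take $T''' = T'\cup T''$ itself; then $Mod(T''') = Mod(T'\cup T'')$ and $T'\subseteq T'''$), so Condition~3, with the roles set up so that $T\circ(T'\cup T'')$ plays the part of the ``outer'' revision and $T'$ the ``$T''$'' of that condition — using that $(T\circ(T'\cup T''))\cup T' $ is consistent (it contains $\rho^{\mathcal{K}^\ast}(T)\cup(T'\cup T'')$ which is consistent) — yields $\mathcal{K} \le \mathcal{K}^\ast$ componentwise. Combined with $\sum\mathcal{K}^\ast \le \sum\mathcal{K}$ this forces $\mathcal{K}^\ast = \mathcal{K}$, whence $T\circ(T'\cup T'') = \rho^{\mathcal{K}}(T)\cup T'\cup T'' = (\rho^{\mathcal{K}}(T)\cup T')\cup T'' = (T\circ T')\cup T''$, and (G5), (G6) follow as the two inclusions between the model classes.

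The main obstacle, and the step that needs the most care, is getting the direction of Condition~3 exactly right and checking that the consistency side-conditions it requires are actually met; in particular one must be careful that $\sqsubseteq$ is not symmetric, that Condition~2's minimality is only on $\sum\mathcal{K}$ while Condition~3's is componentwise, and that these two must be played against each other to pin down $\mathcal{K}^\ast = \mathcal{K}$. A secondary subtlety is that the $\mathcal{K}$ witnessing $T\circ T'$ is not unique as a family (only $\rho^{\mathcal{K}}(T)$ up to model-equivalence matters), so I should phrase the argument in terms of $Mod(\rho^{\mathcal{K}}(T)\cup T')$ rather than syntactic equality wherever the uniqueness of $\mathcal{K}$ is not guaranteed, invoking Equation~\eqref{eq:equivCnMod} to pass between the two. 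Once $\mathcal{K}^\ast$ and $\mathcal{K}$ are reconciled, everything else is bookkeeping.
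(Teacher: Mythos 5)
Your proposal is correct and follows essentially the same route as the paper: (G1)--(G3) read off directly from the definition, (G4) deferred to Proposition~\ref{proposition:reduced set of postulates}, and (G5)--(G6) obtained by playing Condition~2 (sum-minimality, giving $\sum\mathcal{K}^{T'\cup T''}_T \leq \sum\mathcal{K}^{T'}_T$) against Condition~3 applied via $T' \sqsubseteq T' \cup T''$ (giving $\mathcal{K}^{T'}_T \leq \mathcal{K}^{T'\cup T''}_T$ componentwise) to force $T \circ (T'\cup T'') = \rho^{\mathcal{K}^{T'}_T}(T) \cup T' \cup T''$. In fact your write-up makes explicit the componentwise-versus-sum interplay that the paper's proof leaves implicit.
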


\begin{proof}
$\circ$ obviously satisfies Postulates (G1), (G2) and (G3). \Marc{To prove (G5)-(G6),} let us suppose $T,T',T'' \subseteq Sen$ such that $(T \circ T') \cup T''$ is consistent (the case where $(T \circ T') \cup T''$ is inconsistent is obvious). This means that $\rho^{\mathcal{K}^{T'}_T}(T) \cup T' \cup T''$ is consistent. Now, obviously we have that $T' \sqsubseteq T' \cup T''$. Hence, by the second and the third conditions of Definition~\ref{revision based in relaxation}, we necessarily have that $T \circ (T' \cup T'') = \rho^{\mathcal{K}^{T'}_T}(T) \cup T' \cup T''$, and then $Mod((T \circ T') \cup T'') = Mod(T \circ (T' \cup T''))$.
\end{proof}


\Marc{In the previous section, we showed that \Isa{several FA+ can be associated with a given} revision operator $\circ$ satisfying the AGM postulates.
Here, we define \Isa{a particular} one, which is more specific to revision operators based on relaxation. Let $\rho$ be a relaxation. Let $f_\rho$ be the mapping that associates to every $T \subseteq Sen$ the binary relation $\preceq_T$ defined as follows:}

Given $T' \subseteq Sen$, let us start by defining $\preceq^{T'}_T \subseteq Mod(T') \times Mod(T')$ as :\\
$\mathcal{M} \preceq^{T'}_T \mathcal{M}' \Longleftrightarrow$ \\
\mbox{} \hfill $\forall \mathcal{K}'' \Marc{\geq \mathcal{K}^{T'}_T}, \mathcal{M}' \in Mod(\rho^{\mathcal{K}''}(T)) \Rightarrow \exists \mathcal{K}' \Marc{\geq \mathcal{K}^{T'}_T}, 
\left\{ 
\begin{array}{l}
\mathcal{K}' < \mathcal{K}'' \mbox{ and } \\
 \mathcal{M} \in Mod(\rho^{\mathcal{K}'}(T))
 \end{array} 
 \right.$

Let us then set $\preceq_T = \bigcup_{T'} \preceq^{T'}_T$ (i.e. $\mathcal{M} \preceq_T \mathcal{M}' \Leftrightarrow \exists T', \mathcal{M} \preceq^{T'}_T \mathcal{M}'$). Let us note that $\preceq_T \subseteq Mod \times Mod$ because $\preceq^\emptyset_T \subseteq \preceq_T$.

\Isa{Intuitively, it means that $T$ has to be relaxed more to be satisfied by $\mathcal{M}'$ than to be satisfied by $\mathcal{M}$.}

\medskip
\begin{theorem}
\label{is a FA}
\Marc{For any revision operator $\circ$ based on a relaxation $\rho$ as defined in Definition~\ref{revision based in relaxation}, the mapping $f_\rho$ is a FA+.}
\end{theorem}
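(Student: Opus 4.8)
The plan is to verify, in turn, the two conditions of Definition~\ref{faithful assignment} (that $f_\rho$ is an FA) and then the three supplementary conditions of Theorem~\ref{correspondence agm and orders} (that it is an FA+). Throughout I would rely on Theorem~\ref{AGM satisfied}, which already tells us that $\circ$ satisfies the AGM postulates, so that facts like (G1)--(G3), (G5)--(G6) are available; the only real work is to connect the relaxation-indexed relation $\preceq^{T'}_T$ to the model classes $Mod(T\circ T')$.

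First I would record the two basic observations that drive everything. By the definition of a revision operator over $\rho$, for consistent $T'$ we have $Mod(T\circ T') = Mod(\rho^{\mathcal{K}^{T'}_T}(T)\cup T') = Mod(\rho^{\mathcal{K}^{T'}_T}(T))\cap Mod(T')$; and by extensivity $Mod(\rho^{\mathcal{K}}(T))\subseteq Mod(\rho^{\mathcal{K}'}(T))$ whenever $\mathcal{K}\leq\mathcal{K}'$, so the classes $Mod(\rho^{\mathcal{K}}(T))$ form a monotone family indexed by $\mathcal{K}$. With these in hand, the \textbf{FA conditions} are quick: for condition~(1), if $\mathcal{M},\mathcal{M}'\in Mod(T)$ then taking $T'=\emptyset$, $\mathcal{K}^{\emptyset}_T=\{0,\dots\}$, both lie in $Mod(\rho^{\mathcal{K}}(T))$ for every $\mathcal{K}$, and the defining implication of $\preceq^{\emptyset}_T$ cannot make either strictly below the other, so $\mathcal{M}\not\prec_T\mathcal{M}'$ (one also checks no other $T'$ can separate them, using that $\mathcal{M},\mathcal{M}'\in Mod(T)\subseteq Mod(\rho^{\mathcal{K}}(T))$ for all $\mathcal{K}$); for condition~(2), if $\mathcal{M}\in Mod(T)$ and $\mathcal{M}'\notin Mod(T)$, use $T'=\emptyset$: since $\mathcal{M}\in Mod(\rho^{0}(T))=Mod(T)$ the implication in $\preceq^{\emptyset}_T$ is witnessed with $\mathcal{K}'=\{0,\dots\}$, so $\mathcal{M}\preceq_T\mathcal{M}'$, while $\mathcal{M}'\not\preceq_T\mathcal{M}$ because $\mathcal{M}'\notin Mod(T)=Mod(\rho^{0}(T))$ kills the implication in the reverse direction whatever the separating $T''$ (again one argues that no other $T''$ with $\mathcal{M}'\preceq^{T''}_T\mathcal{M}$ can exist, since exhaustivity forces some $\rho^{k}$ to reach all of $Mod$, and minimality of $\mathcal{K}^{T''}_T$ is incompatible with $\mathcal{M}$ being reached strictly earlier than $\mathcal{M}'$ while $\mathcal{M}'\notin Mod(T)$). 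Hence $\mathcal{M}\prec_T\mathcal{M}'$.

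Next, the \textbf{first supplementary condition}, $Mod(T\circ T')\setminus Triv = Min(Mod(T')\setminus Triv,\preceq_T)$. The inconsistent case for $T'$ is exactly as in Theorem~\ref{correspondence agm and orders}. For consistent $T'$: for the inclusion $\subseteq$, take $\mathcal{M}\in Mod(T\circ T')\setminus Triv$; then $\mathcal{M}\in Mod(\rho^{\mathcal{K}^{T'}_T}(T))\cap Mod(T')$, and if some $\mathcal{M}'\in Mod(T')\setminus Triv$ had $\mathcal{M}'\prec_T\mathcal{M}$, unwinding $\preceq^{T''}_T$ for the separating $T''$ and combining with the minimality Condition~2 of Definition~\ref{revision based in relaxation} (plus Condition~3 applied along $T'\sqsubseteq T'\cup T''$, as in the proof of Theorem~\ref{AGM satisfied}) would contradict $\mathcal{M}$ being in $Mod(T\circ T')$; so $\mathcal{M}\in Min(Mod(T')\setminus Triv,\preceq_T)$. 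For $\supseteq$, take $\mathcal{M}$ minimal in $Mod(T')\setminus Triv$; by (G1)+(G2) there is $\mathcal{M}'\in Mod(T\circ T')\setminus Triv$, and by the already-proved $\subseteq$ direction $\mathcal{M}'\in Min(Mod(T')\setminus Triv,\preceq_T)$, so $\mathcal{M}'\preceq^{T'}_T\mathcal{M}$ directly from the definition (here $\mathcal{M}'\in Mod(\rho^{\mathcal{K}^{T'}_T}(T))$ and the implication is witnessed by $\mathcal{K}^{T'}_T$ itself whenever $\mathcal{M}\in Mod(\rho^{\mathcal{K}''}(T))$, using that if $\mathcal{M}\notin Mod(T\circ T')$ then $\mathcal{M}\notin Mod(\rho^{\mathcal{K}^{T'}_T}(T))$ so some strictly larger $\mathcal{K}''$ is needed and $\mathcal{K}^{T'}_T<\mathcal{K}''$), hence $\mathcal{M}'\preceq_T\mathcal{M}$; by minimality of $\mathcal{M}$ this forces $\mathcal{M}\preceq_T\mathcal{M}'$ as well, and then minimality of $\mathcal{M}'$ (already known to equal $Mod(T\circ T')\setminus Triv$) gives $\mathcal{M}\in Mod(T\circ T')\setminus Triv$. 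Finally, the \textbf{second and third supplementary conditions} follow formally: the second is immediate from (G1) and the first condition just proved; for the third, since $\circ$ satisfies (G5)--(G6) by Theorem~\ref{AGM satisfied}, we have $Mod(T\circ(T'\cup T''))=Mod((T\circ T')\cup T'')$ whenever $(T\circ T')\cup T''$ is consistent, and substituting the first supplementary condition on both sides yields $Min(Mod(T')\setminus Triv,\preceq_T)\cap Mod(T'')=Min(Mod(T'\cup T'')\setminus Triv,\preceq_T)$ exactly as at the end of the proof of Theorem~\ref{correspondence agm and orders}.

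The main obstacle I anticipate is the bookkeeping around the quantifier-nested definition of $\preceq^{T'}_T$ — in particular making rigorous the claim that \emph{no} separating theory $T''$ can witness $\mathcal{M}'\prec_T\mathcal{M}$ when $\mathcal{M}\in Mod(T\circ T')$, since one must quantify over all $T''$, control the associated $\mathcal{K}^{T''}_T$ via minimality (Condition~2) and monotonicity (Condition~3 along $\sqsubseteq$), and exploit that $Mod(\rho^{\mathcal{K}}(T))\cap Mod(T'')$ shrinks appropriately. The exhaustivity clause of Definition~\ref{relaxation} is what guarantees these index sets $\mathcal{K}''$ realizing $\mathcal{M}'\in Mod(\rho^{\mathcal{K}''}(T))$ always exist, so the implications in the definition of $\preceq^{T'}_T$ are never vacuous in a way that would break the argument; keeping that in mind should make the case analysis go through.
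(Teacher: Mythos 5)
Your overall route is the same as the paper's: first check the two FA conditions (using that models of $T$ lie in $Mod(\rho^{\mathcal{K}}(T))$ for every $\mathcal{K}$, so the implication defining $\preceq^{T'}_T$ fails at $\mathcal{K}''=\mathcal{K}^{T'}_T$), then prove $Mod(T\circ T')\setminus Triv = Min(Mod(T')\setminus Triv,\preceq_T)$ by two inclusions driven by Conditions 2 and 3 of Definition~\ref{revision based in relaxation}, and finally get the remaining two supplementary conditions formally from (G1), (G5), (G6). Your $\subseteq$ inclusion is essentially the paper's argument.

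There is, however, a genuine flaw in the last step of your $\supseteq$ inclusion. Having assumed $\mathcal{M}\in Min(Mod(T')\setminus Triv,\preceq_T)$ but $\mathcal{M}\notin Mod(T\circ T')\setminus Triv$, you correctly obtain $\mathcal{M}'\preceq^{T'}_T\mathcal{M}$ for a nontrivial $\mathcal{M}'\in Mod(T\circ T')$ and, by minimality of $\mathcal{M}$, also $\mathcal{M}\preceq_T\mathcal{M}'$. But the conclusion ``minimality of $\mathcal{M}'$ gives $\mathcal{M}\in Mod(T\circ T')\setminus Triv$'' is a non sequitur: at that point only the inclusion $Mod(T\circ T')\setminus Triv\subseteq Min(\cdot)$ is available, and even granting equality, $\mathcal{M}\preceq_T\mathcal{M}'$ together with $\mathcal{M}'$ minimal yields nothing --- minimality of $\mathcal{M}'$ only excludes elements strictly below $\mathcal{M}'$, and since $\mathcal{M}'\preceq_T\mathcal{M}$ also holds, $\mathcal{M}$ is not strictly below $\mathcal{M}'$, so no contradiction and no membership claim follows from the order alone. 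The correct finish (which is what the paper does, and which mirrors the machinery you already deploy in the $\subseteq$ direction) is to unwind the witness: $\mathcal{M}\preceq_T\mathcal{M}'$ means $\mathcal{M}\preceq^{T''}_T\mathcal{M}'$ for some $T''$ with $\mathcal{M},\mathcal{M}'\in Mod(T'')$; then $(T\circ T')\cup T''$ is consistent (witnessed by the nontrivial $\mathcal{M}'$), so by Conditions 2 and 3, $T\circ(T'\cup T'')=\rho^{\mathcal{K}^{T'}_T}(T)\cup T'\cup T''$ and $\mathcal{K}^{T''}_T\leq\mathcal{K}^{T'}_T$; applying the definition of $\preceq^{T''}_T$ at $\mathcal{K}''=\mathcal{K}^{T'}_T$ (legitimate since $\mathcal{M}'\in Mod(\rho^{\mathcal{K}^{T'}_T}(T))$) produces some $\mathcal{K}'''<\mathcal{K}^{T'}_T$ with $\mathcal{M}\in Mod(\rho^{\mathcal{K}'''}(T))$, so $\rho^{\mathcal{K}'''}(T)\cup T'$ is consistent with $\sum\mathcal{K}'''<\sum\mathcal{K}^{T'}_T$, contradicting Condition 2. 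Please replace the appeal to ``minimality of $\mathcal{M}'$'' by this argument; the rest of the plan stands.
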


\begin{proof}
\Marc{Let $T \subseteq Sen$.} Let us first show that $\Marc{f_\rho(T) =} \preceq_T$ is faithful. 
\begin{itemize}
\item Obviously, we have for every $\mathcal{M},\mathcal{M}' \in Mod(T)$ and every $T' \subseteq Sen$ that both $\mathcal{M} {\not \preceq^{T'}_T} \mathcal{M}'$ and $\mathcal{M}' {\not \preceq^{T'}_T} \mathcal{M}$. Hence the same relations hold for $\preceq_T$.
\item Let $\mathcal{M} \in Mod(T)$ and let $\mathcal{M}' \in Mod \setminus Mod(T)$. Obviously, we have that $\mathcal{M} \preceq^\emptyset_T \mathcal{M}'$. Let $T' \subseteq Sen$ such that $\mathcal{M},\mathcal{M}' \in Mod(T')$ (the case where \Marc{for all $T' \subseteq Sen$} $\mathcal{M}$ or $\mathcal{M}'$ is not in $Mod(T')$ implies that $\mathcal{M}$ and $\mathcal{M}'$ are incomparable by $\preceq^{T'}_T$, and then we directly have that $\mathcal{M}' {\not \preceq_T} \mathcal{M}$). Here two cases have to be considered:
\begin{enumerate}
\item $\mathcal{M} \in Triv$. As $\mathcal{M}' {\not \in} Mod(T)$, then $\mathcal{M}' {\not \in} Triv$. Hence, there does not exist $\mathcal{K}' < \mathcal{K}$ such that $\mathcal{M}' \in Mod(\rho^{\mathcal{K}'}(T))$. Otherwise, $\rho^{\mathcal{K}'}(T) \cup T'$ would be consistent, which would contradict the hypothesis that $T \circ T' = \rho^\mathcal{K}(T) \cup T'$.
\item $\mathcal{M} {\not \in} Triv$.  We have that $\mathcal{M} \in Mod(T \cup  T')$ but $\mathcal{M}' {\not \in} Mod(T \cup T')$, and then $\mathcal{M}' {\not \preceq^{T'}_T} \mathcal{M}$ \Marc{By definition of $\circ$}. 
\end{enumerate}
Hence, in both cases we can conclude that $\mathcal{M}' {\not \preceq_T} \mathcal{M}$.
\end{itemize}
Let us prove that $Mod(T \circ T') \setminus Triv = Min(Mod(T') \setminus Triv,\preceq_T)$. This will directly prove that $Min(Mod(T') \setminus Triv,\preceq_T) \neq \emptyset$ when $T'$ is consistent. Indeed, by definition, we have that $T \circ T'$ is consistent when $T'$ is consistent, and then $Min(Mod(T') \setminus Triv,\preceq_T)  \neq \emptyset$ if $Mod(T \circ T') \setminus Triv = Min(Mod(T') \setminus Triv,\preceq_T)$. \\
If $T'$ is inconsistent, then so is $T \circ T'$ by definition. Hence, $Mod(T \circ T') \setminus Triv = Min(Mod(T') \setminus Triv,\preceq_T) = \emptyset$.\\ 
Let us now suppose that $T'$ is consistent.
\begin{itemize}
\item Let us show that $Mod(T \circ T') \setminus Triv  \subseteq Min(Mod(T') \setminus Triv,\preceq_T)$. Let $\mathcal{M} \in Mod(T \circ T') \setminus Triv$. Let $\mathcal{M}' \in Mod(T') \setminus Triv$. Two cases have to be considered:
\begin{enumerate}
\item $\mathcal{M}' \in Mod(T \circ T')$. Obviously, we have both $\mathcal{M} {\not \preceq^{T'}_T} \mathcal{M}'$ and $\mathcal{M}' {\not \preceq^{T'}_T} \mathcal{M}$. Let us show that this is also true for every $T'' \subseteq Sen$ such that $\mathcal{M},\mathcal{M}' \in Mod(T'')$. Let us suppose that there exists $T'' \subseteq Sen$ such that $\mathcal{M}' \preceq^{T''}_T \mathcal{M}$. By hypothesis, we then have that $(T \circ T') \cup T''$ is consistent. Therefore, by Conditions 2 and 3 of Definition~\ref{revision based in relaxation}, we have that $(T \circ T') \cup T'' = T \circ (T' \cup T'')$. Hence, we also have that $T \circ (T' \cup T'') =  \Marc{\rho^{\mathcal{K}^{T'}_T}}(T) \cup T' \cup T''$. Consequently, \Marc{as $T'' \sqsubseteq T' \cup T''$}, we have by Condition 3 of Definition~\ref{revision based in relaxation} that \Marc{$\mathcal{K}^{T''}_T \leq \mathcal{K}^{T'}_T$}. Therefore, as $\mathcal{M}' \preceq^{T''}_T \mathcal{M}$, we can deduce that there exists $\mathcal{K}''< \mathcal{K}^{T'}_T$ such that $\mathcal{M}' \in Mod(\rho^{\mathcal{K}''}(T))$. We then have that $\rho^{\mathcal{K}''}(T) \cup T'$ is consistent, and then by Condition 2 of Definition~\ref{revision based in relaxation}, $\sum \Marc{\mathcal{K}^{T'}_T} \leq \sum \mathcal{K}''$, which is a contradiction.
\item $\mathcal{M}' {\not \in} Mod(T \circ T')$. By definition of $\preceq^{T'}_T$, we have that $\mathcal{M} \preceq^{T'}_T \mathcal{M}'$, and therefore $\mathcal{M} \preceq_T \mathcal{M}'$.
\end{enumerate}
Finally, we can conclude that $\mathcal{M} \in Min(Mod(T') \setminus Triv,\preceq_T)$.
\item Let us now show that $Min(Mod(T') \setminus Triv,\preceq_T) \subseteq Mod(T \circ T') \setminus Triv$. Let $\mathcal{M} \in Min(Mod(T') \setminus Triv,\preceq_T)$. Let us suppose that $\mathcal{M} {\not \in} Mod(T \circ T') \setminus Triv$. As $T'$ is consistent, then so is $T \circ T'$. Hence, there exists $\mathcal{M}' \in Mod(T \circ T') \setminus Triv$. As $\mathcal{M} \in Mod(T') \setminus Mod(T \circ T')$, we have that $\mathcal{M}' \preceq^{T'}_T \mathcal{M}$, and then as $\mathcal{M} \in Min(Mod(T') \setminus Triv,\preceq_T)$ we also have that $\mathcal{M} \preceq_T \mathcal{M}'$. This means that there exists $T'' \subseteq Sen$ such that $\mathcal{M},\mathcal{M}' \in Mod(T'')$ and $\mathcal{M} \preceq^{T''}_T \mathcal{M}'$. By hypothesis, we then have that $(T \circ T') \cup T''$ is consistent. Therefore, by Conditions 2 and 3 of Definition~\ref{revision based in relaxation}, we have that $(T \circ T') \cup T'' = T \circ (T' \cup T'')$. Hence, we also have that $T \circ (T' \cup T'') =  \Marc{\rho^{\mathcal{K}^{T'}_T}}(T) \cup T' \cup T''$. Consequently, we have by Condition 3 of Definition~\ref{revision based in relaxation} that \Marc{$\mathcal{K}^{T''}_T \leq \mathcal{K}^{T'}_T$}. Hence, there exists \Marc{$\mathcal{K}'' \geq \mathcal{K}^{T''}_T$} such that $\mathcal{K}'' < \Marc{\mathcal{K}^{T'}_T}$ and $\mathcal{M} \in Mod(\rho^{\mathcal{K}''}(T))$. We can then deduce that $\rho^{\mathcal{K}''}(T) \cup T'$ is consistent, and then by Condition 2 of Definition~\ref{revision based in relaxation} we have that $\sum \Marc{\mathcal{K}^{T'}_T} \leq \sum \mathcal{K}''$, which is a contradiction. 
\end{itemize}
Finally, to prove the last point, we follow the same steps as in the proof of Theorem~\ref{correspondence agm and orders}. 
\end{proof}



\subsection{Applications}
\label{applications}

In this section, we illustrate our general approach by defining revision operators based on relaxations for the logics {\bf PL}, {\bf HCL}, and {\bf FOL}. 
\Isa{We further develop the case of DLs in Section~\ref{sec:theoryrelaxation}}, by defining several concrete relaxation operators for different fragments of the DL $\mathcal{ALC}$.

\subsubsection{Revision in {\bf PL}}

Here, drawing inspiration from Bloch \& al.'s works in~\cite{BL02,BPU04} on Morpho-Logics, we define relaxations based on dilations from mathematical morphology~\cite{BHR06}. It is well established in {\bf PL} that knowing a formula is equivalent to knowing the set of its models. Hence, we can identify any propositional formula $\varphi$ with the set of its interpretations $Mod(\varphi)$. To define relaxations in {\bf PL}, we will apply set-theoretic morphological operations. First, let us recall basic definitions of dilation in mathematical morphology~\cite{BHR06}. Let $X$ and $B$ be two subsets of $\mathbb{R}^n$. The dilation of $X$ by the structuring element $B$, denoted by $D_B(X)$, is defined as follows:
$$D_B(X) = \{x \in \mathbb{R}^n \mid B_x \cap X \neq \emptyset\}$$
where $B_x$ denotes the translation of $B$ at $x$. 
More generally, dilations in any space can be defined in a similar way by considering the structuring element as a binary relationship between elements of this space.

In {\bf PL}, this leads to the following dilation of a formula $\varphi \in Sen$:
$$Mod(D_B(\varphi)) = \{\nu \in  Mod(\Sigma) \mid B_\nu \cap Mod(\varphi) \neq \emptyset\}$$
where $B_\nu$ contains all the models that satisfy some relationship with $\nu$. The relationship standardly used is based on a discrete distance $\delta$ between models, and the most commonly used is the Hamming distance $d_H$ where $d_H(\nu,\nu')$ for two propositional models over a same signature is the number of propositional symbols that are instantiated differently in $\nu$ and $\nu'$. From any distance $\delta$ between models, a distance from models to a formula is derived as follows: $d(\nu,\varphi) = min_{\nu' \models \varphi}\delta(\nu,\nu')$. In this case, we can rewrite the dilation of a formula as follows:
$$Mod(D_B(\varphi)) = \{\nu \in  Mod(\Sigma) \mid d(\nu,\varphi) \leq 1\}$$
This consists in using the distance ball of radius 1 as structuring element. To ensure the exhaustivity condition to our relaxation, we need to add a condition on distances, the {\em betweenness property}~\cite{DAB14a}. 

\medskip
\begin{definition}[Betweenness property]
Let $\delta$ be a discrete distance over a set $S$. $\delta$ has the {\bf betweenness property} if for all $x,y \in S$ and all $k \in \{0,1,\ldots,\delta(x,y)\}$, there exists $z \in S$ such that 
$\delta(x,z) = k$ and $\delta(z,y) = \delta(x,y) - k$. 
\end{definition}
The Hamming distance trivially satisfies the betweenness property. The interest for our purpose of this property is that it allows from any model to reach any other one, and then ensuring the exhaustivity property of relaxation~\footnote{Hence, dilation of formulas could also be defined by using a distance ball of radius $n$ as structuring element~\cite{BL02}.}.  

\medskip
\begin{proposition}
\label{relaxation for PL}
The dilation $D_B$ is a relaxation when it is applied to formulas $\varphi \in Sen$ 
for a finite signature,
and it is based on a distance between models that satisfies the betweenness property. 
\end{proposition}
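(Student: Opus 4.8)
The plan is to verify the two defining properties of a relaxation (Definition~\ref{relaxation}) for the map $\varphi \mapsto D_B(\varphi)$, where $D_B$ is dilation by the unit ball of a distance $\delta$ on the (finite) set of models. First I would settle \textbf{extensivity}: for any $\nu \models \varphi$ we have $\delta(\nu,\nu) = 0 \leq 1$, so $d(\nu,\varphi) = \min_{\nu' \models \varphi}\delta(\nu,\nu') = 0 \leq 1$, hence $\nu \in Mod(D_B(\varphi))$; thus $Mod(\varphi) \subseteq Mod(D_B(\varphi))$. This uses only that $\delta$ is a distance (reflexivity $\delta(x,x)=0$) and that the structuring element is the ball of radius $1$.

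The substantive part is \textbf{exhaustivity}: I must exhibit $k \in \mathbb{N}$ with $Mod(\rho^k(\varphi)) = Mod$. The key observation is that iterating the dilation enlarges the ball radius: by induction on $k$, $Mod(D_B^{k}(\varphi)) = \{\nu \mid d(\nu,\varphi) \leq k\}$. The induction step follows because $d(\nu, D_B^{k-1}(\varphi)) \leq 1$ together with the betweenness property (or directly a triangle-type argument via betweenness) forces $d(\nu,\varphi) \leq k$, and conversely a model at distance $\leq k$ from $\varphi$ is, by betweenness applied along a $\varphi$-realizing chain, within distance $1$ of a model at distance $\leq k-1$. Since the signature $\Sigma$ is finite, $Mod(\Sigma)$ is finite, so $\mathrm{diam}(\delta) = \max_{\nu,\nu'} \delta(\nu,\nu')$ is a finite number $N$; taking $k = N$ (or, if one wants a uniform $k$ not depending on $\varphi$, $k = N$ works for every nonempty $\varphi$) gives $d(\nu,\varphi) \leq N$ for all $\nu$, hence $Mod(D_B^{N}(\varphi)) = Mod(\Sigma) = Mod$. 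One should note the edge case $\varphi$ unsatisfiable, i.e. $Mod(\varphi) = \emptyset$: then $d(\nu,\varphi) = \min \emptyset = +\infty$ for all $\nu$, so $D_B$ never escapes the empty model set and exhaustivity fails; so the statement must be read as applying to (satisfiable) formulas, or the convention $\min\emptyset$ must be handled — I would flag this and assume $\varphi$ satisfiable, consistently with how dilation of formulas is used.

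I expect the \textbf{main obstacle} to be the careful use of the betweenness property to prove the radius-additivity identity $Mod(D_B^{k}(\varphi)) = \{\nu \mid d(\nu,\varphi)\leq k\}$ cleanly: betweenness is exactly what lets a geodesic-type decomposition go through, so that composing two unit-ball dilations yields the radius-$2$ ball rather than merely something contained in it. The rest — extensivity, finiteness giving a bound on the diameter, and the Hamming distance satisfying betweenness (already noted in the text) — is routine. Once the identity is established, exhaustivity is immediate from finiteness of $Mod(\Sigma)$, and the proposition follows.
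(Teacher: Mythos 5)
Your proof takes essentially the same route as the paper's, which simply notes that extensivity follows from $d(\nu,\varphi)=0$ and that exhaustivity follows from the finiteness of the signature together with the betweenness property; your radius-additivity identity $Mod(D_B^k(\varphi))=\{\nu \mid d(\nu,\varphi)\leq k\}$ is exactly the detail the paper leaves implicit. Your remark about unsatisfiable $\varphi$ (where $Mod(\varphi)=\emptyset$, so $d(\nu,\varphi)$ is undefined or infinite and the dilation never escapes the empty model set) is a genuine point that the paper's one-line proof silently ignores, and the restriction to satisfiable formulas you propose is the right reading.
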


\begin{proof}
It is extensive. Indeed, for every $\varphi$ and for every model $\nu \in Mod(\varphi)$, we have that $d(\nu,\varphi) = 0$, and then $\varphi \models D_B(\varphi)$. Exhaustivity results from the fact that the considered signature is a finite set and from the betweenness property. 
\end{proof}

\subsubsection{Revision in {\bf HCL}}

Many works have focused on belief revision involving propositional Horn formulas (cf. \cite{DP15} to have an overview on these works). Here, we propose to extend relaxations that we  have defined in the framework of {\bf PL} to deal with the Horn fragment of propositional theories. First, let us introduce some notions.

\medskip
\begin{definition}[Model intersection]
Given a propositional signature $\Sigma$ and two $\Sigma$-models $\nu,\nu': \Sigma \to \{0,1\}$, we note $\nu \cap \nu' : \Sigma \to \{0,1\}$ the $\Sigma$-model defined by: 
$$p \mapsto 
\left\{ 
\begin{array}{ll}
1 & \mbox{if $\nu(p) = \nu'(p) = 1$} \\
0 & \mbox{otherwise}
\end{array}
\right.$$
Given a set of $\Sigma$-models $\mathcal{S}$, we note 
$$cl_\cap(\mathcal{S}) = \mathcal{S} \cup \{\nu \cap \nu' \mid \nu,\nu' \in \mathcal{S}\}$$
\end{definition}
$cl_\cap(\mathcal{S})$ is then the closure of $\mathcal{S}$ under intersection of positive atoms. 

It is well-known that for any set $\mathcal{S}$ closed under intersection of positive atoms, there exists a Horn sentence $\varphi$ that defines $\mathcal{S}$ (i.e. $Mod(\varphi) = \mathcal{S}$). Given a distance $\delta$ between models, we then define a relaxation $\rho$ as follows: for every Horn formula $\varphi$, $\rho(\varphi)$ is any Horn formula $\varphi'$ such that $Mod(\varphi') = cl_\cap(Mod(D_B(\varphi))$ (by the previous property, we know that such a formula $\varphi'$ exists). 

\medskip
\begin{proposition}
With the same conditions as in Proposition~\ref{relaxation for PL}, the mapping $\rho$ is a relaxation. 
\end{proposition}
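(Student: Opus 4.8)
The goal is to show that $\rho$, defined by $Mod(\rho(\varphi)) = cl_\cap(Mod(D_B(\varphi)))$, is a relaxation in the sense of Definition~\ref{relaxation}; that is, it must satisfy \textbf{Extensivity} and \textbf{Exhaustivity}. The natural strategy is to leverage what we already know from Proposition~\ref{relaxation for PL}: the dilation $D_B$ is itself a relaxation on the full propositional satisfaction system, and to lift its two defining properties through the closure operator $cl_\cap$, using the monotonicity $\mathcal{S} \subseteq cl_\cap(\mathcal{S})$ and, for exhaustivity, the fact that $cl_\cap(Mod) = Mod$.

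First I would check \textbf{Extensivity}. For every Horn formula $\varphi$ we have $Mod(\varphi) \subseteq Mod(D_B(\varphi))$ by the extensivity half of Proposition~\ref{relaxation for PL} (concretely, every $\nu \models \varphi$ satisfies $d(\nu,\varphi) = 0 \leq 1$). Since $Mod(D_B(\varphi)) \subseteq cl_\cap(Mod(D_B(\varphi))) = Mod(\rho(\varphi))$ directly from the definition of $cl_\cap$, we get $Mod(\varphi) \subseteq Mod(\rho(\varphi))$, which is exactly extensivity. Note that $\rho(\varphi)$ is well-defined as a Horn formula precisely because $cl_\cap(Mod(D_B(\varphi)))$ is closed under positive-atom intersection, so the cited representability fact applies.

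Next, \textbf{Exhaustivity}. Here I would first argue that $\rho$ dominates $D_B$ in the sense that $Mod(D_B^k(\varphi)) \subseteq Mod(\rho^k(\varphi))$ for all $k$, proved by induction on $k$: the base case is trivial, and for the step one uses that $D_B$ is monotone on model classes (larger model class $\Rightarrow$ larger dilated model class, which follows from the distance-ball description) together with $cl_\cap$ being monotone and inflationary, so $Mod(D_B^{k+1}(\varphi)) = Mod(D_B(D_B^k(\varphi))) \subseteq Mod(D_B(\rho^{k}(\varphi))) \subseteq cl_\cap(Mod(D_B(\rho^k(\varphi)))) = Mod(\rho^{k+1}(\varphi))$. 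By the exhaustivity of $D_B$ (Proposition~\ref{relaxation for PL}, using finiteness of $\Sigma$ and the betweenness property) there is a $k$ with $Mod(D_B^k(\varphi)) = Mod$, and then $Mod \subseteq Mod(\rho^k(\varphi)) \subseteq Mod$, so $Mod(\rho^k(\varphi)) = Mod$, as required.

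The step I expect to need the most care is the monotonicity claim $\mathbb{M} \subseteq \mathbb{M}' \Rightarrow Mod(D_B(\mathbb{M})) \subseteq Mod(D_B(\mathbb{M}'))$ used inside the induction, i.e. making precise that the dilation construction depends only on the model class and is inclusion-preserving; this is immediate from the ball characterization $\{\nu \mid d(\nu,\cdot) \leq 1\}$ since $d(\nu,\cdot)$ is a minimum over the model class and shrinks as the class grows. A secondary subtlety is that $\rho^k(\varphi)$ involves iterated applications of the $cl_\cap$-closure, and one should be slightly careful that each intermediate $\rho^j(\varphi)$ is a genuine Horn formula so that the next application of $\rho$ is legal — but this is guaranteed inductively by the same representability fact. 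Everything else is a routine unwinding of definitions.
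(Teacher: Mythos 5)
Your proof is correct, and it supplies exactly the argument the paper leaves implicit (the paper states this proposition without proof): extensivity follows because $cl_\cap$ is inflationary on model classes, and exhaustivity follows from the exhaustivity of $D_B$ via the inductive domination $Mod(D_B^k(\varphi)) \subseteq Mod(\rho^k(\varphi))$, whose key ingredient --- monotonicity of dilation with respect to inclusion of model classes --- you correctly identify and justify. Your attention to the well-definedness of $\rho^j(\varphi)$ as a Horn formula at each stage (via the representability of intersection-closed model classes) is also the right point to flag, and is as rigorous as the paper's own definition of $cl_\cap$ permits.
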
 

\subsubsection{Revision in {\bf FOL}}
\label{revision in FOL}

A trivial way to define a relaxation in {\bf FOL} is to map any formula to a tautology. A less trivial and more interesting relaxation is to change universal quantifiers to existential ones. Indeed, given a formula $\varphi$ of the form $\forall x. \psi$. If $\varphi$ is not consistent with a given theory $T$, $\exists x. \psi$ may be consistent with $T$ (if it cannot be consistent for all values, it can be for some of them). In the following we suppose that given a signature, every formula $\varphi \in Sen$ is a disjunction of formulas in prenex form (i.e. $\varphi$ is of the form $\bigvee_{j}Q^j_1x^j_1 \ldots Q^j_{n_j} x^j_{n_j}. \psi_j$ where each $Q^j_i$ is in $\{\forall,\exists\}$). Let us define the relaxation $\rho$ as follows, for a tautology $\tau$:
\begin{itemize}
\item $\rho(\tau) = \tau$;
\item $\rho(\exists_1 x_1 \ldots \exists_n x_n. \varphi) = \tau$;
\item Let $\varphi = Q_1x_1 \ldots Q_n x_n. \psi$ be a formula such that the set $E_\varphi = \{i, 1 \leq i \leq n \mid Q_i = \forall\} \neq \emptyset$. Then, $\rho(Q_1x_1 \ldots Q_n x_n. \varphi) = \bigvee_{i \in E_\varphi} \varphi_i$ where $\varphi_i = Q'_1x_1 \ldots Q'_n x_n.\psi$ such that for every $j \neq i$, $1 \leq j \leq n$, $Q'_j = Q_j$ and $Q'_i = \exists$;
\item $\rho(\bigvee_{j}Q^j_1x^j_1 \ldots Q^j_{n_j} x^j_{n_j}. \psi) = \bigvee_{j}\rho(Q^j_1x^j_1 \ldots Q^j_{n_j} x^j_{n_j}. \psi)$.
\end{itemize}

\medskip
\begin{proposition}
$\rho$ is a relaxation. 
\end{proposition}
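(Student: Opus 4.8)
The plan is to verify the two defining conditions of a relaxation from Definition~\ref{relaxation}, namely \textbf{Extensivity} and \textbf{Exhaustivity}, for the map $\rho$ defined on disjunctions of prenex formulas. Since $\rho$ is defined by cases following the syntactic structure (tautology; purely existential prefix; prefix containing at least one $\forall$; and a disjunction built from these), I would treat the conditions clause by clause.

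For \textbf{Extensivity}, I must show $Mod(\varphi)\subseteq Mod(\rho(\varphi))$ for every $\varphi\in Sen$. The tautology and purely-existential cases are immediate: $Mod(\tau)=Mod$, and $Mod(\exists_1x_1\ldots\exists_nx_n.\psi)\subseteq Mod = Mod(\tau)$. For the key case $\varphi=Q_1x_1\ldots Q_nx_n.\psi$ with some $Q_i=\forall$, the crucial sublemma is that weakening a single universal quantifier into an existential one only enlarges the model class: if a model satisfies $\forall x_i.(\ldots)$ over a nonempty domain, it satisfies $\exists x_i.(\ldots)$; formally, replacing $Q_i=\forall$ by $Q'_i=\exists$ while keeping the other quantifiers fixed gives $Mod(\varphi)\subseteq Mod(\varphi_i)$, whence $Mod(\varphi)\subseteq\bigcup_{i\in E_\varphi}Mod(\varphi_i)=Mod(\rho(\varphi))$. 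For a disjunction $\bigvee_j\chi_j$, since $Mod(\bigvee_j\chi_j)=\bigcup_j Mod(\chi_j)$ and $\rho$ distributes over the disjunction, extensivity follows from the componentwise inclusions $Mod(\chi_j)\subseteq Mod(\rho(\chi_j))$.

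For \textbf{Exhaustivity}, I must exhibit $k\in\mathbb{N}$ with $Mod(\rho^k(\varphi))=Mod$. The idea is that each application of $\rho$ either reaches a tautology or strictly decreases the number of universal quantifiers in some disjunct while never increasing it in any disjunct, and once a prenex disjunct has no universal quantifier it is sent to $\tau$ at the next step. Concretely, for a single prenex formula with $m$ universal quantifiers, after at most $m$ applications of $\rho$ every disjunct appearing is either $\tau$ or a purely existential prenex formula, and one more application sends all non-$\tau$ disjuncts to $\tau$; hence $\rho^{m+2}(\varphi)$ is equivalent to $\tau$ and $Mod(\rho^{m+2}(\varphi))=Mod$. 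For a disjunction $\bigvee_j Q^j_1x^j_1\ldots Q^j_{n_j}x^j_{n_j}.\psi_j$, taking $m$ to be the maximum number of universal quantifiers over the disjuncts, $k=m+2$ works by the same argument applied to each disjunct in parallel, using that $\rho$ distributes over the top-level disjunction so that $\rho^k(\bigvee_j\chi_j)=\bigvee_j\rho^k(\chi_j)$.

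The main obstacle is making the termination/counting argument for exhaustivity fully rigorous: I need a well-chosen complexity measure (for instance, the multiset of per-disjunct universal-quantifier counts, or simply the maximum such count) and a clean verification that $\rho$ strictly decreases it until all disjuncts are reduced to $\tau$, together with the bookkeeping that $\rho$ genuinely commutes with the outer disjunction at every iteration so that the parallel reduction of disjuncts is valid. A minor point to handle carefully is the implicit nonemptiness of first-order domains (standard in {\bf FOL}), which is exactly what licenses the inclusion $Mod(\forall x_i.\theta)\subseteq Mod(\exists x_i.\theta)$ used in the extensivity step; without it the universal-to-existential weakening would fail on empty structures.
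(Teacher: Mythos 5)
Your proof is correct and follows the same route as the paper, which simply asserts extensivity as obvious and notes that exhaustivity holds because a tautology is reached in finitely many steps; you supply the details (the $\forall$-to-$\exists$ weakening over nonempty domains for extensivity, and the decreasing count of universal quantifiers per disjunct for exhaustivity) that the paper leaves implicit. Your remark about nonempty domains being what licenses $Mod(\forall x_i.\theta)\subseteq Mod(\exists x_i.\theta)$ is a legitimate point the paper's one-line proof glosses over.
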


\begin{proof}
It is obviously extensive, and exhaustivity results from the fact that in a finite number of steps, we always reach the tautology $\tau$. 
\end{proof}

\section{Relaxation of theories and  associated revision operator in DL}
\label{sec:theoryrelaxation}
%
%

Our idea to define revision operators is to relax the set of models of the old belief until it becomes consistent with the new pieces of knowledge. This is illustrated in Figure~\ref{fig:relaxation} where theories are represented as sets of their models. Intermediate steps to define the revision operators are then the definition of  formula  and theory relaxations.  The whole scheme of our framework is provided in Figure~\ref{fig:structure}. 

\begin{figure}[htbp]
  \newlength{\ddx}
  \newlength{\ddy}
  \begin{center}
    \begin{tikzpicture}
  \setlength{\ddx}{0.6cm}
  \setlength{\ddy}{.125cm}
  \newlength{\shift}
  \setlength{\shift}{.5cm}
  \newlength{\cshift}
  \setlength{\cshift}{.15cm}
  \path[draw, fill, fill opacity = 0.3] (\ddx, -6.5\ddy) ..
    controls (1.5\ddx, -6.5\ddy) and (1.5\ddx, -9\ddy)
    .. (1.5\ddx, -10\ddy) ..
    controls (1.5\ddx, -12\ddy) and (1.5\ddy, -12\ddy)
    .. (0, -12\ddy) ..
    controls (-\ddx, -12\ddy) and (-\ddx, -12\ddy)
    .. (-\ddx, -10\ddy) ..
    controls (-\ddx, -9\ddy) and (-\ddx, -6.5\ddy)
    .. (\ddx, -6.5\ddy);
  \node at (-1.5\ddx, -10\ddy) {$T'$};
  \path[draw, fill, fill opacity = 0.1] (0, 0) ..
    controls (.5\ddx, 0) and (.5\ddx, 2\ddy)
    .. (\ddx, 2\ddy) ..
    controls (1.5\ddx, 2\ddy) and (2\ddx, \ddy)
    .. (2\ddx, 0) ..
    controls (2\ddx, -\ddy) and (\ddx, -3\ddy)
    .. (0, -3\ddy) ..
    controls (-\ddx, -3\ddy) and (-2\ddx, -2.5\ddy)
    .. (-2\ddx, -\ddy) ..
    controls (-2\ddx, 0) and (-1.5\ddx, \ddy)
    .. (-\ddx, \ddy) ..
    controls (-.5\ddx, \ddy) and (-.5\ddx, 0)
    .. (0, 0);
  \node at (-1.1\ddx, 1.7\ddy) {$T$};
  \path[draw, dashed] (0, \shift) ..
    controls (.5\ddx - \cshift, \shift) and (.5\ddx - \cshift, 2\ddy + \shift)
    .. (\ddx, 2\ddy + \shift) ..
    controls (1.5\ddx + \cshift, 2\ddy + \shift) and
      (2\ddx + \shift, \ddy + 2\cshift)
    .. (2\ddx + \shift, 0) ..
    controls (2\ddx + \shift, -\ddy - 2\cshift) and
      (\ddx + 2\cshift, -3\ddy - \shift)
    .. (0, -3\ddy - \shift) ..
    controls (-\ddx - 2\cshift, -3\ddy - \shift) and
      (-2\ddx - \shift, -2.5\ddy - 2\cshift)
    .. (-2\ddx - \shift, -\ddy) ..
    controls (-2\ddx - \shift, 2\cshift) and
      (-1.5\ddx - \cshift, \ddy + \shift)
    .. (-\ddx, \ddy + \shift) ..
    controls (-.5\ddx - \cshift, \ddy + \shift) and (-.5\ddx, \shift)
    .. (0, \shift);
  \node at (-1.1\ddx, 1.7\ddy + \shift) {$\rho^{\K^1}(T)$};
  \setlength{\cshift}{.3cm}
  \setlength{\shift}{1cm}
  \path[draw, dashed] (0, \shift) ..
    controls (.5\ddx - \cshift, \shift) and (.5\ddx - \cshift, 2\ddy + \shift)
    .. (\ddx, 2\ddy + \shift) ..
    controls (1.5\ddx + \cshift, 2\ddy + \shift) and
      (2\ddx + \shift, \ddy + 2\cshift)
    .. (2\ddx + \shift, 0) ..
    controls (2\ddx + \shift, -\ddy - 2\cshift) and
      (\ddx + 2\cshift, -3\ddy - \shift)
    .. (0, -3\ddy - \shift) ..
    controls (-\ddx - 2\cshift, -3\ddy - \shift) and
      (-2\ddx - \shift, -2.5\ddy - 2\cshift)
    .. (-2\ddx - \shift, -\ddy) ..
    controls (-2\ddx - \shift, 2\cshift) and
      (-1.5\ddx - \cshift, \ddy + \shift)
    .. (-\ddx, \ddy + \shift) ..
    controls (-.5\ddx - \cshift, \ddy + \shift) and (-.5\ddx, \shift)
    .. (0, \shift);
  \node at (-2.1\ddx, 1\ddy + \shift) {$\rho^{\K^2}(T)$};
\end{tikzpicture}
  \end{center}
  \caption{Relaxations of $T$ until it becomes consistent with $T'$. \label{fig:relaxation}}
\end{figure}
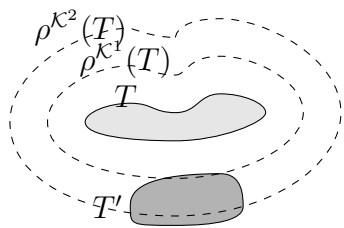

{\footnotesize
 \begin{figure}[htbp]
  \begin{center}
    \tikzstyle{block} = [rectangle, draw, fill=black!10,
                         rounded corners, minimum height=3em]
    \resizebox{\linewidth}{!}{
      \begin{tikzpicture}[node distance = 5cm, auto, , align=center]
        \node[block, text width=3cm] (metric)
          {Concept relaxation \\ $\rho \colon \mathsf{C} \to \mathsf{C}$\\
          Concept retraction \\
          $\kappa \colon \mathsf{C}  \to \mathsf{C}$};
        \node[block, text width=3.9cm, right of = metric] (dilation)
          {Formula relaxation \\ $\rho_F \colon  Sen \to Sen$};
        \node[block, text width=3cm, right of = dilation] (relaxation)
          {Theory relaxation \\ 
          $\rho^{\cal K}$};
       \node[block, text width=3.6cm, right of = relaxation] (dissimilarity)
          {Revision \\ $\circ \colon \mathcal{P}(Sen) \times \mathcal{P}(Sen)  \to \mathcal{P}(Sen)$};
        \path[->]
          (metric) edge node[below] { Def.\\\ref{def:formula_dilation},\ref{def:formula_erosion} } (dilation)
          (dilation) edge node[below] {Not.\\ \ref{not:theoryRel}} (relaxation)
          (relaxation) edge node[below] {Def.\\ \ref{revision based in relaxation}\\ Th.\\ \ref{AGM satisfied}}
            (dissimilarity);
      \end{tikzpicture}
    }
  \end{center}
  \caption{From concept relaxation and retraction to  revision operators in DL.\label{fig:structure}}
\end{figure}
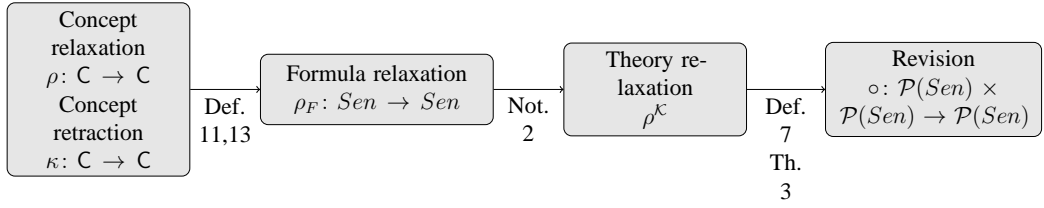
}

\subsection{Concept relaxation}

As already explained in Section~\ref{Relaxation and AGM postulates}, relaxation has been introduced in~\cite{DAB14a,DAB14b}. It has been first defined over concepts, and then extended to formulas. In~\cite{DAB14a,DAB14b}, concept relaxation is defined as follows:

\medskip
\begin{definition}[Concept relaxation]
\label{concept relaxation}
Given a signature $(N_C,N_R,I)$, we note $\mathsf{C}$ the set of concepts over this signature. A {\bf concept relaxation} $\rho : \mathsf{C} \to \mathsf{C}$ is a mapping that satisfies, \Isa{for all $C$ in $\mathsf{C}$}:
\begin{enumerate}
\item $C \sqsubseteq \rho(C)$
\item $\exists k \in \mathbb{N}, \top \sqsubseteq \rho^k(C)$
\end{enumerate}
\end{definition}
\Isa{Note that the non-decreasingness property in the original definition of a concept relaxation in~\cite{DAB14b} is removed here, since it is not needed in our construction.}

A trivial concept relaxation is the operation $\rho_\top$ that maps every concept $C$ to $\top$. Other non-trivial concrete concept relaxations such as the one that changes universal quantifiers to existential ones as in {\bf FOL} \Isa{will be detailed next.}


\subsection{Formula relaxation and theory relaxation}

A formula relaxation $\rho_F$ in DL is defined as in Definition~\ref{relaxation}. From a formula relaxation, we can define a theory relaxation $\rho^{\mathcal{K}}$ as in Notation~\ref{not:theoryRel}.

\Isa{In the satisfaction system DL, let $\rho$ be a relaxation for DL. Then from $\rho$ we define a revision operator as in Definition~\ref{revision based in relaxation}. According to Theorem~\ref{AGM satisfied}, it satisfies the AGM postulates.}

\Isa{As mentioned above,} Condition 3 in 
\Isa{Definition~\ref{revision based in relaxation}} may be considered as too strong in many real world applications. \Isa{This may be the case in particular} in the context of ontological engineering, where one may want to change only one axiom (or a limited number of axioms) instead of the whole theory. We will come back to this point when we will introduce a first example to illustrate relaxation operators  in the DL \EL{}.

In the following, we introduce concrete relaxation operators suited to the syntax of  the logic $\ALC{}$, as defined in Section~\ref{sec:BasicDef}, and its fragments $\EL{}$ and $\ELext{}$. $\EL{}$-concept description constructors are existential restriction ($\exists$), conjunction ($\sqcap$), $\top$ and $\bot$, while $\ELext{}$-concept constructors are those of $\EL{}$ enriched with disjunction ($\sqcup$).

\subsection{Abstract relaxation and retraction operators}
We propose to define a formula relaxation in two ways (other definitions may also exist). For sentences of the form $C\sqsubseteq D$, the first proposed approach consists in relaxing the set of models of $D$ while the second one amounts to  ``retract'' the set of models of $C$.  

From any concept relaxation $\rho$, we can define 1 relaxation on formulas
We suppose that any signature $(N_C,N_R,I)$ always contains in $N_R$ a relation name $r_\top$ the meaning of which is in any model $\mathcal{O}$, $r_\top^\mathcal{O} = \Delta^\mathcal{O} \times \Delta^\mathcal{O}$.
%
%
%

\medskip
\begin{definition}[Formula relaxation based on concept relaxation]
\label{def:formula_dilation}
Let $\rho$  a concept relaxation as in Definition~\ref{concept relaxation}. \Isa{A \emph{\textbf{formula relaxation based on $\rho$}}, denoted $\presuper{r}{\rho_F}$ is defined as follows, for any two complex concepts $C$ and $D$, any individuals $a, b$, and any role $r$:}
\[
\begin{aligned}
\presuper{r}{\rho_F}(C\sqsubseteq D) &\equiv C \sqsubseteq \rho(D), \\
\presuper{r}{\rho_F}(a: C) &\equiv a : \rho(C), \\
\presuper{r}{\rho_F}(\langle a,b\rangle : r)) &\equiv \langle a,b\rangle : r_\top .
\end{aligned}
\]
\end{definition}

\medskip
\begin{proposition}
$\presuper{r}{\rho_F}$ is a formula relaxation, i.e. extensive and exhaustive.
\end{proposition}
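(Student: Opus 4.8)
The plan is to verify the two defining properties of a formula relaxation (Definition~\ref{relaxation}), namely extensivity and exhaustivity, separately for each of the three syntactic forms of sentences in DL: $C \sqsubseteq D$, $a : C$, and $\langle a,b\rangle : r$. The key observation to carry throughout is that the concept relaxation $\rho$ already satisfies $C \sqsubseteq \rho(C)$ and $\exists k,\ \top \sqsubseteq \rho^k(C)$, so the work consists in transporting these concept-level facts to the level of models of sentences.

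\medskip
\noindent\textbf{Extensivity.} For a sentence $C \sqsubseteq D$, I would take any $\mathcal{I} \in Mod(C \sqsubseteq D)$, so $C^\mathcal{I} \subseteq D^\mathcal{I}$; since $\rho$ is a concept relaxation, $D \sqsubseteq \rho(D)$ holds in every model, hence $D^\mathcal{I} \subseteq \rho(D)^\mathcal{I}$, so $C^\mathcal{I} \subseteq \rho(D)^\mathcal{I}$, i.e. $\mathcal{I} \models C \sqsubseteq \rho(D)$. For $a : C$, if $\mathcal{I} \models a : C$ then $a^\mathcal{I} \in C^\mathcal{I} \subseteq \rho(C)^\mathcal{I}$, so $\mathcal{I} \models a : \rho(C)$. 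For $\langle a,b\rangle : r$, extensivity is immediate because $r_\top^\mathcal{I} = \Delta^\mathcal{I} \times \Delta^\mathcal{I}$ by the standing assumption on $r_\top$, so $Mod(r_\top)$ is the class of all models and a fortiori contains $Mod(\langle a,b\rangle : r)$.

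\medskip
\noindent\textbf{Exhaustivity.} For $\langle a,b\rangle : r$ this is trivial: $\presuper{r}{\rho_F}(\langle a,b\rangle : r) = \langle a,b\rangle : r_\top$ and $Mod(\langle a,b\rangle : r_\top) = Mod$ already at $k=1$. For $a : C$, note that $(\presuper{r}{\rho_F})^k(a:C) = a : \rho^k(C)$ by an easy induction, and since $\rho$ is a concept relaxation there is $k$ with $\top \sqsubseteq \rho^k(C)$, i.e. $\rho^k(C)^\mathcal{I} = \Delta^\mathcal{I}$ in every $\mathcal{I}$, so $Mod(a : \rho^k(C)) = Mod$. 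For $C \sqsubseteq D$, similarly $(\presuper{r}{\rho_F})^k(C \sqsubseteq D) = C \sqsubseteq \rho^k(D)$; choosing $k$ with $\top \sqsubseteq \rho^k(D)$ gives $\rho^k(D)^\mathcal{I} = \Delta^\mathcal{I} \supseteq C^\mathcal{I}$ in every $\mathcal{I}$, hence $Mod(C \sqsubseteq \rho^k(D)) = Mod$.

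\medskip
The main (mild) obstacle is purely bookkeeping: one must check that iterating $\presuper{r}{\rho_F}$ commutes with iterating $\rho$ on the embedded concept, which requires a one-line induction, and one must be careful that the exhaustivity exponent $k$ may depend on the sentence (this is allowed, since Definition~\ref{relaxation} quantifies $k$ per formula). There is no genuine difficulty, since the definition of $\presuper{r}{\rho_F}$ was engineered precisely so that the concept-level axioms of Definition~\ref{concept relaxation} lift directly.
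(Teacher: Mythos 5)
Your proof is correct and follows the same route as the paper, which simply states that the result follows directly from the extensivity and exhaustivity of $\rho$; you have merely filled in the case-by-case details (the three sentence forms, the commutation of iteration with $\rho$, and the per-formula choice of $k$) that the paper leaves implicit. No gaps.
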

\begin{proof}
It directly follows from the extensivity and exhaustivity of $\rho$.
\end{proof}

Another strategy for defining a formula relaxation consists in retracting the concept in the left hand side of a sentence of the form $C\sqsubseteq D$. Before providing this definition  we need to formalize this notion of retraction, which could  be seen as an anti-relaxation.

\medskip\begin{definition}[Concept retraction]
\label{concepterosion}
A \emph{\textbf{(concept) retraction}} is an operator $\kappa \colon \mathsf{C}
  \rightarrow \mathsf{C}$ that satisfies the following two properties for all
  $C$ in $\mathsf{C}$:
  \begin{enumerate}
    \item $\kappa$ is \emph{anti-extensive}, i.e.\ $\kappa(C) \sqsubseteq C $, and 
    \item $\kappa$ is \emph{exhaustive}, i.e. $\forall D\in \mathsf{C}, \exists k \in \mathbb{N} \text{ such that } \kappa^k(C) \sqsubseteq D$.
  \end{enumerate}
\end{definition} 


\medskip
\begin{definition}[Formula relaxation based on concept retraction]
\label{def:formula_erosion}
\Isa{ A \emph{\bf formula relaxation based on a concept retraction $\kappa$}, denoted $\presuper{c}{\rho_F}$, is defined as follows, for any two complex concepts $C$ and $D$, any individuals $a, b$, and any role $r$:}
\[
\begin{aligned}
\presuper{c}{\rho_F}(C\sqsubseteq D) & \equiv \kappa(C) \sqsubseteq D,\\
\presuper{c}{\rho_F}(a: C) &\equiv a:\top,\\
\presuper{c}{\rho_F}(\langle a,b\rangle : r)) &\equiv \langle a,b\rangle : r_\top.
\end{aligned}
\]   
\end{definition}

A similar construction can be found in~\cite{QLB06} for sentences of the form $(a: C)$. 

\medskip
\begin{proposition}
$\presuper{c}{\rho_F}$ is a formula relaxation.
\end{proposition}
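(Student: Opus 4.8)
The plan is to verify the two defining properties of a formula relaxation from Definition~\ref{relaxation}, namely extensivity and exhaustivity, for $\presuper{c}{\rho_F}$, reducing each to the corresponding property of the underlying concept retraction $\kappa$. The proof is structurally parallel to the one for $\presuper{r}{\rho_F}$, but it is slightly more delicate because $\kappa$ is \emph{anti}-extensive on concepts, so I must check that this anti-extensivity on the left-hand side of a subsumption actually translates into extensivity at the level of models of the formula.

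First I would treat extensivity. Fix a sentence $\varphi$. If $\varphi$ is of the form $C \sqsubseteq D$, then $\presuper{c}{\rho_F}(\varphi) = \kappa(C) \sqsubseteq D$. Since $\kappa$ is anti-extensive we have $\kappa(C) \sqsubseteq C$, hence in every model $\mathcal{I}$, $\kappa(C)^\mathcal{I} \subseteq C^\mathcal{I}$; therefore $C^\mathcal{I} \subseteq D^\mathcal{I}$ implies $\kappa(C)^\mathcal{I} \subseteq D^\mathcal{I}$, i.e.\ $Mod(\varphi) \subseteq Mod(\presuper{c}{\rho_F}(\varphi))$. If $\varphi = a : C$ then $\presuper{c}{\rho_F}(\varphi) = a : \top$, which is satisfied by every model, so $Mod(\varphi) \subseteq Mod = Mod(a:\top)$. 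The case $\varphi = \langle a,b\rangle : r$ is identical, using that $r_\top^\mathcal{I} = \Delta^\mathcal{I}\times\Delta^\mathcal{I}$ in every model, so $\langle a,b\rangle : r_\top$ holds everywhere. Hence extensivity holds in all three cases.

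Next I would handle exhaustivity, i.e.\ exhibiting $k$ with $Mod(\presuper{c}{\rho_F}{}^{\,k}(\varphi)) = Mod$. For sentences $a:C$ and $\langle a,b\rangle : r$ this is immediate since already after one application the formula is a tautology (as $a:\top$ and $\langle a,b\rangle : r_\top$ are satisfied by every model), and applying $\presuper{c}{\rho_F}$ again leaves it unchanged. For $\varphi = C \sqsubseteq D$, iterating $\presuper{c}{\rho_F}$ gives $\presuper{c}{\rho_F}{}^{\,k}(C\sqsubseteq D) = \kappa^k(C) \sqsubseteq D$. By the exhaustivity of $\kappa$, applied with the distinguished concept $\bot$ (or, in fragments without $\bot$, with $D$ itself — indeed the definition of concept retraction quantifies over all $D \in \mathsf{C}$, so we may simply take the ambient $D$), there is $k \in \mathbb{N}$ with $\kappa^k(C) \sqsubseteq D$, meaning $\kappa^k(C)^\mathcal{I} \subseteq D^\mathcal{I}$ in every model $\mathcal{I}$, i.e.\ $\mathcal{I} \models \kappa^k(C)\sqsubseteq D$ for all $\mathcal{I}$, so $Mod(\presuper{c}{\rho_F}{}^{\,k}(\varphi)) = Mod$.

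The only mildly subtle point — and the one I would flag as the crux — is making sure the direction of anti-extensivity of $\kappa$ is the one that yields extensivity of the formula relaxation; once one notices that shrinking the \emph{antecedent} of a subsumption only weakens it, the argument is routine. A secondary nuance is choosing the right "target concept" in the exhaustivity clause of Definition~\ref{concepterosion}: because retraction is required to drive $C$ below \emph{any} concept $D$, we can simply use the right-hand side of the sentence being relaxed, so no extra bottom concept is needed even in $\EL$ or $\ELU$. I would therefore write the proof as: "Extensivity follows from the anti-extensivity of $\kappa$ together with the fact that $a:\top$ and $\langle a,b\rangle:r_\top$ are tautologies; exhaustivity follows from the exhaustivity of $\kappa$ (instantiated at the right-hand concept of each subsumption) and the observation that the other two sentence forms become tautologies after a single application."

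\begin{proof}
We check extensivity and exhaustivity (Definition~\ref{relaxation}).

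\emph{Extensivity.} Let $\varphi \in Sen$. If $\varphi = (C \sqsubseteq D)$, then $\presuper{c}{\rho_F}(\varphi) = (\kappa(C) \sqsubseteq D)$. Since $\kappa$ is anti-extensive, $\kappa(C) \sqsubseteq C$, so $\kappa(C)^\mathcal{I} \subseteq C^\mathcal{I}$ in every model $\mathcal{I}$; hence $C^\mathcal{I} \subseteq D^\mathcal{I}$ implies $\kappa(C)^\mathcal{I} \subseteq D^\mathcal{I}$, i.e.\ $Mod(\varphi) \subseteq Mod(\presuper{c}{\rho_F}(\varphi))$. If $\varphi = (a:C)$, then $\presuper{c}{\rho_F}(\varphi) = (a:\top)$, which is satisfied by every model, so $Mod(\varphi) \subseteq Mod = Mod(\presuper{c}{\rho_F}(\varphi))$. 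The case $\varphi = (\langle a,b\rangle : r)$ is analogous, since $r_\top^\mathcal{I} = \Delta^\mathcal{I}\times\Delta^\mathcal{I}$ in every model, so $\langle a,b\rangle : r_\top$ holds everywhere.

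\emph{Exhaustivity.} If $\varphi = (a:C)$ or $\varphi = (\langle a,b\rangle : r)$, then $\presuper{c}{\rho_F}(\varphi)$ is a tautology (equal to $a:\top$, resp.\ $\langle a,b\rangle : r_\top$), and $\presuper{c}{\rho_F}$ fixes it thereafter, so $Mod(\presuper{c}{\rho_F}{}^{\,1}(\varphi)) = Mod$. If $\varphi = (C\sqsubseteq D)$, then by induction $\presuper{c}{\rho_F}{}^{\,k}(\varphi) = (\kappa^k(C) \sqsubseteq D)$ for all $k$. By exhaustivity of $\kappa$ (Definition~\ref{concepterosion}), applied with the concept $D$, there is $k \in \mathbb{N}$ with $\kappa^k(C) \sqsubseteq D$; thus $\kappa^k(C)^\mathcal{I} \subseteq D^\mathcal{I}$ for every model $\mathcal{I}$, i.e.\ every model satisfies $\kappa^k(C) \sqsubseteq D$, whence $Mod(\presuper{c}{\rho_F}{}^{\,k}(\varphi)) = Mod$.
\end{proof}
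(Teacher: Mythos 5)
Your proof is correct and takes the same route as the paper, which simply states that extensivity and exhaustivity ``follow directly from the properties of $\kappa$''; you have filled in the details the paper omits. In particular, your observation that the exhaustivity clause of Definition~\ref{concepterosion} is universally quantified over the target concept and can therefore be instantiated at the right-hand side $D$ of the subsumption is exactly the point that makes the one-line argument go through.
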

\begin{proof}
\Isa{Extensivity and exhaustivity} follow directly from the properties of $\kappa$.
\end{proof}

For coming up with revision operators, it remains to define concrete relaxation and retraction operators at the concept level, according to our general schema in Figure~\ref{fig:structure}. Some examples of retraction and relaxation operators are given below for $\EL{}$ and $\mathcal{ELU}$, respectively.

\subsection{Relaxation and retraction in $\mathcal{EL}$}

\paragraph{$\EL{}$-Concept Retractions.}
A trivial concept retraction is the operator $\kappa_{\bot}$ that maps every concept to $\bot$. This operator is particularly interesting for debugging ontologies expressed in $\mathcal{EL}$~\cite{schlobach2007debugging}. Let us illustrate this operator through the following example adapted from~\cite{QLB06} to restrict the language to $\mathcal{EL}$.

\medskip
\begin{example}
\label{ex:tweety}
Let $T=\{\textsc{Tweety}\sqsubseteq \textsc{bird}, \textsc{bird}\sqsubseteq \textsc{flies}\}$ and $T'=\{\textsc{Tweety} \sqcap \textsc{flies} \sqsubseteq \bot\}$. Clearly $T \cup T'$ is inconsistent. The formula relaxation \Isa{based on the retraction $\kappa_{\bot}$} amounts to apply $\kappa_\bot$ to the concept $\textsc{Tweety}$ resulting in the following new knowledge base $\{\bot \sqsubseteq \textsc{bird}, \textsc{bird}\sqsubseteq \textsc{flies}\}$ which is now consistent with $T'$. An alternative solution is to retract the concept $\textsc{bird}$ in $\textsc{bird}\sqsubseteq \textsc{flies}$ which results in the following knowledge base $\{\textsc{Tweety} \sqsubseteq \textsc{bird}, \bot \sqsubseteq \textsc{flies}\}$ which is also consistent with $T'$. The  sets of minimal sum $\mathcal{K}_1$ and $\mathcal{K}_2$  in Condition 2 of \Isa{Definition~\ref{revision based in relaxation}} are $\mathcal{K}_1=\{1,0\}$, (i.e. $k_{\varphi_1}=1, k_{\varphi_2}=0$, where $\varphi_1=\textsc{Tweety}\sqsubseteq \textsc{bird}, \varphi_2= \textsc{bird}\sqsubseteq \textsc{flies}$) and  $\mathcal{K}_2=\{0,1\}$. However, to ensure Condition 3 of the same \Isa{definition}, we must relax one more time the  axioms in $T$ leading to the following knowledge base $\{\bot \sqsubseteq \textsc{bird}, \bot\sqsubseteq \textsc{flies}\}$. The final revision then writes $T\circ T'=\{\bot \sqsubseteq \textsc{bird}, \bot\sqsubseteq \textsc{flies},\textsc{Tweety} \sqcap \textsc{flies} \sqsubseteq \bot \}$.  This revision satisfies the AGM postulates (G1-G6) but may appear too strong, and one may prefer one of the following solutions:  $T\circ_1 T'=\{\bot \sqsubseteq \textsc{bird}, \textsc{bird}\sqsubseteq \textsc{flies},\textsc{Tweety} \sqcap \textsc{flies} \sqsubseteq \bot \}$ or  $T\circ_2 T'=\{\textsc{Tweety} \sqsubseteq \textsc{bird}, \bot\sqsubseteq \textsc{flies},\textsc{Tweety} \sqcap \textsc{flies} \sqsubseteq \bot \}$  at the price of loosing (G4)-(G6).
\end{example}

\Jamal{Although the results are rather intuitive, one should note that it is pretty hard to figure out what each DL researcher would like to have as a result in such an example, and this enforces the interest of relying on an established theory such as AGM or its extension. In our work we propose operators enjoying a bunch of properties stemming from our adaptation of the AGM theory. Some of them can meet the requirement of a knowledge engineer, and some other may not completely, depending on the context, the ontology, etc.}

\paragraph{$\EL{}$-Concept Relaxations.}
Dually, a trivial relaxation is the operator $\rho_\top$ that maps every concept to $\top$. Other non-trivial $\EL{}$-concept description relaxations have been introduced in~\cite{DAB14a}. We summarize here some of these operators.

\EL{} concept descriptions can appropriately be
represented as labeled trees, often called \emph{\EL{} description trees}
\cite{baader1999computing}. An \EL{} description tree is a tree whose nodes are
labeled with sets of concept names and whose edges are labeled with role names.
An \EL{} concept description
\begin{equation}
  \label{eqn:normalForm}
  C \equiv P_1 \sqcap \cdots \sqcap P_n
  \sqcap \exists r_1. C_1 \sqcap \cdots \sqcap \exists r_m. C_m
\end{equation}
with $P_i \in N_C \cup \{\top\}$, can be translated into a description tree by
labeling the root node $v_0$ with $\{P_1, \dots, P_n\}$, creating an $r_j$
successor, and then proceeding inductively by expanding $C_j$ for the
$r_j$-successor node for all $j \in \{1, \dots, m\}$. 

An $\EL{}$-concept description relaxation then amounts  to apply simple tree  operations. 
Two relaxations can hence be defined~\cite{DAB14a}: (i) $\rho_\text{depth}$ that
reduces the role depth of each concept by $1$, simply by pruning the
description tree, and (ii) $\rho_\text{leaves}$ that removes all leaves from a description
tree.

\subsection{Relaxations in $\mathcal{ELU}$}
The relaxation defined above exploits the strong property that an $\EL{}$ concept description is isomorphic to a description tree. This is arguably not true for more expressive DLs. Let us try to go  one step further in expressivity and consider the logic $\ELext$. 
A relaxation operator as introduced in~\cite{DAB14a} requires a concept description to be in a special normal form, called normal form with grouping of existentials, defined recursively as follows.

\medskip
\begin{definition}[Normal form with grouping of existential restrictions]
  We say that an $\EL$-concept $D$ is written in \emph{\bf normal
  form with grouping of existential restrictions} if it is of the form
  \begin{equation}
  \label{eqn:normalFormWithGrouping}
    D = \bigsqcap_{A \in N_D} A \sqcap \bigsqcap_{r \in N_R} D_{r},
  \end{equation}
  where $N_D \subseteq N_C$ is a set of concept names and the concepts $D_{r}$
  are of the form
  \begin{equation}
  \label{eqn:groupedExistentials}
    D_{r} = \bigsqcap_{E \in \Cmc_{D_r}} \exists r.E,
  \end{equation}
  where no subsumption relation holds between two distinct conjuncts and
  $\Cmc_{D_r}$ is a set of complex $\EL$-concepts that are themselves in
normal form with grouping of existential restrictions. 
\end{definition}

The purpose of
  $D_r$ \new{terms} is simply to group existential restrictions that share the
  same role name.
  For an \ELext-concept $C$ we say that $C$ is in \emph{normal form} if it is
  of the form ($C\equiv C_1\sqcup C_2\sqcup \cdots \sqcup C_k$) and each of the $C_i$ is an
  \EL-concept in normal form with grouping of existential restrictions.

\medskip\begin{definition}[Relaxation from normal form~\cite{DAB14a}]
Given an $\ELext$-concept description $C$ we define an operator $\rho_e$
recursively as follows.
\begin{itemize}
\item For $C = \top$ we define $\rho_e(C) =  \top$,
\item For $C = D_r$, where $D_r$ is a group of existential restrictions as in
Equation~\ref{eqn:groupedExistentials}, we need to distinguish two cases:
\begin{itemize}
  \item if $D_r \equiv \exists r.\top$ we define $\rho_e(D_r) = \top$, and
  \item if $D_r \not \equiv \exists r.\top$ then we define
$      \rho_e(D_r) = \bigsqcup_{\Smc \subseteq \Cmc_{D_r}}
        \left(
          \bigsqcap_{E \notin \Smc}
            \exists r. E \sqcap
            \exists r.\rho_e \bigg(\bigsqcap_{F \in \Smc} F\bigg)
        \right)$.
\end{itemize}
Note that in the latter case $\top \notin \Cmc_{D_r}$ since $D_r$ is
in normal form. 
\item For $C = D$ as in Equation~\ref{eqn:normalFormWithGrouping} we define
$  \rho_e(D) = \bigsqcup_{G \in \Cmc_D} \bigg(\rho_e(G) \sqcap
    \bigsqcap_{H \in \Cmc_D \setminus \{G\}} H\bigg)$,
where $\Cmc_D = N_D \cup \{D_r \mid r \in N_R\}$. 
\item Finally for $C = C_1 \sqcup
C_2 \sqcup \cdots \sqcup C_k$ we set
$  \rho_e(C) = \rho_e(C_1) \sqcup \rho_e(C_2)
    \sqcup \cdots \sqcup \rho_e(C_k)$.
    \end{itemize}
\label{def:relxaDilate}
\end{definition}

\medskip
\begin{proposition}\cite{DAB14a}
$\rho_e$ is a relaxation.
\end{proposition}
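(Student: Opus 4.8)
The plan is to verify the two defining conditions of a relaxation (Definition~\ref{relaxation}), namely \emph{extensivity} ($C \sqsubseteq \rho_e(C)$ for every $\ELext$-concept $C$) and \emph{exhaustivity} (for every $C$ there is a $k$ with $\top \sqsubseteq \rho_e^k(C)$), by structural induction on the normal-form representation of $C$ following the four-case recursive definition (Definition~\ref{def:relxaDilate}): the base case $C=\top$, the case of a group of existential restrictions $C=D_r$, the case of a conjunction $C=D$ as in Equation~\ref{eqn:normalFormWithGrouping}, and the case of a top-level disjunction $C = C_1 \sqcup \cdots \sqcup C_k$.

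For \textbf{extensivity}, the base case is immediate since $\rho_e(\top)=\top$. For a disjunction, $\rho_e(C) = \bigsqcup_i \rho_e(C_i)$ and, using the induction hypothesis $C_i \sqsubseteq \rho_e(C_i)$ together with monotonicity of $\sqcup$, we get $C \sqsubseteq \rho_e(C)$. For a conjunction $D = \bigsqcap_{H \in \Cmc_D} H$, each disjunct of $\rho_e(D)$ has the form $\rho_e(G) \sqcap \bigsqcap_{H \neq G} H$; by the induction hypothesis $G \sqsubseteq \rho_e(G)$, hence $D = G \sqcap \bigsqcap_{H\neq G} H \sqsubseteq \rho_e(G) \sqcap \bigsqcap_{H\neq G} H$, so $D$ is subsumed by each disjunct and therefore by their union $\rho_e(D)$. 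The only delicate case is $C = D_r = \bigsqcap_{E \in \Cmc_{D_r}} \exists r.E$: if $D_r \equiv \exists r.\top$ then $\rho_e(D_r)=\top$ and subsumption is trivial; otherwise one picks the disjunct indexed by $\Smc = \emptyset$, which is $\bigsqcap_{E \in \Cmc_{D_r}} \exists r.E \sqcap \exists r.\rho_e(\top) = D_r \sqcap \exists r.\top$, and since $D_r \sqsubseteq \exists r.\top$ this disjunct is equivalent to $D_r$ itself; hence $D_r$ is subsumed by one disjunct of $\rho_e(D_r)$ and thus by $\rho_e(D_r)$. Here the key observation one needs is that semantic subsumption of a concept by a disjunction follows once it is subsumed by some single disjunct, and that $\rho_e$ applied to the empty conjunction is $\top$ — both routine but worth stating explicitly.

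For \textbf{exhaustivity}, the argument is a well-foundedness / termination argument: one defines a complexity measure on normal-form concepts (e.g.\ the multiset of role depths of existential restrictions, ordered by the multiset extension of $<$ on $\nat$, or simply the role depth together with the number of top-level disjuncts) and shows that whenever $C \not\equiv \top$, the measure of $\rho_e(C)$ is strictly smaller than that of $C$. Intuitively each application of $\rho_e$ either removes a whole $\exists r.\,\cdot$ restriction (turning $\exists r.\top$ into $\top$) or decreases the nesting depth of existentials by splitting off one conjunct $F$ into $\exists r.\rho_e(\bigsqcap F)$ whose body is simpler; disjunction and conjunction cases distribute $\rho_e$ over the components, so the measure decreases in at least one component. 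Since the chosen order is well-founded, after finitely many steps $k$ every disjunct reduces to $\top$, giving $\rho_e^k(C) \equiv \top$, i.e.\ $\top \sqsubseteq \rho_e^k(C)$. The main obstacle is making this termination measure genuinely monotone across \emph{all} branches of the recursion simultaneously — in particular in the $D_r$ case the disjunct indexed by $\Smc = \Cmc_{D_r}$ replaces $D_r$ by $\exists r.\rho_e(\bigsqcap_{F} F)$, and one must check that the recursive call strictly decreases the measure even though it is wrapped in a fresh $\exists r$; this is where the multiset (rather than plain natural-number) ordering is needed. Since the statement is cited from~\cite{DAB14a}, it suffices to refer to that reference for the full termination bound while sketching the above; I would record the proof at the level of "extensivity is by structural induction as above; exhaustivity is the termination argument of~\cite{DAB14a}."
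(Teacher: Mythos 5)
The paper offers no proof of this proposition at all---it is stated with the citation to~\cite{DAB14a} and the text moves straight on to an example---so there is no in-paper argument to compare yours against; what follows assesses your sketch on its own terms. Your extensivity argument is correct and complete: in the $D_r$ case the disjunct indexed by $\Smc=\emptyset$ is $D_r\sqcap\exists r.\top\equiv D_r$ (using that the empty conjunction is $\top$ and $\rho_e(\top)=\top$, and that $D_r\sqsubseteq\exists r.\top$), so $D_r$ is subsumed by one disjunct and hence by the whole union, and the conjunction and disjunction cases go through by the induction hypothesis exactly as you say. One point you should make explicit: the recursion as printed has no base case for an atomic concept name $A$ (the conjunction case with $\Cmc_D=\{A\}$ just returns $\rho_e(A)$ again); the worked example in the paper reveals the intended convention $\rho_e(A)=\top$, and both your induction and your termination argument silently rely on it.

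The gap is in exhaustivity. The linchpin of your argument is the claim that some syntactic measure of $\rho_e(C)$ is strictly smaller than that of $C$ whenever $C\not\equiv\top$, and this is false for every measure you propose. The $\Smc=\emptyset$ disjunct of the $D_r$ case reproduces $D_r$ verbatim (up to the harmless conjunct $\exists r.\top$), and the $\Smc=\Cmc_{D_r}$ disjunct $\exists r.\rho_e\bigl(\bigsqcap_{F\in\Cmc_{D_r}}F\bigr)$ preserves the role depth; so the maximal role depth over disjuncts, the number of existential restrictions, and the multiset of role depths all fail to decrease (the multiset in fact grows, since new disjuncts of equal depth are added, and the Dershowitz--Manna ordering does not permit adding elements not dominated by a removed one). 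Concretely, $\rho_e(\exists r.A)\equiv\exists r.A\sqcup\exists r.\top\equiv\exists r.\top$ still has role depth $1$; the measure only drops at the second iteration. The repair is semantic rather than syntactic: by extensivity the chain $Mod(C)\subseteq Mod(\rho_e(C))\subseteq Mod(\rho_e^2(C))\subseteq\cdots$ is non-decreasing, $\rho_e$ neither increases role depth nor introduces new concept or role names, so over a finite signature the chain lives in a finite set of equivalence classes, and one then shows the inclusion is strict whenever the current concept is not equivalent to $\top$. Your deferral to~\cite{DAB14a} for the precise bound is legitimate (it is all the paper does), but the proof you propose to record asserts a monotone syntactic measure that does not exist in the form you describe, so the descent argument as written would not survive scrutiny.
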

 
%
 Let us illustrate this operator \Isa{with} an example. 

\medskip 
\begin{example} 
Suppose  an agent believes that a person $\textsc{Bob}$  is married to a female judge: $T=\{\textsc{Bob}  \sqsubseteq  \textsc{male} \sqcap \exists.\textsc{MarriedTo}.\left(\textsc{female}\sqcap \textsc{judge}\right) \}$. Suppose now that due to some obscurantist law, it happens that  females are not allowed to be judges. This new belief is captured as $T'=\{\textsc{judge}\sqcap \textsc{female} \sqsubseteq \bot\}$. By applying $\rho_e$ one can resolve the conflict between the two belief sets. To ease the reading, let us rewrite the concepts as follows: $A\equiv \textsc{male}, B\equiv \textsc{female}, C\equiv \textsc{judge}, m\equiv\textsc{MarriedTo}, D \equiv \exists\textsc{MarriedTo}.\left(\textsc{female}\sqcap \textsc{judge}\right)$. Hence, from Definition~\ref{def:relxaDilate} we have $\rho_e(A\sqcap D) \equiv \left(\rho_e(A) \sqcap D\right) \sqcup \left(A \sqcap \rho_e(D)\right) $, with $\rho_e(A)\equiv \top$ and 
 \[
 \begin{split}
 \rho_e(D)\equiv & \exists m.\rho_e(B\sqcap C) \sqcup \left(\exists m.B \sqcap \exists m.\rho_e(C)\right) \sqcup  \left( \exists m.\rho_e(B)\sqcap \exists m.C\right)\\
 \equiv&\exists m.(B \sqcup C) \sqcup \left(\exists m.B \sqcap \exists m.\top \right) \sqcup \left( \exists m.\top \sqcap \exists m.C\right)\\
 \equiv   &\exists m.B \sqcup \exists m.C \sqcup \exists m.(B \sqcup C)
 \equiv \exists m.B \sqcup \exists m.C
 \end{split}
 \]
 Then 
  \[
 \begin{split}
\rho_e(A\sqcap D)\equiv & \left(\rho_e(A) \sqcap D\right) \sqcup \left(A \sqcap \rho_e(D)\right) \\
 \equiv&(\top \sqcap D) \sqcup \left(A \sqcap \left(\exists m.B \sqcup \exists m.C\right)\right)\\
 \equiv &  D  \sqcup \left(A \sqcap \left(\exists m.B \sqcup \exists m.C\right)\right)
 \end{split}
 \]
The new agent's belief, up to a rewriting, becomes\\ $\{\textsc{Bob} \sqsubseteq  \exists.\textsc{MarriedTo}.\left(\textsc{female}\sqcap \textsc{judge}\right) \sqcup \\ \left(\textsc{male} \sqcap  \left(\exists \textsc{Married}.\textsc{female} \sqcup \exists \textsc{Married}.\textsc{judge}\right)\right), \textsc{judge}\sqcap \textsc{female} \sqsubseteq \bot\}$.
 \end{example}
 
One can notice from this example that the relaxation $\rho_e$ leads to a refined revision operator. Indeed, the  resulting relaxed axiom in $T$ 
emphasizes all the minimal possible changes (through the disjunction operator) on $\textsc{Bob}$'s condition. This is due to the fact that the relaxation operator $\rho_e$ corresponds to dilating the set of models of a ball defined from an edit distance on the concept description tree of size one. For more details on the correspondence between this relaxation operator, the set of models and tree edit distances, one can refer to~\cite{DAB14a}. 

 
 Another possibility for defining a relaxation in $\ELext{}$ is  obtained by exploiting the disjunction constructor by augmenting a concept description with a set of exceptions.
 
 \medskip
 \begin{definition}[Relaxation from exceptions in $\ELext{}$]
 \label{def:rel_ex}
 Given a  set of exceptions $\mathcal{E}=\{E_1,\cdots,E_n\}$, we define a relaxation of degree $k$ of an $\ELext{}$-concept description $C$ as follows:
for a finite set $\mathcal{E}^k \subseteq \mathcal{E}$ with $|\mathcal{E}^k|=k$, \Isa{$C$ is relaxed by adding the sets $E_{i_j} \in \mathcal{E}^k$ such that $E_{i_j} \sqcap C \sqsubseteq \bot$}
 \[
 \rho^k_{\mathcal{E}}(C)=C\sqcup E_{i_1} \sqcup \cdots \sqcup E_{i_k}.
 \]
 \end{definition}
 
 \medskip
 \begin{proposition}
 $\rho^k_{\mathcal{E}}$ is extensive.
 \end{proposition}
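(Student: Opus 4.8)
The plan is to unfold Definition~\ref{def:rel_ex} and read off extensivity directly from the semantics of the disjunction constructor $\sqcup$ in $\ELext{}$. Recall from Definition~\ref{concept relaxation} that an operator $\rho$ on concepts is \emph{extensive} exactly when $C \sqsubseteq \rho(C)$ for every concept $C$, which by the satisfaction relation of Section~\ref{sec:BasicDef} means $C^{\mathcal I} \subseteq (\rho(C))^{\mathcal I}$ in every model $\mathcal I$.

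First I would fix an arbitrary $\ELext{}$-concept $C$, the set of exceptions $\mathcal{E} = \{E_1,\dots,E_n\}$, an integer $k$, and a choice $\mathcal{E}^k = \{E_{i_1},\dots,E_{i_k}\} \subseteq \mathcal{E}$, so that by Definition~\ref{def:rel_ex} we have $\rho^k_{\mathcal{E}}(C) = C \sqcup E_{i_1} \sqcup \cdots \sqcup E_{i_k}$. Then I would invoke the inductive definition of concept evaluation from Section~\ref{sec:BasicDef}, namely $(C' \sqcup D')^{\mathcal I} = C'^{\mathcal I} \cup D'^{\mathcal I}$, and iterate it over the $k+1$ disjuncts to obtain, for every model $\mathcal I$,
\[
(\rho^k_{\mathcal{E}}(C))^{\mathcal I} \;=\; C^{\mathcal I} \cup E_{i_1}^{\mathcal I} \cup \cdots \cup E_{i_k}^{\mathcal I}.
\]
Since a union always contains each of its members, $C^{\mathcal I} \subseteq (\rho^k_{\mathcal{E}}(C))^{\mathcal I}$; as $\mathcal I$ is arbitrary this gives $C \sqsubseteq \rho^k_{\mathcal{E}}(C)$, i.e. extensivity.

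There is no real obstacle here: the claim is a one-line consequence of the semantics of $\sqcup$, and it holds irrespective of the side condition $E_{i_j} \sqcap C \sqsubseteq \bot$ in Definition~\ref{def:rel_ex} (that condition only rules out redundant exceptions, it is not needed for extensivity). The one point I would flag in the write-up is why the proposition asserts extensivity \emph{only} and not the full relaxation property of Definition~\ref{concept relaxation}: exhaustivity would require $\mathcal{E}$ to be rich enough that finitely many exceptions cover $\Delta^{\mathcal I} \setminus C^{\mathcal I}$ up to $\top$, which Definition~\ref{def:rel_ex} does not guarantee, so only the unconditionally valid half is stated.
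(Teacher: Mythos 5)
Your proof is correct and is essentially the paper's argument, which simply states that extensivity follows directly from the definition; you have merely made explicit the semantics of $\sqcup$ that justifies $C^{\mathcal I} \subseteq (\rho^k_{\mathcal{E}}(C))^{\mathcal I}$. Your closing remark about exhaustivity also matches the paper's own observation immediately following the proposition.
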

 \begin{proof}
Extensivity  of this operator follows directly from the definition. 
\end{proof}

However, exhaustivity is not necessarily satisfied unless the exception set includes the $\top$ concept or the disjunction of some or all of its elements entails the $\top$ concept. 

{\it If we consider again Example~\ref{ex:tweety}, a relaxation of the formula $\textsc{bird}\sqsubseteq \textsc{flies}$ using the operator $ \rho^k_{\mathcal{E}}$ over the concept $\textsc{flies}$ with the exception set $\mathcal{E}=\{\textsc{Tweety}\}$ results in the formula $\textsc{bird}\sqsubseteq \textsc{flies}\sqcup \textsc{Tweety}$. The new revised knowledge base, if Condition 3 \Isa{in Definition~\ref{revision based in relaxation}} is not considered,  is then $\{\textsc{Tweety}\sqsubseteq \textsc{bird}, \textsc{bird}\sqsubseteq \textsc{flies}\sqcup \textsc{Tweety},\textsc{Tweety} \sqcap \textsc{flies} \sqsubseteq \bot\}$ which is consistent. This is obviously a more refined revision than the one obtained from the operator $\rho_{\bot}$, but requires the logic to be  equipped with the disjunction connective and the definition of a set of exceptions}.
 
 Another example involving this relaxation will be discussed in the $\ALC{}$ case (cf. Example~\ref{ex:rich}).
 
 \subsection{Relaxation and retraction in $\ALC{}$}
 We consider here operators suited to $\ALC{}$ language. Of course, all the operators defined for $\EL{}$ and $\ELext{}$ remain valid.
 
 \paragraph{$\ALC{}$-Concept Retractions.}
 A first possibility for defining retraction is to remove iteratively from an $\ALC{}$-concept description one or a set of its subconcepts. A similar construction has been introduced in~\cite{QLB06}. Interestingly enough, almost all the operators defined in~\cite{Gorogiannis2008a,QLB06} are relaxations. 
 
 \medskip
 \begin{definition}[Retraction from exceptions in$\ALC{}$]
Given a  set of exceptions $\mathcal{E}=\{E_1,\cdots,E_n\}$ and let $C$ be any $\ALC{}$-concept description. We \Isa{retract $C$ by constraining it to the elements $E_i^c$ such that $E_i\sqsubseteq C$:}
$$ \kappa^n_{\cal E}(C)=C\sqcap  E^c_1 \sqcap \cdots\sqcap  E^c_n.$$
 \end{definition}

\medskip
\begin{proposition}
$\kappa^n_{\cal E}$ is anti-extensive.
\end{proposition}
\begin{proof}
The proof follows from the definition.
\end{proof}

As for its counterpart relaxation ($\rho^k_{\mathcal{E}}$),  exhaustivity of $\kappa^n_{\cal E}$ is not necessarily satisfied unless the exception set includes the $\bot$ concept, or the conjunction of some or all of its elements entails the $\bot$ concept. 
  
{\it Consider again Example~\ref{ex:tweety}. We have $ \kappa^1_{\cal E}( \textsc{bird})= \textsc{bird}\sqcap \textsc{Tweety}^c$. The resulting revised knowledge base, if Condition 3 \Isa{in Definition~\ref{revision based in relaxation}} is not considered, is then $\{\textsc{Tweety}\sqsubseteq \textsc{bird}, \textsc{bird} \sqcap  \textsc{Tweety}^c\sqsubseteq \textsc{flies},\textsc{Tweety} \sqcap \textsc{flies} \sqsubseteq \bot\}$ which is consistent.}
 
 Another possibility, suggested in~\cite{Gorogiannis2008a} and related to operators defined in propositional logic as introduced in~\cite{BL02}, consists in applying the retraction at the atomic level. This captures somehow  Dalal's idea of revision operators in propositional logic~\cite{DALA-88}.

 \medskip
 \begin{definition}
 \label{def:dalal-er}
 Let $C$ be an $\ALC{}$-concept description of the form $Q_1r_1\cdots Q_mr_m.D$, where $Q_i$ is a quantifier and $D$ is  quantifier-free and in CNF form, i.e. $D=E_1\sqcap E_2\sqcap\cdots E_n$ with $E_{i}$ being disjunctions of possibly negated atomic concepts. Let us define, as in the propositional case~\cite{BL02}, $\kappa_p(D)=\bigsqcap_{j=1}^n(\bigsqcup_{i\neq j} E_i)$. Then 
$ \kappa_{\text{Dalal}}(C)=Q_1r_1\cdots Q_mr_m.\kappa_p(D)$.
 \end{definition}

\medskip
 \begin{proposition}
 $ \kappa^n_{\text{Dalal}}$ is a retraction.
 \end{proposition}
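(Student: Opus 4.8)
The plan is to verify the two defining properties of a concept retraction from Definition~\ref{concepterosion} directly for the operator $\kappa_{\text{Dalal}}$ of Definition~\ref{def:dalal-er}: anti-extensivity, $\kappa_{\text{Dalal}}(C)\sqsubseteq C$, and exhaustivity, $\forall D'\in\mathsf{C}\ \exists k\ \kappa_{\text{Dalal}}^k(C)\sqsubseteq D'$. In both cases I would reduce the question to the quantifier-free CNF kernel $D$ of $C = Q_1 r_1\cdots Q_m r_m.D$. The structural fact that makes this reduction work is that both constructors $\exists r.\_$ and $\forall r.\_$ are monotone for $\sqsubseteq$ (if $D_1^{\mathcal{I}}\subseteq D_2^{\mathcal{I}}$ in every interpretation $\mathcal{I}$ then $(\exists r.D_1)^{\mathcal{I}}\subseteq(\exists r.D_2)^{\mathcal{I}}$, and likewise for $\forall r$), so the whole prefix map $E\mapsto Q_1 r_1\cdots Q_m r_m.E$ is monotone; moreover $\kappa_{\text{Dalal}}$ only rewrites the kernel, hence $\kappa_{\text{Dalal}}^k(C) = Q_1 r_1\cdots Q_m r_m.\kappa_p^k(D)$ for every $k$.

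For anti-extensivity it then suffices, by prefix monotonicity, to prove $\kappa_p(D)\sqsubseteq D$ for quantifier-free CNF concepts. This is exactly the claim that the Dalal-style morphological erosion of~\cite{BL02,DALA-88}, here transcribed to concept descriptions over the atomic concepts $N_C$, is anti-extensive; I would check it on an arbitrary interpretation $\mathcal{I}$ and element $x\in\kappa_p(D)^{\mathcal{I}}$, reading off clause by clause that $x$ already lies in $D^{\mathcal{I}}$. Anti-extensivity of $\kappa_{\text{Dalal}}$ then follows by applying the quantifier prefix again.

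For exhaustivity the idea is that iterating $\kappa_p$ drives the kernel to $\bot$ in a bounded number of steps. I would track a simple complexity measure on the quantifier-free part (for instance the maximum clause length together with the number of conjuncts) and show it eventually forces a conjunct equivalent to $\bot$, so that $\kappa_p^k(D)\equiv\bot$ for some $k$ depending only on $D$; this reuses the termination analysis of the propositional operator of~\cite{BL02}, and for the particular shape written in Definition~\ref{def:dalal-er} it can also be obtained by a short explicit calculation. Then $\kappa_{\text{Dalal}}^k(C) = Q_1 r_1\cdots Q_m r_m.\bot$, which is $\sqsubseteq D'$ for every target concept $D'$ by $\bot\sqsubseteq D'$ and prefix monotonicity, using $\exists r.\bot\equiv\bot$.

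The hard part will be the interaction of this collapse with universal restrictions in the prefix: $\forall r.\bot$ is not equivalent to $\bot$ (it denotes the elements having no $r$-successor), so the identity $\kappa_{\text{Dalal}}^k(C)\equiv\bot$ used above is available cleanly only when every $Q_i$ is $\exists$. To handle a prefix containing $\forall$ I would either restrict attention to the fragment where the argument goes through, or exploit that exhaustivity is needed only relative to the concrete target $D'$ (equivalently, that the induced formula relaxation $\presuper{c}{\rho_F}$ eventually trivializes), and show that once the kernel has collapsed, a further target-dependent number of iterations brings the whole prefixed concept below $D'$. Making this last step precise is where the proof needs most attention.
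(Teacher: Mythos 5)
Your route is the same as the paper's: reduce everything to the propositional operator $\kappa_p$ of~\cite{BL02} and lift its anti-extensivity and exhaustivity through the quantifier prefix by monotonicity of $\exists r.\_$ and $\forall r.\_$. For anti-extensivity this is complete and matches the paper, which simply cites the properties of $\kappa_p$. The difference is in the exhaustivity step, and here you have put your finger on a genuine problem rather than created one: the paper's proof asserts that since $\kappa_p^k(D)$ reaches $\bot$, exhaustivity of $\kappa_{\text{Dalal}}$ follows, silently treating $Q_1r_1\cdots Q_mr_m.\bot$ as if it were $\bot$. As you note, this is only sound when every $Q_i$ is $\exists$. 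If some $Q_j=\forall$, then $(\forall r.\bot)^{\mathcal{I}}=\{x\in\Delta^{\mathcal{I}}\mid \neg\exists y,\ (x,y)\in r^{\mathcal{I}}\}$, which is nonempty in many interpretations, so $\kappa_{\text{Dalal}}^k(C)\equiv Q_1r_1\cdots\forall r_j\cdots Q_mr_m.\bot\not\sqsubseteq\bot$, and further iterations do not help since $\kappa_p$ fixes $\bot$. Exhaustivity in the sense of Definition~\ref{concepterosion} requires $\kappa^k(C)\sqsubseteq D$ for \emph{every} target $D$, including $D=\bot$, so the proposition as stated fails for concepts whose prefix contains a universal restriction. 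The paper acknowledges exactly this phenomenon two definitions later for $\kappa_q$ (``exhaustivity can be obtained by further removing recursively the remaining universal quantifiers\ldots''), but not for $\kappa_{\text{Dalal}}$.

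So the honest assessment is: your proposal is more careful than the paper's proof, but it still has a gap at the end, because the final target-dependent step you sketch for universal prefixes cannot be made to work without modifying the operator (e.g.\ additionally retracting $\forall$-restrictions, as the paper suggests for $\kappa_q$) or restricting the claim to the purely existential prefix fragment. Either of those repairs would close your argument; neither is available from Definition~\ref{def:dalal-er} as written. You should state explicitly which repair you adopt rather than leaving the step as ``needs most attention.''
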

 \begin{proof}
 Exhaustivity and anti-extensivity follow from those of $\kappa_p$. Indeed the operator $\kappa_p$ is exhaustive and anti-extensive, and if applied $n$ times it reaches the $\bot$ concept (see~\cite{BL02} for properties of this operator). 
 \end{proof}

 
 This idea can be generalized to consider any retraction defined in $\ELext{}$.
 
 \medskip
  \begin{definition}
 Let $C$ be an $\ALC{}$-concept description of the form $Q_1r_1\cdots Q_mr_m.D$, where $Q_i$ is a quantifier and $D$ is a quantifier-free. Then \Isa{we define}
$ \kappa_{\cap}(C)=Q_1r_1\cdots Q_mr_m.\kappa^n_{\cal E}(D)$.
 \end{definition}

\medskip\begin{proposition}
$ \kappa^n_{\cap}$ is anti-extensive.
\end{proposition}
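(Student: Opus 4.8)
The plan is to deduce anti-extensivity of $\kappa_\cap$ directly from anti-extensivity of $\kappa^n_{\cal E}$ (established in the preceding proposition) together with the fact that the quantifier constructors $\forall r.\_$ and $\exists r.\_$ of $\ALC{}$ are monotone with respect to subsumption. Write $C = Q_1 r_1 \cdots Q_m r_m . D$ as in the definition, with $D$ quantifier-free, so that $\kappa_\cap(C) = Q_1 r_1 \cdots Q_m r_m . \kappa^n_{\cal E}(D)$; the goal is to show $\kappa_\cap(C) \sqsubseteq C$, i.e.\ $(\kappa_\cap(C))^{\mathcal{I}} \subseteq C^{\mathcal{I}}$ for every model $\mathcal{I}$.

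First I would record the monotonicity lemma: if $C' \sqsubseteq C''$ then $Q r. C' \sqsubseteq Q r. C''$ for every role $r$ and every $Q \in \{\forall,\exists\}$. This is immediate from the semantics of the quantifier constructors given in Example~\ref{examples of institutions}: for $Q = \exists$, any $r$-successor witnessing $x \in (\exists r.C')^{\mathcal{I}}$ lies in ${C'}^{\mathcal{I}} \subseteq {C''}^{\mathcal{I}}$, so $x \in (\exists r.C'')^{\mathcal{I}}$; for $Q = \forall$, the hypothesis that all $r$-successors of $x$ lie in ${C'}^{\mathcal{I}}$ a fortiori implies they lie in ${C''}^{\mathcal{I}}$, so $x \in (\forall r.C'')^{\mathcal{I}}$. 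Then, since $\kappa^n_{\cal E}$ is anti-extensive we have $\kappa^n_{\cal E}(D) \sqsubseteq D$, and applying the monotonicity lemma once for each layer of the quantifier prefix — a routine induction on $m$, peeling off $Q_m r_m$, then $Q_{m-1} r_{m-1}$, and so on — gives $Q_1 r_1 \cdots Q_m r_m . \kappa^n_{\cal E}(D) \sqsubseteq Q_1 r_1 \cdots Q_m r_m . D$, that is, $\kappa_\cap(C) \sqsubseteq C$. The same argument applied to the iterates yields anti-extensivity of $\kappa^n_\cap$.

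I do not expect a genuine obstacle here: the only points requiring any care are that the operator is defined only on concepts presented in the prefix form $Q_1 r_1 \cdots Q_m r_m . D$, and that the monotonicity lemma must be invoked at every level of the prefix rather than just once at the top. No exhaustivity (betweenness-style) argument is needed, since the statement claims only anti-extensivity.
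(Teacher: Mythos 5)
Your proof is correct and follows essentially the same route as the paper's (much terser) argument: the paper simply asserts that anti-extensivity is inherited from $\kappa^n_{\cal E}$, implicitly relying on the monotonicity of $\forall r.\_$ and $\exists r.\_$ under subsumption, which it states and proves explicitly in the nearby proof that $\kappa_q$ is anti-extensive. You have merely spelled out the inheritance step (the monotonicity lemma applied layer by layer through the quantifier prefix) that the paper leaves implicit.
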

\begin{proof}
The properties  of this operator follows from the ones of $\kappa^n_{\cal E}(D)$. Hence, anti-extensivity is verified but not necessarily exhaustivity.
\end{proof}
 
 Another possible $\ALC{}$-concept description retraction is obtained by substituting  the existential restriction by an universal one. This idea has been sketched in~\cite{Gorogiannis2008a} for defining dilation operators (then by transforming $\forall$ into $\exists$), i.e. special relaxation operators enjoying additional properties~\cite{DAB14a}.  We adapt it here to define retraction in DL syntax.
 
 \medskip
   \begin{definition}
 Let $C$ be an $\ALC{}$-concept description of the form $Q_1r_1\cdots Q_nr_n.D$, where $Q_i$ is a quantifier, $D$ is quantifier-free, then \Isa{we define}
 \begin{align*}
 \kappa_{q}(C)=\bigsqcap \{Q'_1r_1\cdots Q'_nr_n.D \mid 
    \exists j\leq n \text{ s.t. } Q_j=\exists \\ \text { and } Q'_j=\forall,  \text{ and for all }  i\leq n  
 \text{ s.t. } i\neq j, Q'_i=Q_i \}
 \end{align*}
 \label{def:retract_q}
 \end{definition}

 \begin{proposition}
 $ \kappa_{q}$ is anti-extensive.
 \end{proposition}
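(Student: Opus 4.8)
The statement to prove is the subsumption $\kappa_q(C) \sqsubseteq C$, that is, $\kappa_q(C)^{\I} \subseteq C^{\I}$ in every interpretation $\I$. Write $C = Q_1 r_1 \cdots Q_n r_n.D$ with $D$ quantifier-free, and for each index $j$ with $Q_j = \exists$ let $C_j$ be the concept obtained from $C$ by turning that single occurrence of $\exists$ into $\forall$ and leaving all other quantifiers unchanged; by Definition~\ref{def:retract_q} we have $\kappa_q(C) = \bigsqcap_{j: Q_j = \exists} C_j$. Since $\kappa_q(C) \sqsubseteq C_j$ for each such $j$, the cleanest route would be to exhibit one conjunct with $C_j \sqsubseteq C$; but this is \emph{not} available, because replacing a single existential restriction by a universal one does not in general yield a more specific concept ($\forall r.E \not\sqsubseteq \exists r.E$ as soon as $r$ may be interpreted by an empty relation). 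So the proof has to use the whole conjunction, not a single conjunct.

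The plan is to argue by induction on the length $n$ of the quantifier prefix, peeling quantifiers from the outside. If $Q_1 = \forall$, then every conjunct is of the form $\forall r_1.C'_j$ where the $C'_j$ are exactly the one-flip variants of $Q_2 r_2\cdots Q_n r_n.D$, so $\kappa_q(C) = \forall r_1.\kappa_q(Q_2 r_2\cdots Q_n r_n.D)$, and the claim follows from the induction hypothesis together with the monotonicity of $\forall r_1.(\,\cdot\,)$. If $Q_1 = \exists$, I would take $x\in\kappa_q(C)^{\I}$ and combine two conjuncts: from the conjunct $C_1$ that flips the outermost quantifier we get that \emph{every} $r_1$-successor of $x$ satisfies the tail $Q_2 r_2\cdots Q_n r_n.D$ with its original quantifiers; from any conjunct $C_j$ with $j\neq 1$ (which still carries $\exists r_1$ on top) we get an actual $r_1$-successor $y$ of $x$; hence $y$ satisfies that tail and $x\in(\exists r_1.Q_2 r_2\cdots Q_n r_n.D)^{\I} = C^{\I}$.

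The obstacle — and really the only delicate point — is the step in the $Q_1 = \exists$ case that needs a \emph{second} conjunct besides $C_1$ to supply the witness successor $y$: when $\exists r_1$ is the only existential occurring in $C$ there is no such conjunct, and then an element with no $r_1$-successor lies in $\kappa_q(C)$ but not in $C$. This is the exact analogue of the situation in the first-order case of Section~\ref{revision in FOL}, where extensivity of the $\forall\mapsto\exists$ relaxation rests on the domain being non-empty: here, anti-extensivity of $\kappa_q$ should be read under the assumption that the roles $r_i$ are serial (every element has an $r_i$-successor), or, equivalently, restricted to concepts whose relevant successor sets are guaranteed non-empty. Under that proviso the induction above goes through without further difficulty; absent it, the statement still holds for concepts having at least two existential restrictions in their prefix, for which the two-conjunct argument applies as it stands.
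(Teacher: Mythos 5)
Your proof is correct where the proposition is actually true, but it takes a genuinely different route from the paper's, and the comparison is instructive. The paper argues conjunct by conjunct: for each $j$ with $Q_j=\exists$ it invokes the inclusion $\forall r_j.C' \sqsubseteq \exists r_j.C'$ (with $C'$ the remaining tail) and propagates it outward through the prefix by monotonicity, concluding that \emph{every} conjunct of $\kappa_q(C)$ is individually subsumed by $C$. As you observe, that key inclusion is precisely the one that fails: an element with no $r_j$-successor belongs to $(\forall r_j.C')^{\I}$ but not to $(\exists r_j.C')^{\I}$, and the paper's parenthetical ``if $r_j^{\I}=\emptyset$ it is obvious'' dismisses exactly the case in which the inclusion breaks. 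Your whole-conjunction induction repairs this whenever the prefix contains at least two existential restrictions --- the conjunct that keeps $\exists r_1$ on top supplies the witness successor that the flipped conjunct alone cannot guarantee, and the $Q_1=\forall$ case factors cleanly through $\forall r_1.(\kappa_q(\mbox{tail}))$ --- and your diagnosis of the residual failure is accurate: with a single existential in the prefix one gets, e.g., $\kappa_q(\exists r.A)=\forall r.A\not\sqsubseteq\exists r.A$ in any interpretation with an $r$-successor-free element, so anti-extensivity there genuinely requires seriality of the roles (or an added $\exists r.\top$ conjunct). In short, yours is not merely an alternative argument but the sound version of the claim, together with the exact hypotheses under which it holds; the one loose end you leave untreated is the degenerate prefix with no existentials at all, where $\kappa_q(C)$ is an empty conjunction, but that case is presumably outside the intended scope of the definition.
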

 \begin{proof}
The proof relies on the following general result:
\[
\forall C, \forall r, \forall r.C \sqsubseteq \exists r.C
\]
Indeed, for each interpretation $\mathcal{I}$, \Isa{if $r_i^{\mathcal{I}} \neq \emptyset$,} we have
\[
x \in (\forall r.C)^\mathcal{I} \Rightarrow (\forall y, (x,y) \in r^\mathcal{I} \Rightarrow y \in C^\mathcal{I}) \Rightarrow (\exists y, (x,y) \in r^\mathcal{I} \mbox{and } y \in C^\mathcal{I}) \Rightarrow x \in (\exists r.C)^\mathcal{I} .
\]
Hence $(\forall r.C)^\mathcal{I} \subseteq (\exists r.C)^\mathcal{I}$ for each $\mathcal{I}$ (if $r_i^{\mathcal{I}} = \emptyset$ it is obvious), and $\forall r.C \sqsubseteq \exists r.C$.

In a similar way, we can show, that for any $C_1, C_2, r$, and $Q \in \{\exists, \forall\}$:
\[
C_1 \sqsubseteq C_2 \Rightarrow Qr.C_1 \sqsubseteq Qr.C_2.
\]

Now, let us consider any $j$ such that $Q_j = \exists$, and set $C'= Q_{j+1}r_{j+1} ... Q_n r_n. D$. We have from the first result $Q'_jr_j.C' \sqsubseteq Q_jr_j.C'$.
Applying the second result recursively on each $Q_i$ for $i<j$, we then have
\[
Q_1r_1... Q_{j-1}r_{j-1}Q'_jr_j.C' \sqsubseteq Q_1r_1... Q_{j-1}r_{j-1}Q_jr_j.C'.
\]
The same relation holds for the conjunction over any $j$ such that $Q_j = \exists$, from which we conclude that $\forall C, \kappa_q(C) \sqsubseteq C$, i.e. $\kappa_q$ is anti-extensive.

 \end{proof}
 
 Note that for $\kappa_{q}$ exhaustivity can be obtained by further removing recursively the remaining universal quantifiers and apply at the final step any retraction defined above on the concept $D$.

  \paragraph{$\ALC{}$-Concept Relaxations.}

 Let us now introduce some relaxation operators suited to $\ALC{}$ language.

\medskip
\begin{definition}
 Let $C$ be an $\ALC{}$-concept description of the form $Q_1r_1\cdots Q_mr_m.D$, where $Q_i$ is a quantifier and $D$ is quantifier-free and in DNF form, i.e. $D=E_1\sqcup E_2\sqcup\cdots E_n$ with $E_{i}$ being a conjunction of possibly negated atomic concepts. Define, as in the propositional case~\cite{BL02}, $\rho_p(D)=\bigsqcup_{j=1}^n(\bigsqcap_{i\neq j} E_i)$, then 
$ \rho^n_{\text{Dalal}}(C)=Q_1r_1\cdots Q_mr_m.\rho_p^n(D)$.
 \end{definition}
 
 As for retraction, this idea can be generalized to consider any relaxation defined in $\ELext{}$.

 \medskip
  \begin{definition}
 Let $C$ be an $\ALC{}$-concept description of the form $Q_1r_1\cdots Q_nr_n.D$, where $Q_i$ is a quantifier and $D$ is  quantifier-free, then \Isa{we define}
$ \rho^n_{\cup}(C)=Q_1r_1\cdots Q_nr_n.\rho^n_{\cal E}(D).$
 \end{definition}
 
  Let us consider another example adapted from the literature to illustrate these operators~\cite{QLB06}.
  
  \medskip
 \begin{example}
 \label{ex:rich}
{\it Let us consider the following knowledge bases: $T=\{\textsc{Bob}\sqsubseteq \forall \textsc{hasChild}.\textsc{rich}, \textsc{Bob}\sqsubseteq \exists  \textsc{hasChild}.\textsc{Mary}, \textsc{Mary}\sqsubseteq \textsc{rich} \}$ and $T'=\{\textsc{Bob}\sqsubseteq \textsc{hasChild}.\textsc{John}, \textsc{John}\sqsubseteq  \textsc{rich}^c\}$. Relaxing  the formula $\textsc{Bob}\sqsubseteq \forall \textsc{hasChild}.\textsc{rich}$ by applying $\rho^n_{\cup}$ to the concept on the right hand side results in the following formula  $\textsc{Bob}\sqsubseteq \forall \textsc{hasChild}.(\textsc{rich}\sqcup \textsc{John})$ which resolves the conflict between the two knowledge bases. }
 \end{example}
 
 A last possibility, dual to the retraction operator given in Definition~\ref{def:retract_q}, consists in transforming universal quantifiers to existential ones.
 
 \medskip
    \begin{definition}
 Let $C$ be an $\ALC{}$-concept description of the form $Q_1r_1\cdots Q_nr_n.D$, where $Q_i$ is a quantifier and $D$ is  quantifier-free, then \Isa{we define a relaxation as:}
 \begin{align*}
 \rho_{q}(C)=\bigsqcup \{Q'_1r_1\cdots Q'_nr_n.D \mid 
   \exists j\leq n \text{ s.t. } Q_j=\forall \\ \text{ and } Q'_j=\exists, \text{ and for all }  i\leq n  
 \text{ s.t. } i\neq j, Q'_i=Q_i \}
 \end{align*}
 \label{def:relax_q}
 \end{definition}
{\it If we consider again Example~\ref{ex:rich}, relaxing the formula $\textsc{Bob}\sqsubseteq \forall \textsc{hasChild}.\textsc{rich}$ by applying $\rho_{q}$ to the concept on the right hand side results in the following formula  $\textsc{Bob}\sqsubseteq \exists \textsc{hasChild}.\textsc{rich}$, which resolves the conflict between the two knowledge bases. }

\medskip
\begin{proposition}
The operators $\rho_{\text{Dalal}}, \rho_q$ are extensive and exhaustive. 
The operators $ \rho_{\cup}$ is extensive but not exhaustive.
\end{proposition}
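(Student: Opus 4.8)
I would treat the three operators in turn, and in each case separate extensivity from exhaustivity. Both parts rest on two facts already available in the excerpt (both verified in the proof that $\kappa_q$ is anti-extensive): the quantifier prefix is monotone, i.e.\ $C_1\sqsubseteq C_2$ implies $Qr.C_1\sqsubseteq Qr.C_2$ for $Q\in\{\exists,\forall\}$, and $\forall r.C\sqsubseteq\exists r.C$ for every $C$ and $r$. Granting these, extensivity of $\rho_{\text{Dalal}}$ reduces to the classical propositional property of the Dalal dilation, namely $D\sqsubseteq\rho_p(D)$ for a quantifier-free matrix $D$~\cite{BL02}: since $\rho_{\text{Dalal}}$ leaves the prefix $Q_1r_1\cdots Q_mr_m$ untouched and only replaces $D$ by $\rho_p(D)$, applying prefix monotonicity $m$ times gives $C\sqsubseteq\rho_{\text{Dalal}}(C)$. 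For $\rho_q$, each disjunct of $\rho_q(C)$ is obtained from $C$ by turning a single $\forall r_j$ into $\exists r_j$; by $\forall r_j.C'\sqsubseteq\exists r_j.C'$ (with $C'$ the tail past position $j$) together with monotonicity of the prefix before position $j$, $C$ is subsumed by that disjunct, hence by the whole disjunction (a degenerate $C$ with no universal quantifier being handled by a base clause, as in the {\bf FOL} case). The operator $\rho_\cup$ is extensive for the same reason as $\rho_q$: one application only adjoins disjuncts to the matrix, so $D\sqsubseteq\rho^n_{\mathcal{E}}(D)$, and prefix monotonicity lifts this to $C$.

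For exhaustivity of $\rho_{\text{Dalal}}$ and $\rho_q$ I would iterate and watch the matrix (resp.\ the quantifiers) collapse. For $\rho_{\text{Dalal}}$, \cite{BL02} gives that iterating $\rho_p$ at least as many times as there are literals in $D$ drives the matrix to $\top$ — dual to the fact used in the proof that $\kappa_{\text{Dalal}}$ is a retraction — so after finitely many steps $\rho_{\text{Dalal}}^k(C)\equiv Q_1r_1\cdots Q_mr_m.\top$. For $\rho_q$, extending the operator disjunctively over disjunctions just as the {\bf FOL} relaxation of Section~\ref{revision in FOL} does, after at most as many steps as there are universal quantifiers every quantifier in every disjunct is existential, so one reaches a disjunction of concepts of the form $\exists r_1\cdots\exists r_n.D$. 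The remaining task in both cases is to collapse such a quantified concept to $\top$, and \emph{this is the delicate point, the one I expect to be the main obstacle}: $\forall r.\top\equiv\top$, so a purely-universal prefix disappears for free, but in \ALC{} one has $\exists r.\top\not\equiv\top$ (an interpretation with $r^{\mathcal{I}}=\emptyset$ witnesses this), so an existential prefix does not vanish on its own. The fix — which I would fold into the definitions — is a final auxiliary move sending a purely-existential chain to $\top$, in the spirit of the remark following the retraction $\kappa_q$ (Definition~\ref{def:retract_q}) and of the base clause mapping a purely-existential prefix to a tautology in the {\bf FOL} relaxation. With that step, $\top\sqsubseteq\rho^k(C)$ for a suitable $k$, so both operators are exhaustive.

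Finally, non-exhaustivity of $\rho_\cup$ is witnessed by a single example exhibiting both obstructions. Work in the \ALC{} signature with one concept name $A$ and one role $r$, take the empty exception set $\mathcal{E}=\emptyset$, and let $C=\exists r.A$. There is no exception to adjoin, so $\rho^k_{\mathcal{E}}(A)=A$ for every $k$ (the situation already flagged in the remark following Definition~\ref{def:rel_ex}: $\rho^k_{\mathcal{E}}$ is exhaustive only if the disjunction of some or all of $\mathcal{E}$ entails $\top$), and since $\rho_\cup$ never touches the prefix, $\rho^k_\cup(C)=\exists r.A$ for every $k$. In an interpretation $\mathcal{I}$ with $r^{\mathcal{I}}=\emptyset$ we have $(\exists r.A)^{\mathcal{I}}=\emptyset\neq\Delta^{\mathcal{I}}$, hence $\top\not\sqsubseteq\rho^k_\cup(C)$ for any $k$, so $\rho_\cup$ is not exhaustive; and the same example shows that even with a non-empty $\mathcal{E}$ the untouched existential prefix already blocks exhaustivity, which is the informal reason the proposition separates $\rho_\cup$ from $\rho_{\text{Dalal}}$ and $\rho_q$.
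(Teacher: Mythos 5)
Your argument is correct and, in substance, it is the argument the paper intends: the paper's own proof is a two-line deferral (``directly derived from the definitions and from properties of $\rho_p$ detailed in~\cite{BL02} and $\rho_{\mathcal{E}}$'', with $\rho_q$ delegated to~\cite{Gorogiannis2008a}), whereas you actually carry out the derivation via prefix monotonicity and $\forall r.C\sqsubseteq\exists r.C$, exactly the two lemmas the paper proves for $\kappa_q$. The genuinely valuable difference is that by working the details you expose a point the paper's citation-proof silently skips: since $\exists r.\top\not\equiv\top$ in $\ALC{}$, iterating $\rho_{\text{Dalal}}$ only drives the matrix to $\top$ and leaves $Q_1r_1\cdots Q_mr_m.\top$, which is not the top concept whenever the prefix contains an existential; likewise $\rho_q$ as literally written in Definition~\ref{def:relax_q} returns the empty disjunction (i.e.\ $\bot$) once no universal quantifier remains, breaking even extensivity in that degenerate case. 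So the proposition as stated needs the terminal base clauses you propose (collapse a purely existential prefix to $\top$, as in the {\bf FOL} relaxation of Section~\ref{revision in FOL} and the remark after Definition~\ref{def:retract_q}); with them your proof goes through, without them the claimed exhaustivity fails on, e.g., $C=\exists r.A$. Your counterexample for $\rho_\cup$ is also sharper than the paper's justification: it shows non-exhaustivity is forced by the untouched existential prefix alone, independently of whether the exception set could exhaust the matrix. In short: same route as the paper, but honestly completed, and the completion reveals that the statement is only true after a small repair to the definitions.
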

\begin{proof}
The properties of $\rho_{\text{Dalal}}$ and $ \rho_{\cup}$ are directly derived from the definitions and from properties of $\rho_p$ detailed in~\cite{BL02} and $\rho_{\mathcal{E}}$.  The proof of $\rho_q$ being  extensive and exhaustive can be found in~\cite{Gorogiannis2008a}.
\end{proof}

%

\section{Related work} 
\label{related works}

Recently a first generalization of AGM revision has been proposed in the framework of Tarskian logics considering minimality criteria on removed formulas~\cite{RW14} following previous works of the same authors for contraction~\cite{RWFA13}. Representation results that make a correspondence between a large family of logics containing non-classical logics such as {\bf DL} and {\bf HCL} and AGM postulates for revision with such minimality criteria have then been obtained. Here, the proposed generalization also gives similar representation theorems (cf. Theorem~\ref{correspondence agm and orders}) but for a different minimality criterion. Indeed, we showed in Section~\ref{Orders and AGM postulates} that revision operators satisfying Postulates (G1)-(G6) are precisely the ones that accomplish an update with minimal change to the set of models of knowledge bases, generalizing to any institution the approach developed in~\cite{KM91} for the logic {\bf PL} and \cite{QY08} for {\bf DL}. However, our revision operator based on relaxation also has a minimality criterion on transformed formulas. Indeed, a simple consequence of Definition~\ref{revision based in relaxation} is the property 

\begin{center}
{\bf (Relevance)} Let $T,T' \subseteq Sen$ be two knowledge bases such that $T \circ T' = \rho^\mathcal{K}(T) \cup T'$. Then, for every $\varphi \in T$ such that $k_\varphi \neq 0$, $\rho^{\mathcal{K}'}(T) \cup T'$ is inconsistent for $\mathcal{K}' = \mathcal{K} \setminus \{k_\varphi\} \cup \{k'_\varphi = 0\}$.  
\end{center} 

This property states that only formulas that contribute to inconsistencies with $T'$ are allowed to be transformed. Our property {\bf (Relevance)} is similar to the property with the same name in~\cite{RW14,RWFA13}, but for contraction operators, and that states that only the formulas that somehow ``contribute'' to derive the formulas to abandon can be removed.





\Jamal{Since the primary aim of this paper is to show that a more general framework, encompassing different logics, can be useful, it is out of the scope of this paper to provide an overview of all existing relaxation methods. However, some works deserve to be mentioned, since they are based on ideas that show some similarity with the relaxation notion proposed in our framework.}

\Jamal{The relaxation idea originates from the work on Morpho-Logics, initially introduced in~\cite{BL02,BPU04}. In this seminal work, revision operators (and explanatory relations) were defined through dilation and erosion operators. These operators share some similarities with relaxation and retraction as defined in this paper. Dilation is a sup-preserving operator and erosion is inf-preserving, hence both are increasing. Some particular dilations and erosions are exhaustive and extensive while relaxation and retraction operators are defined to be exhaustive and extensive but not necessarily sup- and inf-preserving. Dilation  has been further exploited for merging first-order theories~in~\cite{Gorogiannis2008a}.}

\Jamal{In~\cite{AGM85}, the notion of partial meet contraction is defined as the intersection of a non empty family of maximal subsets of the theory than do not imply the proposition to be eliminated. Revision is then defined from the Levi identity. The maximal subsets can also be selected according to some choice function. The authors also define a notion of partial meet revision, which can be seen as a special case of the relaxation operator introduced in this paper. 
In~\cite{hansson1996}, the author also discusses choice functions and compares the postulates for partial meet revision to the AGM postulates. He also highlights the distinction between belief sets (which can be very large) and belief bases (which are not necessarily closed by $Cn$). More precisely, $A$ is a belief base of a belief set $K$ iff $K = Cn(A)$. A permissive belief revision is defined in~\cite{cravo2001}, based on the notion of weakening. The beliefs which are suppressed by classical revision methods are replaced by weaker forms, which keep the resulting belief set consistent. This notion of weakening is closed to the one of relaxation developed in this paper.
In the last decade, several works have studied revision operators in description logics. While most of them concentrated on the adaptation of AGM theory, few works have addressed  the definition of concrete operators~\cite{meyer2005knowledge,qi2006revision,QLB06,qi2009}. For instance, in~\cite{meyer2005knowledge}, based on the seminal work in~\cite{benferhat2004}, revision in DL is studied by defining strategies to manage inconsistencies and using the notion of knowledge integration (see also the work by Hansson). The authors propose a conjunctive maxi-adjustment, for stratified knowledge bases and lexicographic entailment. In~\cite{qi2006revision}, weakening operators, that are in fact relaxation operators, are defined. Our work brings a principled formal flavor to these operators. In~\cite{qi2009}, revision of ontologies in DL is based on the notion of forgetting, which is also a way to manage inconsistencies. The authors propose a model based approach, inspired by Dalal's revision in {\bf PL}, and based on a distance between terminologies and on the difference set between two interpretation. The models of the revision $T \circ T'$ are then the interpretations $\mathcal{I}$ for which there exists an interpretation $\mathcal{I'}$ such that the cardinality of the difference set between $\mathcal{I}$ and $\mathcal{I'}$ is equal to the distance between $T$ and $T'$. In~\cite{liu2006}, updating Aboxes in DL is discussed, and some operators are introduced. The rationality of these operators is not discussed, hence the interest of a formal theory such as the AGM postulates.  In~\cite{autexier2013} an original use of  DL revision is introduced for  the orchestration of processes.  A closely related field is inconsistency handling in ontologies (e.g.~\cite{schlobach2003non,schlobach2007debugging}), with the main difference that the rationality of inconsistency  repairing operators is not investigated, as suggested by the AGM theory.}

\Jamal{As said before, some of our DL-based relaxation operators are closely related to the ones introduced in~\cite{QLB06} for knowledge bases revision. Our relaxation-based revision framework, being abstract enough (i.e. defined through easily satisfied properties), encompasses these operators.  Moreover,  the revision operator defined in~\cite{QLB06} considers only  inconsistencies due to Abox assertions. Our operators are general in the sense that Abox assertions are handled as any formula of the language.}


\section{Conclusion}

\Jamal{The contribution of this paper is threefold. First, we
provided a generalization of AGM postulates from a model-theoretic point of view, by defining this operator in an abstract model theory known under the
name of satisfaction systems, so as they become applicable to a wide class of non-classical logics. In this framework,
we then generalized to any satisfaction systems the characterization
of the AGM postulates given by Katsuno and Mendelzon
for propositional logic in terms of minimal change
with respect to an ordering among interpretations. This work generalizes the previous ones in the area. It also suggests the theory behind  satisfaction systems to be a principled framework for dealing with knowledge dynamics with the growing interest on non-classical logics such as DL.  We do hope that bridges can thus be built, by working at the cross-road of different areas of theoretical computer sciences.}

\Jamal{Secondly, we proposed a general framework for defining revision operators based on the notion of relaxation. We demonstrated that such a relaxation-based framework for belief revision satisfies the AGM postulates. As a byproduct, we give a principled formal flavor to several operators defined in the literature (e.g. weakening operators defined in DL).}

\Jamal{Thirdly, we introduced a bunch of concrete relaxations within the scope of description logics, discussed their properties and illustrated them through simple examples. It was out of the scope of this paper to discuss tools such as OWL. However, the proposed approach could be applied to SROIQ and implemented in OWL, by augmenting a relaxation  with operations on complex constructors.} 


\Jamal{Future works will concern the study  of  the complexity of the introduced operators,  the comparison of their induced ordering, and their  generalization to more expressive DL as well as other non-classical logics such as first-order Horn logics or equational logics.}






\bibliographystyle{elsart-num-sort}
\bibliography{biblio,DL_revision_MM}

\end{document}